\newtheorem{theorem}{Theorem}
\newtheorem{corollary}{Corollary}
\newtheorem{example}{Example}
\newtheorem{remark}{Remark}
\newtheorem{lemma}{Lemma}
\newcommand{\eoe}{ ~~~\mbox{$\Box$}}
\newcommand{\qed}{\hfill~}
\newcommand{\codetext}[1]{\mbox{\small \tt #1}}
\newcommand{\z}{\mathbb{Z}}
\newcommand{\BMV}{\mbox{$\mathcal{B}^{MV}$}}
\newcommand{\BMVi}{\mbox{$\mathcal{B}^{MV}_0$}}
\newcommand{\dom}{\mathrm{dom}}
\newcommand{\Eff}{\mathit{Eff}}
\newcommand{\op}{\mbox{\codetext{op}\hspace*{0.5ex}}}
\newcommand{\n}{\mathbf{N}}
\newcommand{\Clo}{{\tt Clo}}
\newcommand{\sito}{{\small\url{www.dimi.uniud.it/dovier/CLPASP}}}
\newcommand{\ine}{\mathsf{ine}}
\newcommand{\Sect}[1]{Section~\ref{#1}}
\newcommand{\Sects}[1]{Sections~\ref{#1}}
\newcommand{\Fig}[1]{Figure~\ref{#1}}
\newcommand{\Figs}[1]{Figures~\ref{#1}}
\newcommand{\interi}{\mathbb{Z}}
\newcommand{\codesize}{\small}
\newcommand{\FV}{\mathsf{fluents}}
\newcommand{\GFV}{\mathsf{\mbox{$\geqslant$-}fluents}}
\newcommand{\tab}{\hspace*{1.5em}}
\newcommand{\boxstandup}[1]{\begin{turn}{90}{\makebox(5,10){#1}}\end{turn}}
\newcommand{\boxstandupbis}[1]{\begin{turn}{90}{\makebox(35,10){#1}}\end{turn}}
\newenvironment{codice}[1]
{\begin{minipage}[t]{.99\textwidth}
 \tt \vspace*{-3ex}\scriptsize
 \setlength{\algrightmarginwidth}{1ex}\setlength{\algleftmarginwidth}{1ex}
 \begin{algtab}\algnonumber\setcounter{algline}{#1} \\
}
{\end{algtab}\end{minipage}}
\newenvironment{codicenonum}[1]
{\begin{minipage}[t]{.99\textwidth}
 \tt \vspace*{-3ex}\scriptsize
 \setlength{\algrightmarginwidth}{1ex}\setlength{\algleftmarginwidth}{1ex}
 \begin{algtab*}\algnonumber \\
}
{\end{algtab*}\end{minipage}}
\newcommand{\standup}[1]{\begin{turn}{90}{#1}\end{turn}}
\def\figurazero{
\psset{xunit=0.7}\psset{yunit=0.7}
\pspolygon[fillstyle=eofill,fillcolor=white,linewidth=1pt,linecolor=black](0,0)(8.0,0)(8.0,5.0)(0,5.0)
\pspolygon[fillstyle=eofill,linewidth=1pt,fillcolor=lightgray](4,1.8)(6,1.8)(6,2.8)(4,2.8)
\pscircle[fillstyle=hlines,linewidth=1.0pt](3,2.3){1}
\pscircle[fillstyle=vlines,linewidth=1.0pt](5,2.3){1}
\put(2.2,2.9){\mbox{\normalsize$\mathcal{F} \cup \neg \mathcal{F}$}}
\put(0.8,0.8){\mbox{\normalsize$S$}}
\put(4.5,0.8){\mbox{\normalsize$S'$}}
\put(3.3,1.5){\psframebox*{\mbox{\normalsize$E$}}}
\put(3.3,2.1){\psframebox*{\mbox{\normalsize$\bullet \ell$}}}
}
\def\figurauno{
\begin{psclip}{\pscircle[linestyle=none](5.0,2.5){1}}
\psframe[linestyle=none,fillstyle=solid,linecolor=lightgray,fillcolor=lightgray](2.0,0.0)(7.0,2.5)
\end{psclip}
\pscircle[linestyle=none](5.0,2.5){1}
\begin{psclip}{\pscircle[linestyle=none](3.8,2.5){1}}
\psframe[linestyle=none,fillstyle=solid,linecolor=lightgray,fillcolor=lightgray](2.0,2.5)(7.0,5)
\end{psclip}
\pscircle[linestyle=solid](5.0,2.5){1}
\pscircle[linestyle=solid](3.8,2.5){1}
\psline{-}(1.3,2.5)(7.5,2.5)
\psframe[framearc=.3](1.3,1.2)(7.5,3.8)
\put(2.9,3.3){\mbox{\small$v$}}
\put(5.7,3.3){\mbox{\small$v'$}}
\put(6.2,1.9){\mbox{\small$\mathcal{F}\setminus D$}}
\put(6.8,2.8){\mbox{\small$D$}}
\put(-0.7,3.4){\mbox{\small$\triangle(v,v',D)$\,:}}
}
\def\figuradue{
\begin{psclip}{\pscircle[linestyle=none](5.0,2.5){1}}
\psframe[linestyle=none,fillstyle=solid,linecolor=lightgray,fillcolor=lightgray](2.0,0.0)(7.0,3.0)
\end{psclip}
\pscircle[linestyle=none](5.0,2.5){1}
\begin{psclip}{\pscircle[linestyle=none](3.8,2.5){1}}
\psframe[linestyle=none,fillstyle=solid,linecolor=white,fillcolor=lightgray](2.0,3.0)(7.0,5)
\end{psclip}
\pscircle[linestyle=solid](3.8,2.5){1}
\pscircle[linestyle=solid](5.0,2.5){1}
\psline{-}(1.3,2.5)(1.6,2.5)
\psline{-}(2.5,2.5)(7.5,2.5)
\psline{-}(1.3,3.0)(6.8,3.0) \psline{-}(7.1,3.0)(7.5,3.0)
\psframe[framearc=.3](1.3,1.2)(7.5,3.8)
\put(2.9,3.3){\mbox{\small$v$}}
\put(5.7,3.3){\mbox{\small$v'$}}
\put(6.2,1.9){\mbox{\small$\mathcal{F}\setminus D$}}
\put(6.78,2.87){\mbox{\small$D$}}
\put(1.8,3.2){\mbox{\small$S$}}
\put(1.6,2.4){\mbox{\small$\mathcal{F}\setminus S$}}
\put(-0.7,3.4){\mbox{\small$\triangle(v,v',S)$}}
\put(-0.7,3.0){\mbox{\small$\mbox{for }S\subseteq D$\,:}}
}
\definecolor{rosa}{rgb}{1,0.4,0.7}
\definecolor{verde}{rgb}{0.5,1,0.5}
\def\figuratre{
\pscircle[linestyle=solid,fillstyle=solid,fillcolor=verde](1.5,4.5){1}
\begin{psclip}{\pscircle[linestyle=none,fillstyle=solid,fillcolor=verde](6.0,4.5){1}}
\pscircle[linestyle=solid,fillstyle=solid,fillcolor=rosa](2.8,15.8){11.5}
\pscircle[linestyle=solid,fillstyle=solid,fillcolor=white](10.0,-7.3){12}
\end{psclip}
\pscircle[linestyle=solid](6.0,4.5){1}
\begin{psclip}{\pscircle[linestyle=none,fillstyle=solid,fillcolor=verde](3.75,2.0){1}}
\pscircle[linestyle=solid,fillstyle=solid,fillcolor=rosa](0.55,13.3){11.5}
\pscircle[linestyle=solid](3.75,2.0){1}
\end{psclip}
\put(6.0,3.7){\mbox{$\mathcal{SL}$}}
\put(0.3,3.4){\mbox{\normalsize$u$}}
\put(7.0,3.4){\mbox{\normalsize$w$}}
\put(3.5,3.2){\mbox{\normalsize$v$}}
\put(5.3,5.0){\mbox{$E(a,u)$}}
\put(5.6,4.3){\mbox{Inertia}}
\put(3.1,2.5){\mbox{$E(a,u)$}}
\psline[linewidth=1.5pt,arrowsize=12pt,arrowinset=0.6,arrowlength=0.8]{->}(2.1,3.7)(3.05,2.75)
\psline[linewidth=1.5pt,arrowsize=12pt,arrowinset=0.6,arrowlength=0.8]{->}(4.4,2.76)(5.32,3.7)
\put(2.6,3.3){\mbox{\normalsize$a$}}
\put(4.8,2.9){\mbox{$\mathcal{SL}$}}
}
\title[Multi-valued Action Languages in CLP(FD)]
{Multi-valued Action Languages\\with Constraints in CLP(FD)\,\thanks{
This manuscript is an extended version of the paper
\emph{``Multi-valued Action Languages with Constraints in CLP(FD)''} in the
Proceedings of the International Conference on Logic Programming, pages 255--270,
Springer Verlag, 2007.}}
\author[A. Dovier, A. Formisano, E. Pontelli]
{AGOSTINO DOVIER\\
Universit{\`a} di Udine,\\
Dipartimento di Matematica e Informatica\\
{\email{dovier@dimi.uniud.it}}
\and
ANDREA FORMISANO\\
Universit{\`a} di Perugia,\\
Dipartimento di Matematica e Informatica\\
{\email{formis@dmi.unipg.it}}
\and
ENRICO PONTELLI\\
New Mexico State University,\\
 Department of Computer Science\\
{\email{epontell@cs.nmsu.edu}}
}
\begin{document}
\maketitle

\begin{abstract}
Action description languages, such as $\mathcal{A}$ and $\mathcal{B}$
\cite{GL98}, are expressive instruments introduced for
formalizing planning domains and planning problem instances.
The paper starts by proposing a methodology to encode
an action language (with conditional effects and static
causal laws), a slight variation of
$\mathcal{B}$, using
 \emph{Constraint Logic Programming over Finite Domains}.
The approach is then generalized to raise  the use of constraints
to the level of the
action language itself. A prototype implementation has been
developed, and the preliminary results are presented and discussed.
\\
To appear in \emph{Theory and Practice of Logic Programming}\, (TPLP).
\end{abstract}
\begin{keywords}
Action Description Languages, Knowledge Representation, Planning, Constraint Log\-ic Pro\-gramming
\end{keywords}

\section{Introduction}
The construction of intelligent agents that can be
effective  in real-world
environments has been a goal of  researchers from the very
first days of Artificial Intelligence.
It has long been recognized that an intelligent agent must be able to
\emph{acquire}, \emph{represent}, and \emph{reason} with knowledge. As such, a
{\em reasoning component} has been an inseparable part of most
agent architectures in the literature.

Although the underlying representations and implementations
may vary between agents, the reasoning component of an agent
is often responsible for making decisions that are critical to its
existence.

Logic programming languages offer many properties that make
them very suitable as knowledge representation languages. Their
declarative nature supports  the modular development of
provably correct reasoning modules~\cite{Baral}. Recursive
definitions can be easily expressed and reasoned upon. Control
knowledge and heuristic information can be declaratively
and incrementally introduced
in the reasoning process. Furthermore, many logic programming
languages offer a natural support for non-monotonic reasoning,
which is considered essential for common-sense reasoning~\cite{Lif99}.
These features, along
with the presence of  efficient
inference engines \cite{Apt2003,MS98,SimonsTesi,GLM04,claspcitaz}, make
logic programming an attractive paradigm for knowledge representation
and reasoning.

\medskip

In the context of knowledge representation and reasoning, a very
important application of logic programming has been in
the domain of \emph{reasoning about actions and change} and, more
specifically, \emph{planning}. Planning problems have been effectively
encoded using Answer Set Programming
(ASP)~\cite{Baral}---where distinct answer sets represent different
trajectories leading to the desired goal. Other logic programming
paradigms, e.g., \emph{Constraint Logic Programming over Finite
Domains (CLP(FD))}~\cite{CLP1,Apt2003}, have been used less frequently to handle problems in
reasoning about actions (e.g.,~\cite{rei01,thielscher}). Comparably more
emphasis has been placed in encoding planning problems as (non-logic
programming) constraint satisfaction problems~\cite{lopezbacchus}.

Recent proposals on representing and reasoning about actions and change
have relied on the use of concise and high-level languages, commonly
referred to as \emph{action description languages}; some well-known
examples include the languages $\mathcal{A}$ and  $\mathcal{B}$~\cite{GL98}
and extensions like $\mathcal{K}$~\cite{Eiter-K} and $\cal ADC$~\cite{BaralS02}.
Action languages allow one to write propositions that describe
the effects of actions on states, and to create queries to
infer properties of the underlying transition system.
An \emph{action domain description} is a specification of a
planning domain using an action language.

\medskip

The goal of this work is to explore the relevance of constraint
solving and constraint logic programming~\cite{MS98,Apt2003} in
dealing with action languages and planning. The push towards this
exploratory study came from  recent
investigations~\cite{DFPiclp05,JETAI08} aimed at  comparing the
practicality and efficiency of  answer set programming versus
constraint logic programming in solving various combinatorial and
optimization problems. The study indicated that CLP offers a valid
alternative, especially in terms of efficiency, to ASP when dealing
with planning problems. Furthermore, CLP offers the flexibility of
programmer-developed search strategies and the ability to handle
numerical constraints.

The first step, in this paper, is to illustrate a scheme that directly
processes an action description specification, in a
language similar to $\mathcal{B}$~\cite{GL98},
producing a CLP(FD) program that can be used to compute
solutions to the planning problem. Our encoding has some
similarities to the one presented by Lopez and Bacchus \cite{lopezbacchus}, although
we rely on constraint logic programming instead of
plain constraint satisfaction (CSP), and our action language supports
static causal laws and non-determinism---while the work of Lopez
and Bacchus is restricted to STRIPS-like specifications.

While the first step relies on using constraints to compute solutions to
a planning problem, the second step brings the expressive
power of constraints to the level of the action language, by
allowing multi-valued fluents and constraint-producing actions to be used
in the domain specification. The
extended action language (named $\BMV$) can be as easily supported by the
CLP(FD) framework, and it allows a declarative encoding of problems
involving actions with resources, delayed effects, and maintenance
goals. These ideas have been developed in a  prototype, and some
preliminary experiments are reported.

We believe that the use of CLP(FD) can greatly facilitate the transition
of declarative extensions of action languages to concrete and effective
implementations, overcoming some inherent limitations (e.g.,
efficiency and limited handling of numbers) of other logic-based systems
(e.g.,~ASP).

\bigskip

The presentation is organized as follows. The first part of our
paper (\Sects{Blanguage} and \ref{implemB}) provides an overview of the
action language $\mathcal{B}$ and illustrates our approach to modeling
problem specifications in $\mathcal{B}$ using constraints and constraint logic programming.
\Sect{motivat} provides motivations for the proposed multi-valued
extensions.
\Sect{BMVsyntax} introduces the full syntax of the new language \BMV.
The action language \BMV\ expands the previous language to a language with
constraints and multi-valued fluents,
that enables the use of dynamic and static causal laws (a.k.a. state constraints),
executability conditions, and non-Markovian forms of reasoning with arbitrary
relative or absolute references to past and future points in time.
The semantics and the abstract implementation of \BMV\ is incrementally developed
in \Sect{BMVsemantics}, where we first consider a sub-language
not involving non-Markovian references, and later we extend it to
the full \BMV.
A concrete implementation in CLP(FD) is described in \Sect{concreteImpleBMV}, and
an experimental evaluation is discussed in \Sect{sec:experimental}.
\Sect{sec:related} presents an overview of related efforts appeared in the literature,
while \Sect{sec:endofit} presents conclusions and the directions for future
investigation.

\section{The Action Language $\cal B$}\label{Blanguage}

\emph{``Action languages are formal models of parts of the natural language
that are used for talking about the effects of actions''}~\cite{GL98}.
Action languages are used to define \emph{action descriptions}
that embed knowledge to formalize planning problems.
In this section, we use the same  variant
of the language $\mathcal{B}$ used
in~\cite{SON01}---see also \Sect{sec:related} for a comparison.
With a slight abuse of notation, we simply refer to
this language as $\mathcal{B}$\/.

\subsection{Syntax of $\mathcal{B}$}\label{syntaxOfB}

An action signature consists of a set $\mathcal{F}$  of
\emph{fluent} names,
a set $\mathcal{A}$ of \emph{action} names,
and a set $\mathcal{V}$ of values for fluents in $\mathcal{F}$.
In this section, we consider Boolean fluents, hence
$\mathcal{V} = \{0,1\}$.\footnote{For simplicity, we use $0$ to
denote \emph{false} and $1$ to denote \emph{true}.
Consequently, we often say that a fluent is
true (resp., false)
if its value is $1$ (resp.,~$0$).}
A \emph{fluent literal} is either a fluent $f$ or its negation
$\codetext{neg}(f)$.
Fluents and actions are concretely represented by
\emph{ground} atomic formulae $p(t_1,\ldots,t_m)$
from an underlying logic language~$\mathcal{L}$.
For simplicity, we assume that the set of terms is finite---e.g., either
there are no function symbols in $\mathcal{L}$, or the use of functions
symbols is restricted, for instance by imposing a fixed maximal depth on the
nesting of terms, to avoid the creation of arbitrary complex terms.

The language $\mathcal{B}$ allows us to specify an
\emph{(action) domain description} $\mathcal{D}$. The core components of
a domain description are its \emph{fluents}---properties used to describe
the state of the world, that may dynamically change in response to execution
of actions---and \emph{actions}---denoting how an agent can affect the state
of the world.
Fluents and actions are introduced by assertions of the forms
\codetext{fluent($f$)} and \codetext{action($a$)}.
An action description $\mathcal{D}$ relates
actions, states, and fluents using  axioms of the following types
---where \codetext{[list-of-conditions]} denotes a list of fluent
literals:\footnote{We will sometimes write \texttt{true} as a synonymous for
the empty list of conditions.}
\begin{itemize}
\item
\codetext{causes($a$, $\ell$, [list-of-conditions])}: this axiom
 encodes
a dynamic causal law, describing the effect (i.e., truth assignment to
the fluent literal $\ell$)
of the execution of action $a$ in a state satisfying the
given conditions
\item
\codetext{caused([list-of-conditions],~$\ell$)}: this axiom
 describes
a static causal law---i.e., the fact that the fluent literal $\ell$
is true in any state satisfying the given preconditions.
\end{itemize}
Moreover, preconditions can be imposed on the executability of actions by means of
assertion of the forms:
\begin{itemize}
\item
\codetext{executable($a$, [list-of-conditions])}: this axiom
 asserts that, for the action $a$ to be executable, the given conditions have to be satisfied in the current state.
\end{itemize}

A \emph{domain description} is a set of
static causal laws, dynamic laws, and executability conditions.
A specific \emph{planning problem} $\langle\mathcal{D}, \mathcal{O}\rangle$ contains a domain
description $\mathcal{D}$ along with a set $\mathcal{O}$ of \emph{observations} describing
the \emph{initial state} and the \emph{desired goal}:
\begin{itemize}
\item
\codetext{initially($\ell$)}  asserts that the fluent
literal $\ell$ is true in the initial state
\item
\codetext{goal($\ell$)} asserts that the goal requires the
fluent literal $\ell$ to be true in the final state.
\end{itemize}

In the specification of an action theory,
we can take advantage of a Prolog-like syntax to express
in a more succinct manner the laws of the theory.
For instance, to assert that in the initial state all fluents are true,
we can simply write the following rule:
$$\codetext{initially(F) :- fluent(F)}.$$
instead of writing a fact \codetext{initially($f$)}
for each possible fluent~$f$. Remember that
the notation
$H \,:-\, B_1, \dots, B_k$
is a syntactic sugar for the logical formula
$$\forall X_1 \cdots X_n (B_1 \wedge \dots \wedge B_k \rightarrow H)$$
where $X_1,\dots,X_n$ are all the variables present in
$H, B_1, \dots, B_k$.

\begin{figure}[th]
\begin{center}
\begin{codicenonum}{0}
\\
\%\% Some Type Information\\
barrel(5). \\
barrel(7). \\
barrel(12).\\
liter(0).  \\
liter(1).  \\
~~\vdots ~~\\
liter(12).\\
\\
\%\% Identification of the fluents\\
fluent(cont(B,L)):- barrel(B), liter(L),  L $\leq$ B.\\
\\
\%\% Identification of the actions\\
action(fill(X,Y)):- barrel(X), barrel(Y), X $\neq$ Y.\\
\\
\%\% Dynamic causal laws\\
causes(fill(X,Y), cont(X,0), [cont(X,LX), cont(Y,LY)]) :-  \\
\tab        action(fill(X,Y)),
        fluent(cont(X,LX)),\\
\tab        fluent(cont(Y,LY)),
        Y-LY $\geq$ LX.\\
causes(fill(X,Y), cont(Y,LYnew), [cont(X,LX), cont(Y,LY)]) :-  \\
\tab        action(fill(X,Y)),
        fluent(cont(X,LX)),\\
\tab
        fluent(cont(Y,LY)),
        Y-LY $\geq$ LX,
        LYnew is LX+LY.\\
causes(fill(X,Y), cont(X,LXnew), [cont(X,LX), cont(Y,LY)]) :-  \\
\tab        action(fill(X,Y)),
        fluent(cont(X,LX)),\\
\tab
        fluent(cont(Y,LY)),
        LX $\geq$ Y-LY,
        LXnew is LX-Y+LY.\\
causes(fill(X,Y), cont(Y,Y), [cont(X,LX), cont(Y,LY)]) :-  \\
\tab        action(fill(X,Y)),
        fluent(cont(X,LX)),\\
\tab
        fluent(cont(Y,LY)),
        LX $\geq$ Y-LY.\\
\\
\%\% Executability conditions\\
executable(fill(X,Y), [cont(X,LX), cont(Y,LY)]) :-  \\
\tab        action(fill(X,Y)),
        fluent(cont(X,LX)),\\
\tab
        fluent(cont(Y,LY)),
        LX > 0,
        LY < Y.\\
\\
\%\% Static causal laws
caused([cont(X,LX)], neg(cont(X,LY))) :-  \\
\tab    fluent(cont(X,LX)),
        fluent(cont(X,LY)),\\
\tab
        barrel(X), liter(LX), liter(LY), LX$\neq$LY.\\
\\
\%\% Description of the initial and goal state\\
initially(cont(12,12)).\\
initially(cont(7,0)). \\
initially(cont(5,0)).\\
goal(cont(12,6)). \\
goal(cont(7,6)). \\
goal(cont(5,0)).
\end{codicenonum}
\end{center}
\caption{\label{Bool_Barrels}$\mathcal B$ description of the 12-7-5 barrels problem.}
\end{figure}

\begin{example}\label{exempiobarrelsB}
\Fig{Bool_Barrels} presents an encoding
of the three-barrel planning problem using the language~$\mathcal{B}$.
There are three barrels of capacity $N$ (an even number),
$N/2+1$, and $N/2-1$, respectively.
At the beginning, the largest barrel is full of wine while the other
two are empty.
We wish to reach a state in which the two larger barrels
contain the same amount of wine. The only permissible action is to
pour wine from one barrel to another,
until the latter is full or the former is empty.
\Fig{Bool_Barrels}
shows the encoding of the problem for $N=12$.
Notice that we also require that the smallest
barrel is empty at the end.
\eoe
\end{example}

\subsection{Semantics of $\mathcal{B}$}\label{semPerB}

If $f \in \mathcal{F}$ is a fluent, and $S$ is a set of fluent literals,
we say that $S \models f$ if and only if $f \in S$ and
$S \models \codetext{neg}(f)$ if and only if $\codetext{neg}(f) \in S$.
A list of literals
$L=[\ell_1,\ldots,\ell_m]$
denotes a conjunction of literals, hence
$S\models L$ if and only if
$S \models \ell_i$ for all $i \in \{1,\ldots,m\}$.
We denote with $\neg S$ the set
$\{f \in \mathcal{F}: \codetext{neg}(f) \in S\} \cup \{ \codetext{neg}(f) : f \in S \cap \mathcal{F}\}.$
A set of fluent literals is \emph{consistent} if there is no fluent
$f$ s.t.
$S \models f$ and $S \models \codetext{neg}(f)$.
If $S \cup \neg S \supseteq \mathcal{F}$ then $S$ is \emph{complete}.
A set $S$ of literals is \emph{closed} w.r.t. a set
of static laws
$\mathcal{SL} = \{\codetext{caused}(C_1,\ell_1), \ldots,
\codetext{caused}(C_m,\ell_m)\}$,
if for all $i \in \{1,\ldots,m\}$ it holds that $S \models C_i$ implies
$S \models \ell_i$.
The set $\Clo_{\mathcal{SL}}(S)$ is defined as the smallest set of
literals containing $S$ and closed w.r.t. $\mathcal{SL}$.
$\Clo_{\mathcal{SL}}(S)$ is uniquely determined
and
not necessarily consistent.

The semantics of an action language
on the action signature
$\langle \mathcal{V} ,\mathcal{F} ,\mathcal{A} \rangle$
is given in terms of a transition system $\langle \mathcal{S}, \nu, R \rangle$
\cite{GL98}, consisting of
a set $\mathcal{S}$ of states,
a total interpretation  function $\nu:\mathcal{F}\times \mathcal{S}
\rightarrow \mathcal{V}$
(in this section $\mathcal{V} = \{0,1\}$), and
 a transition relation $R \subseteq \mathcal{S} \times \mathcal{A} \times
\mathcal{S}$.

Given a transition system $\langle \mathcal{S}, \nu, R \rangle$ and a state $s
\in \mathcal{S}$,
let:
  $$\begin{array}{rcl}
  Lit(s) & = & \{ f \in \mathcal{F} \,:\,
                  \nu(f,s) = 1 \} \cup
                  \{ \codetext{neg}(f) \,:\, f \in \mathcal{F}, \, \nu(f,s) = 0\}.
  \end{array}$$
Observe that $Lit(s)$ is consistent and complete.

Given a set of dynamic laws
$\{\codetext{causes}(a, \ell_1, C_1)$, $\ldots$, $\codetext{causes}(a, \ell_m, C_m)\}$
for the action $a \in \mathcal{A}$ and a state $s \in \mathcal{S}$,
we define the \emph{(direct) effects of $a$ in $s$} as follows:
$$E(a,s)  =  \{ \ell_i : 1 \leqslant i \leqslant m, Lit(s) \models C_i \}.$$

\noindent The action $a$ is said to be \emph{executable}
in a state $s$ if it holds that
\begin{equation}\label{execCond}
Lit(s)\models\bigvee_{i=1}^{h} C_i,
\end{equation}
where
$\codetext{executable}(a, C_1)$, $\ldots$, $\codetext{executable}(a, C_h)$
for $h>0$,
are the executability axioms for the action $a$ in $\mathcal{D}$.
Observe that multiple executability axioms for the same action $a$ are
considered disjunctively.
Hence, for each action $a$, at least one executable axiom must be present in the
action description.\footnote{Observe that even if an action is
``executable'', its execution may lead to
an inconsistent state (which effectively prevents the use of
such action in that context). Even though ``enabled'' would be a better term
to use for an action that can be executed in a state, we prefer
to maintain the same terminology as used for $\mathcal{B}$ in
\cite{SON01}---see also Remark~\ref{execVSnonexec}.}

Let $\mathcal{D}$ be an action description defined on the action signature
$\langle \mathcal{V} ,\mathcal{F} ,\mathcal{A} \rangle$,
composed of dynamic laws $\mathcal{DL}$,
executability conditions $\mathcal{EL}$,
and static causal laws $\mathcal{SL}$.

The transition system $\langle \mathcal{S}, \nu, R \rangle$ \emph{described by}
$\mathcal{D}$ is
a transition system such that:
\begin{itemize}
\item
$\mathcal{S}$ is the set of all states $s$ such that
 $Lit(s)$ is closed w.r.t. $\mathcal{SL}$;
\item
$R$ is the set of all triples $\langle s,a,s'\rangle$ such that
$a$ is executable in $s$ and
\begin{eqnarray}
Lit(s') &=&\Clo_{\mathcal{SL}}(E(a,s) \cup (Lit(s) \cap Lit(s')))
\label{eqbool}
\end{eqnarray}
\end{itemize}

Let  $\langle \mathcal{D}, \mathcal{O}\rangle$ be a planning problem instance,
where $\{\ell\:|\: \codetext{initially}(\ell) \in  \mathcal{O}\}$ is
a consistent and complete set of fluent literals.
A \emph{trajectory} in $\langle \mathcal{S}, \nu, R \rangle$ is a sequence
$$\langle s_0, a_1, s_1, a_2, \cdots, a_{\n}, s_{\n}\rangle$$
such that
$\langle s_{i},a_{i+1},s_{i+1}\rangle \in R$
for all $i \in \{0,\ldots,\n-1\}$.

A sequence of actions
$\langle a_1,\ldots, a_{\n}\rangle$ is a solution (a \emph{plan})
to the planning problem $\langle \mathcal{D}, \mathcal{O}\rangle$
if there is
a trajectory $\langle s_0, a_1, s_1, \ldots, a_{\n}, s_{\n}\rangle$ in
$\langle \mathcal{S}, \nu, R\rangle$
such that:
\begin{itemize}
\item $Lit(s_0) \models r$ for each $\codetext{initially}(r) \in \mathcal{O}$,
and
\item $Lit(s_{\n}) \models \ell$ for each $\codetext{goal}(\ell)\in \mathcal{O}$.
\end{itemize}
The plans characterized in this definition are \emph{sequential}---i.e.,
we disallow concurrent actions. Observe also that the desired plan
length $\n$ is assumed to be given.

\begin{remark}\label{sequentiality}
In this paper we focus on sequential plans only. Hence, we assume that
only one action is executed in each state transition composing a given trajectory.

Note that the constraint-based encoding we will propose in the rest of this
manuscript can be easily adapted to  deal with concurrent actions.
Nevertheless, we have opted to ignore this aspect in this manuscript, to avoid
further complications of notation, and dealing with issues
of concurrency goes beyond the scope of this paper.
The interested reader is referred to~\cite{DFPlpnmr09} for some further considerations
on this matter.
\end{remark}

\begin{remark}\label{execVSnonexec}
Notice that the satisfaction of~(\ref{execCond})
is just a necessary requirement for the executability of an action
and it might not represent a sufficient precondition.
Indeed, as far as the definition of transition system is considered,
it is easy to see that, even if~(\ref{execCond}) is satisfied for certain $a$ and $s$,
the execution of $a$ in $s$ might be inhibited because of the contradictory effects
of the causal laws.
A simple example is represented by the following action
description~$\mathcal{D}$:
\begin{center}
\begin{tabular}{l}
\codetext{executable($a$,[]).}\\
\codetext{causes($a$,$f$,[]).}\\
\codetext{causes($a$,neg($f$),[]).}
\end{tabular}
\end{center}
The action  $a$ is always executable (according to its
executability law), but
the execution of $a$ would yield an inconsistent situation.
Indeed, the execution of $a$ does not
correspond to any state transition in the transition system described
by~$\mathcal{D}$.

The above example also suggests a possible extension of the action description language
that involves laws of the form
$$\codetext{nonexecutable($a$, $D$)}.$$
\noindent
The semantics for
such an extended action language can be defined by
replacing the condition~(\ref{execCond}),
with the following one:
$$
Lit(s)\models\bigvee_{i=1}^{h} C_i \wedge \neg\bigvee_{j=1}^{k} D_j,
$$
where
\codetext{executable($a$, $C_1$)}, $\ldots$, \codetext{executable($a$, $C_h$)}
and
\codetext{nonexecutable($a$, $D_1$)}, $\ldots$, \codetext{nonexecutable($a$, $D_k$)},
for $h>0$ and $k\geqslant 0$,
are defined for the action $a$.
Thus, the action $a$ is executable only if at least one of the $C_i$s is satisfied
and all $D_j$s are unsatisfied in the state $s$.

An alternative interpretation of the \codetext{nonexecutable}
axioms can be adopted. Namely, the law \codetext{nonexecutable($a$, $D$)}
can be considered simply as shorthand
for the pair of dynamic causal laws \codetext{causes($a$, $f$, $D$)} and
\codetext{causes($a$, \codetext{neg}($f$), $D$)}.
(Actually, this possibility also applies to the languages proposed in~\cite{GL98}).

This shows that (non)executability laws do not increase the
expressive power of the action language.
Nevertheless, the availability of both types of laws
permits the direct and explicit formalization of
preconditions for actions execution.
\end{remark}

\section{Modeling $\mathcal{B}$ and Planning Problems Using Constraints}\label{implemB}

Let us describe  how action descriptions
are mapped to finite domain constraints.
We will focus on how constraints can be used
to model the possible transitions from each individual state
of the transition system.

\newcommand{\puntinata}[4]{\dottedline{3}({#1},{#2})({#3},{#4})}
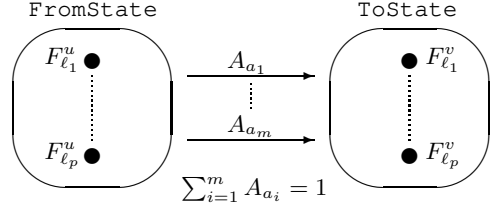
\begin{figure}
\begin{center}
\setlength{\unitlength}{0.6pt}
\begin{picture}(400,150)(0,35)
\thinlines
\put(60,158){\codetext{FromState}}
\put(100,100){\oval(100,100)}
\put(100,130){\circle*{10}}
\put(70,128){$F_{\ell_{1}}^u$}
\put(100,70){\circle*{10}}
\put(70,68){$F_{\ell_{p}}^u$}
\puntinata{100}{80}{100}{120}
\put(267,158){\codetext{ToState}}
\put(300,100){\oval(100,100)}
\put(300,130){\circle*{10}}
\put(310,128){$F_{\ell_{1}}^v$}
\put(300,70){\circle*{10}}
\put(310,68){$F_{\ell_{p}}^v$}
\puntinata{300}{80}{300}{120}
\put(160,120){\vector(1,0){80}}
\put(185,125){$A_{a_1}$}
\puntinata{200}{100}{200}{115}
\put(160,80){\vector(1,0){80}}
\put(185,85){$A_{a_m}$}
\put(156,45){$\sum_{i=1}^m A_{a_i} = 1$}
\end{picture}
\end{center}
\caption{\label{act_fig}Action constraints from state to state. (The states are described by $p$ fluents,
$\ell_{1},\ldots,\ell_{p}$, and one among $m$ possible actions is executed.)}
\end{figure}

\subsection{Modeling an Action Theory as Constraints}\label{modelingB}

Let us consider a domain description $\mathcal{D}$ and the  state transition
system described by $\mathcal{D}$.
Let us also denote with  $u$ and $v$ the starting and ending states of an
arbitrary
transition of such a system. We assert constraints that relate the
truth value of fluents in $u$ and $v$. This is intuitively
illustrated in \Fig{act_fig},
where $u=\codetext{FromState}$ and $v=\codetext{ToState}$.\footnote{For the sake
of readability, the two variables named \texttt{FromState} and \texttt{ToState}
are also used in the concrete implementation of $\mathcal{B}$
(cf., \Sect{CLPMappingB} and \Fig{setonefluent}).}

A Boolean variable is introduced to describe the
truth value of each fluent literal in a state.
The value of a fluent literal $\ell$  in $u$
is represented by the variable $F_{\ell}^u$;
analogously, its value in the destination
state $v$ is represented by the variable
$F_{\ell}^v$. For the sake of simplicity, we will
freely refer to these variables as Boolean entities---and
compose them with logical connectives to form Boolean
expressions---as well as 0/1 variables---and compose
them with arithmetic operators. Concrete CLP(FD) systems,
e.g., SICStus, ECLiPSe, and BProlog,\footnote{Web sites for some CLP(FD) systems. SICStus:~{\url{www.sics.se/sicstus.html}},
ECLiPSe:~{\url{http://87.230.22.228/}},
BProlog:~{\url{http://www.probp.com/}}}
enable this type of alternative
perspectives, providing basic primitive constraints (e.g.,
\verb"#=" and \verb"#>") and Boolean compositions of constraints.

Given a conjunction of literals $\alpha = [\ell_1, \ldots, \ell_m]$
we will denote with ${\alpha}^u$ the expression
$F_{\ell_1}^u \wedge \ldots \wedge F_{\ell_m}^u$.
We will also introduce, for each action $a$, a Boolean
variable $A_a^u$, representing whether the action
is executed or not in the transition from
$u$ to $v$ under consideration.

Given a specific fluent literal $\ell$, we develop constraints
that determine when $F_{\ell}^v$ is true and  false.
Let us consider the dynamic
causal laws that have $\ell$ as a consequence:
\begin{center}
\begin{tabular}{rcccl}
$\codetext{causes}(a_{i_{\ell,1}}, \ell, \alpha_{{\ell,1}})$
& &
$\cdots$ &  &
$\codetext{causes}(a_{i_{\ell,m_{\ell}}}, \ell, \alpha_{{\ell,m_{\ell}}})$
\end{tabular}
\end{center}
Let us also consider the static causal laws related to ${\ell}$
\begin{center}
\begin{tabular}{rcccl}
$\codetext{caused}(\gamma_{{\ell,1}},\ell)$
&  &
$\cdots$ &
 &
$\codetext{caused}(\gamma_{{\ell,h_{\ell}}},\ell)$
\end{tabular}
\end{center}
Finally, for each action $a$ we will have its executability
conditions:
\begin{center}
\begin{tabular}{rcccl}
$\codetext{executable}(a, \delta_{a,1})$ &
 & $\cdots$ & &
$\codetext{executable}(a, \delta_{a,p_{a}})$
\end{tabular}
\end{center}

\begin{figure}
\begin{center}
{\small\fbox{\begin{minipage}[c]{.87\textwidth}
\begin{eqnarray}
\codetext{Dyn}_{\ell}^u & \leftrightarrow &
\bigvee_{j=1}^{m_{\ell}}  (\alpha_{{\ell,j}}^u \wedge A_{a_{i_{\ell,j}}}^u) \label{cieffe2}\\
\codetext{Stat}_{\ell}^v & \leftrightarrow &  \bigvee_{j=1}^{h_{\ell}} \gamma_{{\ell,j}}^v \label{cieffe3}\\
\codetext{Fired}_{\ell}^{u,v} & \leftrightarrow & \codetext{Dyn}_{\ell}^u \vee
\codetext{Stat}_{\ell}^v \label{cieffe4}\\
&& \neg \codetext{Fired}_{\ell}^{u,v} \vee \neg \codetext{Fired}_{\bar{\ell}}^{u,v}
\label{per}\label{cieffe5}\\
F_{\ell}^v & \leftrightarrow & \codetext{Fired}_{\ell}^{u,v} \vee (\neg
\codetext{Fired}_{\bar{\ell}}^{u,v}
\wedge F_{\ell}^u) \label{fondamentale}\label{cieffe6}
\end{eqnarray}
\end{minipage}}}
\end{center}
\caption{\label{cieffe}The constraint $C_{\ell}^{u,v}$ for the fluent literal $\ell$ (cf.,~\Sect{modelingB}).}
\end{figure}

\noindent
\Fig{cieffe} describes the Boolean constraints
that can be used in encoding the relations that determine
the truth value of the fluent literal ${\ell}$. In the table,
we denote with $\bar{\ell}$ the complement of literal $\ell$, i.e., if $\ell$ is
the fluent $f$, then $\bar{\ell}$ is $\codetext{neg}(f)$, while if $\ell$ is the literal
$\codetext{neg}(f)$ then $\bar{\ell}$ is the fluent $f$. The intuitive meaning
of the constraints is as follows:
\begin{description}
\item[(\ref{cieffe2})] This constraint states that  dynamic
        causal laws making
        $\ell$ true  can fire if
        their conditions are satisfied and the
        corresponding actions are chosen for execution.
\item[(\ref{cieffe3})] This constraint captures the fact that
        at least one of the
        static causal laws  that make $f$ true  is applicable.
\item[(\ref{cieffe4})] This constraint
        expresses the fact
        that a fluent literal $\ell$
        can be made true during a transition form state $u$ to state $v$, either by
a dynamic causal law (determined by $\codetext{Dyn}^u_{\ell}$)
        or
a static causal law (determined by $\codetext{Stat}^v_{\ell}$).
\item[(\ref{cieffe5})] This constraint is used to guarantee consistency
        of the action theory---in no situations a fluent and its complement
        are both made true.
\item[(\ref{cieffe6})] This constraint expresses the fact that a fluent
        literal  $\ell$
        is true in the destination state if and only if it is made true (by a static
        or a dynamic causal law) or if is true in the initial state and its
        truth value is not modified by the transition (i.e., inertia).
        Observe
        the similarity between this constraint and the successor state
        axiom commonly encountered in situation calculus~\cite{sitcalc}.
\end{description}
We will denote
with $C_{\ell}^{u,v}$ the conjunction of such constraints.

Given an action domain specification over the signature
$\langle {\cal V}, {\cal F}, {\cal A}\rangle$ and two states $u$ and $v$,
we introduce the system of constraints $C_{\mathcal{F}}^{u,v}$ which includes:
\begin{itemize}
\item for each fluent literal
        $\ell$ in the language of $\mathcal{F}$, the constraints $C_{\ell}^{u,v}$.
\item the constraint
\begin{eqnarray}
\label{unica_azione}
\sum_{a \in \mathcal{A}} A_a^u= 1
\end{eqnarray}
\item for each action $a \in \mathcal{A}$, the
constraints
\begin{equation}\label{execaction}
A_a^u \rightarrow \bigvee_{j=1}^{p_{a}} \delta_{a,j}^u.
\end{equation}
\end{itemize}
Notice that the sequentiality of the plan if imposed through the constraint~(\ref{unica_azione}),
while constraint~(\ref{execaction}) reflects actions' executability conditions.

\subsection{Soundness and Completeness Results}\label{SoundCompleOfB}
Let us proceed with the soundness and completeness proofs of the constraint-based encoding.
Consider a state transition  from the state $u$ to the state $v$
and the corresponding constraint $C_f^{u,v}$ described earlier.

Let $S=Lit(u)$ and $S'=Lit(v)$ be the sets
of fluent literals that hold in $u$ and $v$, respectively.
Note that, from any specific $S$
(resp., $S'$), we can obtain a consistent assignment $\sigma_S$
(resp., $\sigma_{S'}$) of truth values for all the variables
$F_f^u$ (resp., $F_f^{v}$) of $u$
(resp., $v$). Conversely, each truth assignment $\sigma_S$
(resp., $\sigma_{S'}$) for all variables $F_f^u$ (resp.,
$F_f^{v}$)  corresponds to a
 consistent and complete set of fluents $S$ (resp., $S'$).

Regarding  the occurrence of actions,
recall that in each state transition a single action $a$ occurs
and its occurrence is encoded by a specific Boolean variable,~$A_a^u$.
Let $\sigma_a$ denote the assignment of truth values for such variables
such that $\sigma_a(A_a^u)=1$ if and only if
$a$ occurs in the state transition from $u$ to $v$.\footnote{We will
use mapping applications either as $\sigma(X)$ or in postfix
notation as $X\sigma$.}
Note that the domains of $\sigma_S$, $\sigma_{S'}$, and $\sigma_a$
are disjoint, so we can safely
denote with $\sigma_S\circ\sigma_{S'}\circ\sigma_a$ the composition of the three assignments.
With a slight abuse of notation, in what follows we will denote with $E$ the
direct effects $E(a,u)$ of an action $a$ in $u$. Observe that $E \subseteq S'$.

Theorem~\ref{Completeness} states the completeness of the system of constrains
introduced in \Sect{modelingB}.
It asserts that for any given  $\mathcal{D} = \langle \mathcal{DL},\mathcal{EL},\mathcal{SL}\rangle$,
if a triple $\langle u , a, v \rangle$ belongs to the transition system described by $\mathcal{D}$,
then the assignment $\sigma = \sigma_S\circ\sigma_{S'}\circ\sigma_a$ satisfies the
constraint $C_{\mathcal{F}}^{u,v}$.

\begin{theorem}[Completeness]\label{Completeness}
Let $\mathcal{D} = \langle \mathcal{DL},\mathcal{EL},\mathcal{SL}\rangle$.
If $\langle u , a, v \rangle$ belongs to the transition system described by $\mathcal{D}$,
then $\sigma_{S}\circ\sigma_{S'}\circ\sigma_a$
is a solution of the constraint $C_{\mathcal{F}}^{u,v}$.
\end{theorem}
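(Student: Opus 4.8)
### Proof Plan

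The plan is to verify directly that the composite assignment $\sigma = \sigma_S\circ\sigma_{S'}\circ\sigma_a$ satisfies each of the constraints making up $C_{\mathcal{F}}^{u,v}$: the five families~(\ref{cieffe2})--(\ref{cieffe6}) comprising each $C_{\ell}^{u,v}$, the single-action constraint~(\ref{unica_azione}), and the executability constraints~(\ref{execaction}). Since $\langle u,a,v\rangle \in R$, we know by the semantics of $\mathcal{B}$ that $a$ is executable in $u$ (so~(\ref{execCond}) holds, i.e. $Lit(s)=S\models\bigvee_{j}\delta_{a,j}$), that $a$ is the unique action occurring in the transition (so $\sum_{b\in\mathcal{A}}A_b^u\sigma = 1$, giving~(\ref{unica_azione}) immediately, and~(\ref{execaction}) from executability since $A_b^u\sigma = 0$ for $b\neq a$), and crucially that equation~(\ref{eqbool}) holds: $S' = \Clo_{\mathcal{SL}}(E(a,u)\cup(S\cap S'))$, where $E = E(a,u)$.

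First I would unwind the definitions of the auxiliary variables under $\sigma$. For a fixed literal $\ell$, by the definitions of $\sigma_S$, $\sigma_{S'}$, $\sigma_a$ and the way $\alpha^u$ abbreviates $\bigwedge F_{\ell_i}^u$, one checks that $\codetext{Dyn}_{\ell}^u\sigma = 1$ iff some dynamic law $\codetext{causes}(a,\ell,\alpha_{\ell,j})$ has $S\models\alpha_{\ell,j}$ (using that $A_{a_{i_{\ell,j}}}^u\sigma = 1$ only when $a_{i_{\ell,j}} = a$), which by the definition of $E(a,u)$ is exactly $\ell\in E$. Likewise $\codetext{Stat}_{\ell}^v\sigma = 1$ iff some static law $\codetext{caused}(\gamma_{\ell,j},\ell)$ has $S'\models\gamma_{\ell,j}$. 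So~(\ref{cieffe2}) and~(\ref{cieffe3}) are satisfied essentially by definition, and then $\codetext{Fired}_{\ell}^{u,v}\sigma = 1$ iff $\ell\in E$ or $\ell$ is ``produced'' by a static law firing in $S'$ — call this condition $(\star_\ell)$. Constraint~(\ref{cieffe4}) then holds by construction.

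For~(\ref{cieffe5}), suppose toward a contradiction that both $\codetext{Fired}_{\ell}^{u,v}\sigma$ and $\codetext{Fired}_{\bar\ell}^{u,v}\sigma$ equal $1$. Each of $\ell,\bar\ell$ is then either in $E$ or forced by a static law with body satisfied by $S'$; in all four cases one deduces $\ell\in\Clo_{\mathcal{SL}}(E\cup(S\cap S'))$ and $\bar\ell\in\Clo_{\mathcal{SL}}(E\cup(S\cap S'))$ (directly when the literal is in $E$, and because $S'$ is closed w.r.t.\ $\mathcal{SL}$ and equals that closure when it comes from a static law), so $S'$ is inconsistent — but $S' = Lit(v)$ is consistent, contradiction. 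Finally, the heart of the argument is~(\ref{cieffe6}), which must be checked from $S' = \Clo_{\mathcal{SL}}(E\cup(S\cap S'))$. The left-to-right direction: if $\ell\in S'$, then since $S'$ is a complete, consistent set closed under $\mathcal{SL}$ and is the least such set containing $E\cup(S\cap S')$, $\ell$ is either in $E\cup(S\cap S')$ or added by closure under a static law; unravelling the inductive construction of $\Clo$, one shows $\ell\in E$, or $\ell\in S\cap S'$ with $\bar\ell\notin$ (anything fired, using~(\ref{cieffe5})-consistency), or $\ell$ is a static-law consequence — matching the right-hand disjunction $\codetext{Fired}_{\ell}^{u,v}\vee(\neg\codetext{Fired}_{\bar\ell}^{u,v}\wedge F_{\ell}^u)$. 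The right-to-left direction: if $\ell\in E$ then $\ell\in S'$ since $S'\supseteq E$; if $\ell$ is a static-law consequence ($S'\models\gamma_{\ell,j}$), then $\ell\in S'$ by closure; if $\ell\in S$ and $\bar\ell$ is not fired, one argues that inertia keeps $\ell$ in $S'$ — formally, that $S' = \Clo_{\mathcal{SL}}(E\cup(S\cap S'))$ together with completeness forces $\ell\in S'$, since otherwise $\bar\ell\in S'$ (completeness) would have to come from $E$ or a static law, i.e.\ $\bar\ell$ would be fired, contradiction.

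The main obstacle, I expect, is the reasoning around the fixpoint equation~(\ref{eqbool}): the set $S'$ appears on both sides, and $S\cap S'$ (the inertial part) is defined only once $S'$ is known. So the delicate points are (i) showing the left-to-right direction of~(\ref{cieffe6}) without circularity — one must use that $S'$ is the \emph{least} closed superset of $E\cup(S\cap S')$ and do induction on the $\Clo$ construction, being careful that every literal in $S'$ traces back either to $E$, to inertia, or to a chain of static-law firings all of whose bodies are already in $S'$ — and (ii) handling the interaction between completeness/consistency of $S'$ and the ``at most one of $\ell,\bar\ell$ fired'' constraint, which is what lets inertia and the fired-disjunction combine cleanly. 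The executability and single-action constraints, by contrast, are immediate from the definition of $R$ and of $\sigma_a$.
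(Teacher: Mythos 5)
Your proposal is correct and follows essentially the same route as the paper's proof: immediate verification of the action-occurrence and executability constraints, the observation that the auxiliary variables $\codetext{Dyn}$, $\codetext{Stat}$, $\codetext{Fired}$ are determined by $S$, $S'$, $a$, a four-case contradiction argument for~(\ref{cieffe5}) via inconsistency of $S'$, and a three-case analysis ($\ell\in E$, inertia, static closure) for~(\ref{cieffe6}) with the negative-literal case handled symmetrically. The only point where you anticipate more work than is needed is the induction on the $\Clo$ construction: for this (completeness) direction the simple fact that any literal in $\Clo_{\mathcal{SL}}(X)\setminus X$ is the head of some static law whose body holds in $S'$ suffices, and the genuine induction is deferred to the soundness theorem.
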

\begin{proof}
In constraints~(\ref{cieffe2})--(\ref{cieffe6}) of
\Fig{cieffe} and
(\ref{unica_azione})--(\ref{execaction}) defined at the end of Subsection \ref{modelingB},
a number of auxiliary constraint variables
are defined, whose values are uniquely determined once the values of the fluents are
assessed. In other words, when $S$, $S'$, and $a$ are fixed, the substitution
$\sigma_S\circ\sigma_{S'}\circ\sigma_a$ uniquely determines the value of the
right-hand sides of the constraints $(\ref{cieffe2})$--$(\ref{cieffe4})$.
To prove the theorem, we need to verify that
if $S' = \Clo_{\mathcal{SL}}(E \cup (S \cap S'))$, then the
constraints $(\ref{cieffe5})$ and $(\ref{cieffe6})$ along with the constraints about the
action variables $A_a^u$ (i.e., constraints of the form~(\ref{unica_azione})
and~(\ref{execaction})) are satisfied for every fluent $f$.

\medskip

Let us observe that (\ref{unica_azione}) is equivalent to say that
if $A_a$ is true ($A_a=1$) then $A_b$ is false for all $b \neq a$.
Moreover, it also states that if all $A_b$ for $b \neq a$ are false
then $A_a$ is true.
Namely, (\ref{unica_azione}) is equivalent to the conjunction,
for $a \in \mathcal{A}$ of:
$$A_a \leftrightarrow \bigwedge_{b \in \mathcal{A} \setminus \{b\} } \neg A_b$$

Let us start by looking at the action occurrence. Let $a$ be the action executed
in state $u$, thus $\sigma_a = \{A_a^u/1\}\cup \{A_b^u/0\:|\: b\neq a\}$.
Hence, (\ref{unica_azione}) is satisfied by $\sigma_a$.

Similarly, since the semantics require that actions are executed only
if the executability conditions are satisfied, it holds that $S\models \delta_{a,h}$
(for at least one $h\in\{1,\ldots,p_{a}\}$, corresponding to a condition
$\codetext{executable}(a, \delta_{{a},h})$ in $\mathcal{SL}$).
This quickly leads to $ \bigvee_{j=1}^{p_{a}}\delta_{a,j}^u$  is true, and this
allows us to conclude that (\ref{execaction}) is satisfied by $\sigma_S\circ\sigma_a$.

\medskip

Let us now consider the constraints dealing with fluents.
We recall that
$S'$ is a set of fluent literals that is
consistent, complete, and closed w.r.t. $\mathcal{SL}$.
Let us consider a fluent $f$ and let us
prove that constraint $(\ref{cieffe5})$ of \Fig{cieffe} is satisfied.
Assume, by contradiction, that $\codetext{Fired}_f^{u,v}\sigma$ and
$\codetext{Fired}_{\texttt{neg}(f)}^{u,v}\sigma$
are both true. Four cases must be considered:
\begin{enumerate}
\item $\codetext{Dyn}_f^u\sigma$ and $\codetext{Dyn}_{\texttt{neg}(f)}^u\sigma$ are true.
Since these values are determined by $u,a,v$, this means that
both $f$ and $\codetext{neg}(f)$ belong to $E(a,u)$. Since the closure under $\mathcal{SL}$
is monotonic
this means that $Lit(v)=S'$ is inconsistent, representing a contradiction.

\item $\codetext{Dyn}_f^u\sigma$ and $\codetext{Stat}_{\texttt{neg}(f)}^{v}\sigma$ are
true. This means that
$f$ is in $E(a,u)$ and $\codetext{neg}(f)$ is added to $S'$ by the closure operation.
This implies that  $S'$ is inconsistent, which represents a contradiction.

\item $\codetext{Stat}_f^{v}\sigma$ and $\codetext{Dyn}_{\texttt{neg}(f)}^u\sigma$ are
true. This leads
 a contradiction as in the previous case.

\item $\codetext{Stat}_f^{v}\sigma$ and $\codetext{Stat}_{\texttt{neg}(f)}^{v}\sigma$ are true.
This means that $f$ and $\codetext{neg}(f)$ are added to  $S'$
by the closure operation. Thus, $S'$ is inconsistent, which is a contradiction.
\end{enumerate}
It remains to prove that constraint $(\ref{cieffe6})$ is satisfied by $\sigma$.
Let us assume that $f \in S'$. Thus, $F_f^{v}\sigma_{S'}$ is true.
Three cases must be considered.

\begin{enumerate}
\item $f \in E(a,u)$. This means that there is a dynamic causal law
$\codetext{causes}(a,f,\alpha_{f,i})$
where $S \models \alpha_{f,i}$. {F}rom the definition, this leads
to $\alpha_{f,i}^u\sigma$ being true and $\sigma_a(A_a^u)=1$.
Thus, constraints $(\ref{cieffe2})$ and $(\ref{cieffe4})$ set
$\codetext{Dyn}_f^u\sigma$ and
$\codetext{Fired}_f^{u,v}\sigma$ both true.
As a consequence, constraint $(\ref{cieffe6})$ is satisfied.

\item $f \notin E(a,u)$ and $f \in S$.
This means that $f \in S \cap S'$.
In this case $\codetext{Fired}_{\texttt{neg}(f)}^{u,v}\sigma$ must be false,
 otherwise $S'$ would be inconsistent (by closure).
Thus,  $F_f^u\sigma_S$ should be true,
$F_f^{v}\sigma_{S'}$ is true
and $\codetext{Fired}_{\texttt{neg}(f)}^{u,v}\sigma$ is false, which
satisfy
constraint $(\ref{cieffe6})$ (regardless of the
value of $\codetext{Fired}_f^{u,v}\sigma$).

\item $f \notin E(a,u)$ and $f \notin S$.
This means that $f$ is inserted in $S'$ by closure.
Thus, there is a static causal law of the form
$\codetext{caused}(\gamma_{{f,j}},f)$ such that $S' \models \gamma_{{f,j}}$.
In this case, by $(\ref{cieffe3})$, $\codetext{Stat}_f^{v}\sigma$ is true
 and, by $(\ref{cieffe4})$, so is $\codetext{Fired}_f^{u,v}\sigma$.
Thus, constraint $(\ref{cieffe6})$ is satisfied.
\end{enumerate}

If $f \notin S'$, then
$\codetext{neg}(f) \in S'$ and the proof is similar with
positive and negative roles interchanged.\qed
\end{proof}

Let us observe that the converse of the above theorem does not
necessarily hold. The problem arises from the fact that
the implicit minimality in the closure operation is not
reflected in the computation of solutions to the constraint.
Consider the domain description where $\mathcal{F}=\{ f, g , h\}$ and
$\mathcal{A} =\{ a \}$, with the following laws:
\begin{center}
\begin{tabular}{lcl}
\codetext{executable($a$,[]).} & \phantom{aaaa} &  \codetext{caused([$g$],$h$).}\\
\codetext{causes($a$,$f$,[]).}   & \phantom{aaaa} &  \codetext{caused([$h$],$g$).}\\
\end{tabular}
\end{center}
\noindent
Let us consider
$S=\{\codetext{neg($f$)},\codetext{neg($g$)},\codetext{neg($h$)}\}$.
Then,
$S'= \{\codetext{$f$},
\codetext{$g$},\codetext{$h$}\}$
determines a solution of the constraint $C_{\mathcal{F}}^{u,v}$ with the
execution of action $a$, but
$\Clo_{\mathcal{SL}}(E \cup (S \cap S'))=\{f\} \subset S'$.
However, the following holds:

\begin{theorem}[Weak Soundness]\label{partialcorr}
Let $\mathcal{D} = \langle \mathcal{DL},\mathcal{EL},\mathcal{SL}\rangle$.
Let $\sigma_{S}\circ\sigma_{S'}\circ\sigma_{a}$
identify a solution of the constraint $C_{\mathcal{F}}^{u,v}$.
Then $\Clo_{\mathcal{SL}}(E(a,u) \cup (S \cap S'))\subseteq S'$.
\end{theorem}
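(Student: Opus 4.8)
The plan is to show that $S'$ is a set of literals that contains $E(a,u) \cup (S \cap S')$ and is closed with respect to $\mathcal{SL}$; since $\Clo_{\mathcal{SL}}(E(a,u) \cup (S\cap S'))$ is by definition the \emph{smallest} such set, the inclusion follows immediately. So the work splits into two parts: (i) $E(a,u) \cup (S\cap S') \subseteq S'$, and (ii) $S'$ is closed under $\mathcal{SL}$.

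For part (i), I would argue literal by literal. First suppose $\ell \in E(a,u)$. By the definition of direct effects there is a dynamic law $\codetext{causes}(a,\ell,\alpha)$ with $S \models \alpha$, hence $\alpha^u\sigma$ is true; since $\sigma_a$ is a solution of~(\ref{unica_azione}) and corresponds to a transition in which $a$ occurs, $A_a^u\sigma = 1$, so constraint~(\ref{cieffe2}) forces $\codetext{Dyn}_\ell^{u}\sigma$ true, and then~(\ref{cieffe4}) forces $\codetext{Fired}_\ell^{u,v}\sigma$ true; by~(\ref{cieffe6}), $F_\ell^v\sigma$ is true, i.e.\ $\ell \in S'$. (Here I need to be mildly careful that $\sigma_a$ does assign $A_a^u=1$ for \emph{some} action $a$; this is exactly the hypothesis that $\sigma$ corresponds to a solution together with~(\ref{unica_azione}), and $E$ abbreviates $E(a,u)$ for that $a$.) Next suppose $\ell \in S \cap S'$; then trivially $\ell \in S'$, so nothing to prove. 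Thus $E(a,u)\cup(S\cap S') \subseteq S'$.

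For part (ii), take any static law $\codetext{caused}(\gamma,\ell) \in \mathcal{SL}$ with $S' \models \gamma$; I must show $S' \models \ell$, i.e.\ $\ell \in S'$. Since $S'\models\gamma$, every literal of $\gamma$ has its $v$-variable true under $\sigma_{S'}$, so $\gamma^v\sigma$ is true, and $\ell$ appears among the static laws contributing to $\codetext{Stat}_\ell^v$, so constraint~(\ref{cieffe3}) makes $\codetext{Stat}_\ell^v\sigma$ true, hence~(\ref{cieffe4}) makes $\codetext{Fired}_\ell^{u,v}\sigma$ true, and~(\ref{cieffe6}) then makes $F_\ell^v\sigma$ true, i.e.\ $\ell\in S'$. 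Therefore $S'$ is closed w.r.t.\ $\mathcal{SL}$.

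Combining (i) and (ii): $S'$ is a set of literals containing $E(a,u)\cup(S\cap S')$ and closed under $\mathcal{SL}$, and since $\Clo_{\mathcal{SL}}(E(a,u)\cup(S\cap S'))$ is the least such set, $\Clo_{\mathcal{SL}}(E(a,u)\cup(S\cap S')) \subseteq S'$. The only subtlety — and the place I'd be most careful — is purely bookkeeping: making sure that reading constraints~(\ref{cieffe2})--(\ref{cieffe6}) in the ``forward'' direction (from a satisfying assignment to facts about $S$, $S'$, $a$) is legitimate, in particular that~(\ref{cieffe6}) really does let us \emph{conclude} $F_\ell^v = 1$ from $\codetext{Fired}_\ell^{u,v}=1$ (it does, since $\codetext{Fired}_\ell^{u,v} \to F_\ell^v$ is one direction of the biconditional). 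No use of the minimality/closure of $S'$ in the reverse direction is needed, which is exactly why only the $\subseteq$ inclusion — and not equality — can be proved, consistent with the counterexample given just before the statement.
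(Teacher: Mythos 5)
Your proof is correct, but it is organized differently from the paper's. The paper argues element by element over $\Clo_{\mathcal{SL}}(E(a,u)\cup(S\cap S'))$: for a fluent $f$ in the closure it distinguishes the cases $f\in S\cap S'$, $f\in E(a,u)$, and ``$f$ added by closure,'' and in the last case it asserts the existence of a law $\codetext{caused}(\gamma_{f,j},f)$ with $S'\models\gamma_{f,j}$ --- a step that, strictly speaking, presupposes that the body literals (which enter the closure at an earlier stage) are already known to lie in $S'$, i.e.\ an implicit induction on the stages of the closure that the paper does not spell out. You instead prove that $S'$ is a superset of the generators $E(a,u)\cup(S\cap S')$ (via constraints~(\ref{cieffe2}), (\ref{cieffe4}), (\ref{cieffe6}) for the $E(a,u)$ part, trivially for $S\cap S'$) and that $S'$ is closed under $\mathcal{SL}$ (via~(\ref{cieffe3}), (\ref{cieffe4}), (\ref{cieffe6})), and then invoke the defining minimality of $\Clo_{\mathcal{SL}}(\cdot)$ as the least closed superset. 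This buys you a cleaner argument: the stage-by-stage induction disappears entirely, absorbed into the least-fixed-point characterization of the closure, and the two halves of your argument use exactly the same constraint-chasing the paper uses in its cases 2 and 3. Your closing remark --- that only the forward direction of the biconditional~(\ref{cieffe6}) is used, which is why only $\subseteq$ and not equality is obtained --- correctly locates where soundness fails in general, consistent with the paper's counterexample and with Theorem~\ref{correctB}.
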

\begin{proof}
It is immediate to see that
$\sigma_{S}$ and $\sigma_{S'}$ uniquely determines two
consistent and complete sets of fluent literals $u$ and $v$.
Let $f$ be a positive fluent in $\Clo_{\mathcal{SL}}(E(a,u) \cup (S \cap S'))$.
We show now that $f \in S'$.

\begin{enumerate}
\item If $f$ is in $S \cap S'$ we are done.

\item  If $f \in E(a,u)$, there is a law
$\codetext{causes}(a,f,\alpha_{f,i})$ such that
$S \models \alpha_{f,i}$. Since $S$ is determined by $\sigma_S$,
by $(\ref{cieffe2})$, we have that $\sigma_S\circ \sigma_a$ is
a solution of $\alpha_{f,i}^u \wedge A^u_{a}$, which implies
that  $\codetext{Dyn}_f^u$ is true, and
$\sigma_{S'}(F_f^{v})$ is true in $\sigma_{S'}$. Therefore,
$f\in S'$.
Observe also that
$\sigma_a$ making true $A_a^u$ will imply that
$\delta_{{a},h}^u$ is true (for some $h\in\{1,\ldots,p_{a}\}$), which
will imply satisfiability of the executability preconditions for~$a$.

\item
We are left with the case of $f \notin E(a,u)$ and $f \notin S \cap S'$.
Since $S'$ is determined by $\sigma_{S'}$, and
$f \in \Clo_{\mathcal{SL}}(E(a,u) \cup (S \cap S'))$,
there is a law $\codetext{caused}(\gamma_{{f,j}},f)$
such that $S' \models \gamma_{{f,j}}$, and by
construction $\sigma_{S'}$ makes
$\gamma_{{f,j}}^{v}$ true. Thus, $\codetext{Stat}_f^{v}$
is true and therefore $F_f^{v}$ is true. Hence, $f \in S'$.
\end{enumerate}
The proof proceeds similarly in the case of a negative fluent
$\codetext{neg}(f)$ in
$\Clo_{\mathcal{SL}}(E(a,u) \cup (S \cap S'))$.\qed
\end{proof}

\medskip

Let us consider the set of static causal laws
$\mathcal{SL}$. We can introduce a notion of
\emph{positive dependence graph}, following the traditional principle
of dependence analysis used in logic programming (e.g.,~\cite{lin}).
The graph ${\cal G}(\mathcal{SL})$ is defined as follows:
\begin{itemize}
\item the set of the nodes in ${\cal G}(\mathcal{SL})$ corresponds
        to the set of fluent literals, i.e.,
\[ \mbox{\it Nodes}({\cal G}(\mathcal{SL})) = \{f \:|\: f \in \mathcal{F}\} \cup
                                \{ \codetext{neg}(f) \:|\: f \in \mathcal{F}\}\]

\item edges are created to denote the dependence of a fluent literal
        on other literals due to a static causal law, i.e.,
\[ \mbox{\it Edges}({\cal G}(\mathcal{SL})) = \{(\ell_1,\ell_2)\:|\: \codetext{caused}(L,\ell_1)\in\mathcal{SL}, L = \codetext{[\dots},\ell_2,\codetext{\dots]}\}\]
\end{itemize}
A set of fluent literals $L$ is a \emph{loop} if, for any $\ell_1,\ell_2\in L$,
we have that there is a path from $\ell_1$ to $\ell_2$ in ${\cal G}(\mathcal{SL})$
such that all nodes encountered in such path are in $L$.
We say that a domain specification $\mathcal{D}=\langle \mathcal{DL}, \mathcal{EL},
        \mathcal{SL}\rangle$ is \emph{acyclic} if the graph  ${\cal G}(\mathcal{SL})$
        does not contain any loops.

\begin{figure}
\begin{center}
\makebox{
\begin{pspicture}(0,0)(6.9,4.2)
\put(0,0){\psscalebox{1.1}{\figurazero}}
\end{pspicture}
}
\end{center}
\caption{\label{tratteggi}Sets of fluents involved in a state
transition and a literal $\ell$ introduced by closure.}
\end{figure}
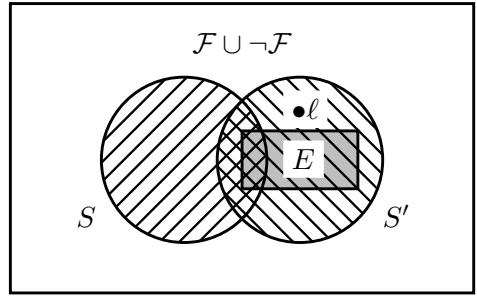

\begin{theorem}[Acyclic Soundness]\label{correctB}
Let $\mathcal{D} = \langle \mathcal{DL},\mathcal{EL},\mathcal{SL}\rangle$.
Let $\sigma_{S}\circ\sigma_{S'}\circ\sigma_a$
be a solution of the constraint $C_{\mathcal{F}}^{u,v}$. If the dependency graph of
$P$ is acyclic, then $\Clo_{\mathcal{SL}}(E(a,u) \cup (S \cap S')) = S'$.
\end{theorem}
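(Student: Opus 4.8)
By Theorem~\ref{partialcorr} (Weak Soundness) we already know that $\Clo_{\mathcal{SL}}(E(a,u) \cup (S \cap S')) \subseteq S'$, so the plan is to establish the reverse inclusion $S' \subseteq \Clo_{\mathcal{SL}}(E(a,u) \cup (S \cap S'))$ under the acyclicity hypothesis. Write $T = \Clo_{\mathcal{SL}}(E(a,u) \cup (S \cap S'))$ for brevity. I would argue by contradiction: suppose there is a fluent literal $\ell \in S' \setminus T$, and pick such an $\ell$. Since $\sigma = \sigma_S \circ \sigma_{S'} \circ \sigma_a$ satisfies $C_{\mathcal{F}}^{u,v}$, constraint~(\ref{cieffe6}) forces either $\codetext{Fired}_{\ell}^{u,v}\sigma$ to be true, or both $\neg\codetext{Fired}_{\bar\ell}^{u,v}\sigma$ and $F_\ell^u\sigma_S$ to be true. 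In the latter case $\ell \in S$, and since $\ell \in S'$ we get $\ell \in S \cap S' \subseteq T$, contradicting $\ell \notin T$. So $\codetext{Fired}_{\ell}^{u,v}\sigma$ is true, and by~(\ref{cieffe4}) either $\codetext{Dyn}_{\ell}^u\sigma$ or $\codetext{Stat}_{\ell}^v\sigma$ holds. If $\codetext{Dyn}_{\ell}^u\sigma$ holds, then by~(\ref{cieffe2}) some dynamic law $\codetext{causes}(a,\ell,\alpha)$ has $\alpha^u\sigma$ true with the corresponding action variable set, i.e.\ $S \models \alpha$, so $\ell \in E(a,u) \subseteq T$, again a contradiction. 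Hence $\codetext{Stat}_{\ell}^v\sigma$ is true, so by~(\ref{cieffe3}) there is a static law $\codetext{caused}(\gamma,\ell) \in \mathcal{SL}$ with $S' \models \gamma$.

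This is where acyclicity does the work. The literal $\ell$ is ``justified'' in $S'$ only through a static law whose body $\gamma$ is satisfied in $S'$; I would like to conclude that some literal of $\gamma$ is itself in $S' \setminus T$, so that the argument can be iterated. Suppose instead that every literal of $\gamma$ lies in $T$. Since $T$ is closed under $\mathcal{SL}$ and $T \models \gamma$, closure would force $\ell \in T$, contradicting $\ell \notin T$. Therefore $\gamma$ contains at least one literal $\ell'$ with $\ell' \in S'$ (because $S' \models \gamma$) and $\ell' \notin T$. Moreover $(\ell,\ell')$ is an edge of $\mathcal{G}(\mathcal{SL})$ by the definition of $\mathit{Edges}(\mathcal{G}(\mathcal{SL}))$. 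Thus from any element of $S' \setminus T$ we can always step, along an edge of $\mathcal{G}(\mathcal{SL})$, to another element of $S' \setminus T$. Iterating, we build an infinite walk in the finite graph $\mathcal{G}(\mathcal{SL})$ all of whose nodes lie in $S' \setminus T$; such a walk must revisit a node, and the cycle it traverses is a loop contained in $S' \setminus T$, contradicting the acyclicity of $\mathcal{D}$. Hence $S' \setminus T = \emptyset$, i.e.\ $S' \subseteq T$, and together with Theorem~\ref{partialcorr} this gives $\Clo_{\mathcal{SL}}(E(a,u) \cup (S \cap S')) = S'$.

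The main obstacle I anticipate is making the ``infinite walk yields a loop inside $S'\setminus T$'' step fully rigorous with respect to the paper's precise definition of \emph{loop}: the definition requires that for \emph{any} two literals $\ell_1,\ell_2$ in the set there is a path between them staying inside the set, which is the strongly-connected-component condition, not merely the existence of one cycle. So rather than appeal loosely to ``a repeated node'', I would phrase it as: the set of literals reachable from $\ell$ by such a chain is a nonempty finite subset of $S'\setminus T$ on which every node has an outgoing edge inside the set, hence it contains a terminal strongly connected component, and any such component with an internal edge is a loop contained in $S'\setminus T \subseteq \mathit{Nodes}(\mathcal{G}(\mathcal{SL}))$ — contradicting acyclicity. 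A minor secondary point to check is the symmetric treatment of negative fluents, but as in the earlier proofs this is just interchanging the roles of $f$ and $\codetext{neg}(f)$ and needs no new idea. I should also note in passing that consistency and completeness of $S$ and $S'$ (guaranteed by $\sigma_S$, $\sigma_{S'}$ together with constraint~(\ref{cieffe5})) are used implicitly when I invoke~(\ref{cieffe6}) to split into the ``fired'' versus ``inertia'' cases.
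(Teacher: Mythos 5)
Your proposal is correct, and the constraint-level reasoning is exactly the paper's: one inclusion from Theorem~\ref{partialcorr}, and for the other, the chain (\ref{cieffe6}) $\Rightarrow$ $\codetext{Fired}_{\ell}^{u,v}$ $\Rightarrow$ (\ref{cieffe4}) $\Rightarrow$ rule out $\codetext{Dyn}_{\ell}^u$ via $E(a,u)\subseteq T$ $\Rightarrow$ $\codetext{Stat}_{\ell}^v$ $\Rightarrow$ (\ref{cieffe3}) gives a static law whose body is satisfied in $S'$. Where you diverge is in how acyclicity is cashed in. The paper turns the argument into a direct induction on the depth measure $d(\ell)$ (the longest path from a leaf of $\mathcal{G}(\mathcal{SL})$ to $\ell$), applying the inductive hypothesis to the body literals of the justifying static law; you instead run an infinite-descent argument inside $S'\setminus T$ and extract a loop, contradicting acyclicity. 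The two are logically equivalent (well-founded induction versus no infinite descending chain), but each buys something: the paper's induction is a clean positive statement that never mentions loops, at the cost of relying on the orientation of the depth measure along the head-to-body edges; your version connects directly to the paper's actual definition of \emph{loop} (the strongly-connected condition), and your refinement via a terminal strongly connected component with an internal edge is needed precisely because a single repeated node does not by itself witness a loop in that sense --- you identified and closed that gap yourself. Your closing remarks on the symmetric negative-fluent case and on the implicit use of consistency/completeness of $S,S'$ match the paper's treatment.
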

\begin{proof}
Theorem~\ref{partialcorr} proves that $\Clo_{\mathcal{SL}}(E(a,u) \cup (S \cap S')) \subseteq S'$.
It remains to prove that for any (positive or negative) fluent $\ell$,
if $\ell \in S'$, then $\ell \in \Clo_{\mathcal{SL}}(E(a,u) \cup (S \cap S'))$.

If $\ell \in E(a,u)$ or $\ell \in S$, then
trivially $\ell \in \Clo_{\mathcal{SL}}(E(a,u) \cup (S \cap S'))$.\\
Let us prove that (cf., \Fig{tratteggi}):
$$(\ell \in S' \wedge \ell \notin E(a,u) \cup (S \cap S'))
\rightarrow \ell \in \Clo_{\mathcal{SL}}(E(a,u) \cup (S \cap S'))$$
To this aim, consider the dependence graph ${\cal G}(\mathcal{SL})$.
Because of the acyclicity of  $\mathcal{G}(\mathcal{SL})$,
there are nodes in $\mathcal{G}(\mathcal{SL})$ without incoming
edges---we will refer to them as \emph{leaves}.
For any node $\ell$ of $\mathcal{G}(\mathcal{SL})$,
let $d(\ell)$ denote the length of the longest
path from a leaf of $\mathcal{G}(\mathcal{SL})$ to~$\ell$.
We prove the property for a positive fluent literal $\ell=f$,
by induction on $d(\ell)$.

\noindent
\emph{Base case.}
If  $f \notin E(a,u) \cup (S \cap S')$
is a positive fluent which is a leaf (the proof is similar for the case of
negative literals), then
two cases could be possible.
\begin{list}{$\bullet$}{\topsep=2pt \itemsep=2pt \parsep=0pt \leftmargin=12pt}
\item
There is no law of the form
$\codetext{caused}(\_,f)$ in $\mathcal{SL}$.
In this case, it cannot be that $f \in S'$ due to
constraint (\ref{cieffe3}).

\item
There is a law $\codetext{caused}([\,],f)$.
In this case $f \in S'$ by closure.
\end{list}

\noindent
\emph{Inductive step.}
Let $f \notin E(a,u) \cup (S \cap S')$ be a positive fluent
such that there are laws
$\codetext{caused}(\gamma_{{f,1}}, f), \ldots, \codetext{caused}(\gamma_{{f,h}}, f)$
in $\mathcal{SL}$.
By the inductive hypothesis, let us assume that the thesis holds for each fluent literal
$\ell$ such that $d(\ell)<d(f)$.
Since $f \notin E(a,u)$ and $f \notin S \cap S'$,
we have that $F_f^u$ is false, $F_f^{v}$ is true,
and $\codetext{Dyn}_f^u$ is false under $\sigma_S\circ\sigma_{S'}\circ\sigma_a$.
{F}rom the fact that constraint~$(\ref{cieffe6})$ is satisfied,
it follows that $\codetext{Stat}_f^{v}$ is true.
Moreover, $\codetext{Dyn}_f^u$ is false
because  $f \notin E(a,u)$.
On the other hand, because of~$(\ref{cieffe5})$, we have that
 $\codetext{Dyn}_{\texttt{neg}(f)}^u$, $\codetext{Stat}_{\texttt{neg}(f)}^{v}$,
and $\codetext{Fired}_{\texttt{neg}(f)}^{u,v}$ are all false.
Consequently, constraint~$(\ref{cieffe6})$ can be rewritten as
 $F_f^{v} \leftrightarrow \bigvee_{j=1}^h \gamma_{{f,j}}^{v}$.
Since $f \in S'$ (i.e., $F_f^{v}$ is true), there must exists a $j\in\{1,\ldots,h\}$
such that $\gamma_{{f,j}}^{v}$ is verified by $\sigma_{S'}$.
This implies that, for each fluent $g$ required to be true  (resp., false)
in $\gamma_{{f,j}}$, $F_{g}^{v}$ is set true (resp., false) by $\sigma_{S'}$.
By inductive hypothesis, such fluent literals (either $g$ or $\codetext{neg}(g)$)
belong to $\Clo_{\mathcal{SL}}(E(a,u) \cup (S \cap S'))$.
Since $\Clo_{\mathcal{SL}}(E(a,u) \cup (S \cap S'))$  is closed w.r.t.
the static laws, it follows that
$f \in \Clo_{\mathcal{SL}}(E(a,u) \cup (S \cap S'))$.

The proof in case of a negative fluent $\codetext{neg}(f)$ is similar.
\qed
\end{proof}

\medskip

In order to achieve soundness in cases where the graph
$\mathcal{G}(\mathcal{SL})$ contains loops, it is necessary to introduce
additional constraints in conjunction with $C_{\mathcal{F}}^{u,v}$.
Intuitively, in the semantics of $\mathcal{B}$, cyclic dependencies created by the
static causal laws are resolved by the closure operation $\Clo_{\mathcal{SL}}(\cdot)$ by
minimizing the number of fluent literals that are made true---this derives
by the implicit minimality of the closure. Additional constraints can
be added to enforce this behavior; these constraints can be derived by following
a principle similar to that of  \emph{loop formulae} commonly used in the
context of logic programming~\cite{lin}.

The notion of loop formulae can be developed in our context as follows.
Let $L = \{\ell_1,\dots,\ell_k\}$ be a loop in $\mathcal{G}(\mathcal{SL})$ and
let us consider the transition from $u$ to $v$ as studied earlier.
Let us define a \emph{counter-support}  for $\ell_i$ w.r.t.\ the loop $L$
as a set of constraints $cs$ with the following properties:
\begin{itemize}
\item for each $\codetext{causes}(a_j,\ell_i,\alpha)$ in
        $\mathcal{DL}$, \,$cs$ contains either $A^u_{a_j}=0$ or
        $F_{\bar{\ell}}^u=1$ for some $\ell$ in $\alpha$;
\item for each $\codetext{caused}(\gamma,\ell_i)$ in $\mathcal{SL}$
        such that none of $\ell_1,\dots,\ell_k$ is in $\gamma$,
        for some $\ell$ in $\gamma$
        \,$cs$ contains $F_{\bar{\ell}}^v=1$;
\item $cs$ contains either $F_{\bar{\ell_i}}^u=1$ or
        $F_{\bar{\ell_i}}^v=1$.
\end{itemize}
(As usual, we might identify a set $cs$ of constraint with their conjunction, depending on the need.)
Let us denote with $\mbox{\it Counters}(\ell_i,L)^{u,v}$ the set of all
such counter-supports.
The loop formulae for $L$ w.r.t. $u,v$
is the set of constraints
\[\mathit{Form}(L)^{u,v} =  \{ c_1\wedge \dots \wedge c_k \rightarrow F_{\ell_1}^v=0 \wedge \dots \wedge
F_{\ell_k}^v=0 \:|\: c_i \in \mbox{\it Counters}(\ell_i,L)^{u,v}\}.\]
To take into account all different loops in $\mathcal{G}(\mathcal{SL})$,
let $\mathit{Form}(\mathcal{D})^{u,v}$ be the constraint
$$
\mathit{Form}(\mathcal{D})^{u,v} = \bigwedge_{L \mbox{ \scriptsize is a loop in }
\mathcal{G}(\mathcal{SL})}\mathit{Form}(L)^{u,v}.
$$

Following the analogous proofs relating answer sets and models of
a program completion that satisfies loop formulae (e.g.,~\cite{lin}) one
can show:
\begin{theorem}[Soundness]\label{correctB1}
Let $\mathcal{D} = \langle \mathcal{DL},\mathcal{EL},\mathcal{SL}\rangle$ and
let $\sigma_{S}\circ\sigma_{S'}\circ\sigma_a$
be a solution of the constraint $C_{\mathcal{F}}^{u,v}\wedge \mathit{Form}(\mathcal{D})^{u,v}$.
Thus, $\Clo_{\mathcal{SL}}(E(a,u) \cup (S \cap S')) = S'$.
\end{theorem}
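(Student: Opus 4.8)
The plan is to reuse the argument of Theorem~\ref{partialcorr} for the inclusion $\Clo_{\mathcal{SL}}(E(a,u)\cup(S\cap S'))\subseteq S'$, which holds without any extra hypothesis, and then to establish the reverse inclusion $S'\subseteq\Clo_{\mathcal{SL}}(E(a,u)\cup(S\cap S'))$ by contradiction, using $\mathit{Form}(\mathcal{D})^{u,v}$ exactly where acyclicity was used in Theorem~\ref{correctB}. So suppose the reverse inclusion fails. Write $C=\Clo_{\mathcal{SL}}(E(a,u)\cup(S\cap S'))$ and let $L_0=\{\ell\in S' : \ell\notin C\}$ be the nonempty set of ``unsupported'' literals that the constraint solution asserts but the closure does not derive. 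The first key step is to show that $L_0$ (or a suitable nonempty subset of it) is contained in a loop $L=\{\ell_1,\dots,\ell_k\}$ of $\mathcal{G}(\mathcal{SL})$: intuitively, every $\ell\in L_0$ must receive its truth value in $S'$ from a static causal law $\codetext{caused}(\gamma,\ell)$ (it is not in $E(a,u)$, not in $S\cap S'$, and constraints~(\ref{cieffe5})--(\ref{cieffe6}) then force $\codetext{Stat}_\ell^v$ true exactly as in the inductive step of Theorem~\ref{correctB}), and at least one literal of the body $\gamma$ must again lie in $L_0$ — otherwise a well-founded induction along $\mathcal{G}(\mathcal{SL})$ restricted outside $L_0$ would put $\ell$ in $C$. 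Chasing these body-dependencies inside $L_0$ and using finiteness of $\mathcal{F}$ yields a cycle in $\mathcal{G}(\mathcal{SL})$, hence a loop $L\subseteq L_0$.

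The second key step is to derive, from $L\subseteq L_0\subseteq S'$, a counter-support $c_i\in\mathit{Counters}(\ell_i,L)^{u,v}$ for each $\ell_i\in L$ that is satisfied by $\sigma=\sigma_S\circ\sigma_{S'}\circ\sigma_a$, which will make the antecedent of the corresponding loop-formula implication in $\mathit{Form}(L)^{u,v}$ true while its consequent $F_{\ell_1}^v=0\wedge\dots\wedge F_{\ell_k}^v=0$ is false (since every $\ell_i\in S'$), contradicting that $\sigma$ satisfies $\mathit{Form}(\mathcal{D})^{u,v}$. Concretely: for each $\codetext{causes}(a_j,\ell_i,\alpha)$ in $\mathcal{DL}$, since $\ell_i\notin E(a,u)$ either $a_j\neq a$, giving $A^u_{a_j}=0$ under $\sigma_a$, or $a_j=a$ and $S\not\models\alpha$, giving $F^u_{\bar\ell}=1$ for some $\ell\in\alpha$ — either way the first clause of the counter-support definition is met. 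For each $\codetext{caused}(\gamma,\ell_i)$ in $\mathcal{SL}$ whose body $\gamma$ meets none of $\ell_1,\dots,\ell_k$: such a law cannot be ``applied'' to justify $\ell_i$ (if it were, $S'\models\gamma$ would, by the well-foundedness argument of the first step applied to the $\gamma$-literals which lie outside $L$, force $\ell_i\in C$, contradicting $\ell_i\in L_0$), so $S'\not\models\gamma$, i.e.\ $F^v_{\bar\ell}=1$ for some $\ell\in\gamma$, meeting the second clause. Finally $\ell_i\notin S\cap S'$ while $\ell_i\in S'$ forces $\ell_i\notin S$, so $\bar\ell_i\in S$, giving $F^u_{\bar\ell_i}=1$ and meeting the third clause. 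Assembling one such satisfied $c_i$ per $\ell_i$ yields the desired contradiction.

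I expect the main obstacle to be the first step — pinning down precisely why the ``unsupported'' set $L_0$ must meet a loop, and organizing the well-founded/minimal-counterexample induction cleanly so that it interacts correctly with the three clauses of the counter-support definition (in particular the second clause, which quantifies only over static laws whose bodies avoid the loop). This is the analogue, in our constraint setting, of the classical fact that a model of the program completion which is not an answer set must violate some loop formula; the excerpt itself points to~\cite{lin} for the template, and the work here is to transcribe that argument faithfully through the encoding of \Fig{cieffe}, handling positive and negative fluent literals symmetrically as in the earlier proofs. The remaining bookkeeping — reading off satisfied counter-supports from $\sigma$ — is routine once the loop has been isolated.
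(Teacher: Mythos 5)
The paper does not actually spell out a proof of Theorem~\ref{correctB1}: it only asserts that the result follows ``by the analogous proofs relating answer sets and models of a program completion that satisfies loop formulae'' and points to the ASSAT literature. Your plan is precisely the intended transcription of that argument---$\subseteq$ from Theorem~\ref{partialcorr}, and $\supseteq$ by locating a loop inside the set $L_0 = S'\setminus \Clo_{\mathcal{SL}}(E(a,u)\cup(S\cap S'))$ of unsupported literals and exhibiting a satisfied counter-support for each of its members---so you are on the route the authors have in mind, and most of the bookkeeping is right: the first and third clauses of the counter-support, and the derivation (via constraints~(\ref{cieffe5})--(\ref{cieffe6})) that every $\ell\in L_0$ must be fired by a static law whose body meets $L_0$, hence that the restriction of $\mathcal{G}(\mathcal{SL})$ to $L_0$ has everywhere positive out-degree and therefore contains a cycle.

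There is, however, one genuine gap, and it sits exactly where you predicted: the second clause. You take $L$ to be \emph{any} cycle contained in $L_0$ and then argue that for a static law $\codetext{caused}(\gamma,\ell_i)$ whose body avoids $L$ one must have $S'\not\models\gamma$, ``by the well-foundedness argument of the first step applied to the $\gamma$-literals which lie outside $L$.'' That inference fails: a body literal of $\gamma$ may avoid $L$ yet still lie in $L_0\setminus L$, in which case it is not in the closure and nothing forces $\ell_i$ into the closure; so $S'\models\gamma$ is entirely possible and no satisfied counter-support for $\ell_i$ need exist. The standard repair, needed here too, is to choose $L$ not as an arbitrary cycle but as a \emph{terminating} (sink) strongly connected component of $\mathcal{G}(\mathcal{SL})$ restricted to $L_0$, i.e., one with no outgoing edges into $L_0\setminus L$; such a component exists because the restricted graph is finite and every node has an out-edge within $L_0$, and it is a loop in the paper's sense even when it is a singleton, since the out-edge of a sink singleton must be a self-loop. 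With that choice, any $\codetext{caused}(\gamma,\ell_i)$ with $S'\models\gamma$ and $\gamma\cap L=\emptyset$ has all of its body literals outside $L_0$, hence inside the closure, forcing $\ell_i$ into the closure and contradicting $\ell_i\in L_0$; this legitimately yields $S'\not\models\gamma$ and the required $F^v_{\bar{\ell}}=1$ for some $\ell\in\gamma$. Once $L$ is chosen this way, the rest of your plan goes through unchanged.
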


Let the action description $\mathcal{D}$ meet the conditions of Theorem~\ref{correctB1}  and let
$\langle \mathcal{S}, \nu, R \rangle$ be its underlying transition system.
The following can be proved.
\begin{theorem}
There is a trajectory $\langle s_0,a_1,s_1,a_2,\ldots,a_{\n},s_{\n} \rangle$
in the transition system $\langle \mathcal{S}, \nu, R \rangle$ if and only if
$s_0$ is closed w.r.t. $\mathcal{SL}$ and
there is a solution for the constraint
$$\bigwedge_{j=0}^{\n-1}\big(C_{\mathcal{F}}^{s_{j},s_{j+1}}
\wedge \mathit{Form}(\mathcal{D})^{s_{j},s_{j+1}}\big)$$
\end{theorem}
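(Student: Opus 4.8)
The plan is to prove the two directions separately, reducing the multi-step statement to the single-step results already established, in particular Theorem~\ref{correctB1} (Soundness) and Theorem~\ref{Completeness} (Completeness). The key observation is that the global constraint is a conjunction of local constraints $C_{\mathcal{F}}^{s_j,s_{j+1}}\wedge\mathit{Form}(\mathcal{D})^{s_j,s_{j+1}}$, one per consecutive pair of states, and that these local constraints share variables only through the ``boundary'' fluent variables: the variables $F_\ell^{s_{j+1}}$ appearing in the $j$-th conjunct are exactly the variables $F_\ell^{s_{j+1}}$ appearing (with the ``from'' role) in the $(j{+}1)$-th conjunct, while the action variables $A_a^{s_j}$ are local to the $j$-th conjunct. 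Hence a solution of the global constraint decomposes into a family of solutions of the local constraints that agree on the shared boundary variables, and conversely.

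For the forward direction, I would start from a trajectory $\langle s_0,a_1,s_1,\ldots,a_{\n},s_{\n}\rangle$. By definition of the transition system described by $\mathcal{D}$, each $s_j$ has $Lit(s_j)$ closed w.r.t.\ $\mathcal{SL}$ (so in particular $s_0$ is closed), and each triple $\langle s_j,a_{j+1},s_{j+1}\rangle\in R$. First I would invoke Theorem~\ref{Completeness} on each such triple to obtain that $\sigma_{S_j}\circ\sigma_{S_{j+1}}\circ\sigma_{a_{j+1}}$ solves $C_{\mathcal{F}}^{s_j,s_{j+1}}$; then I would note that the loop-formula constraint $\mathit{Form}(\mathcal{D})^{s_j,s_{j+1}}$ is also satisfied --- this follows from the minimality of the closure operation (exactly as in the analogous logic-programming correspondence between answer sets and models of the completion satisfying loop formulae, cited via~\cite{lin}), since $Lit(s_{j+1})=\Clo_{\mathcal{SL}}(E(a_{j+1},s_j)\cup(Lit(s_j)\cap Lit(s_{j+1})))$ makes true no fluent literal lacking an acyclic support. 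Because the individual assignments agree on the shared $F_\ell^{s_j}$ variables (they are all derived from the same $Lit(s_j)$) and the action-variable domains are pairwise disjoint, their union is a well-defined assignment, and it satisfies the global conjunction.

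For the backward direction, I would take a solution of the global constraint together with the hypothesis that $s_0$ is closed w.r.t.\ $\mathcal{SL}$. Restricting the solution to the variables occurring in the $j$-th conjunct yields a solution $\sigma_{S_j}\circ\sigma_{S_{j+1}}\circ\sigma_{a_j}$ of $C_{\mathcal{F}}^{s_j,s_{j+1}}\wedge\mathit{Form}(\mathcal{D})^{s_j,s_{j+1}}$, where $S_j$ and $S_{j+1}$ are the consistent, complete sets of literals read off from the $F_\ell^{s_j}$ and $F_\ell^{s_{j+1}}$ values (using the correspondence between truth assignments and complete consistent literal sets noted in~\Sect{SoundCompleOfB}), and $a_j$ is the unique action with $\sigma(A_{a_j}^{s_j})=1$ (unique by constraint~(\ref{unica_azione})). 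Theorem~\ref{correctB1} then gives $\Clo_{\mathcal{SL}}(E(a_j,s_j)\cup(S_j\cap S_{j+1}))=S_{j+1}=Lit(s_{j+1})$, and the satisfaction of constraints~(\ref{execaction}) together with~(\ref{unica_azione}) gives that $a_j$ is executable in $s_j$; hence $\langle s_j,a_j,s_{j+1}\rangle\in R$. Also each $s_j$ is closed w.r.t.\ $\mathcal{SL}$ --- for $j=0$ by hypothesis, and for $j\geqslant 1$ because it equals a closure. Chaining these transitions yields the desired trajectory.

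The main obstacle I anticipate is not the forward/backward bookkeeping, which is routine gluing, but the justification that satisfying the loop formulae $\mathit{Form}(\mathcal{D})^{s_j,s_{j+1}}$ is exactly what upgrades the weak soundness of Theorem~\ref{partialcorr} to the equality in Theorem~\ref{correctB1} --- i.e., the single-step Theorem~\ref{correctB1} itself, whose proof the excerpt defers to the analogy with~\cite{lin}. For the present multi-step theorem I would treat Theorem~\ref{correctB1} as a black box; the only genuinely new content is the decomposition argument, and the one subtlety worth stating carefully there is that the boundary variables $F_\ell^{s_j}$ are genuinely shared (so consistency across steps is automatic) while the auxiliary variables $\codetext{Dyn}_\ell$, $\codetext{Stat}_\ell$, $\codetext{Fired}_\ell$ and the action variables $A_a^{s_j}$ are step-local, so no clash can arise when we glue local solutions into a global one or split a global one into local pieces.
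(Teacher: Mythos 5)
Your proposal is correct and follows essentially the same route as the paper, whose proof is a one-line reduction to Theorem~\ref{Completeness} and Theorem~\ref{correctB1} plus the observation that satisfaction of $C_{\mathcal{F}}^{s_j,s_{j+1}}$ forces $s_{j+1}$ to be closed w.r.t.\ $\mathcal{SL}$. You are in fact slightly more careful than the paper on one point --- noting that the forward direction also needs the trajectory's assignment to satisfy $\mathit{Form}(\mathcal{D})^{s_j,s_{j+1}}$, which Theorem~\ref{Completeness} does not cover and which you justify (as the paper implicitly does for Theorem~\ref{correctB1}) by the minimality of the closure and the analogy with loop formulae --- and your remark that the boundary variables are shared while the auxiliary and action variables are step-local is exactly the gluing fact the paper leaves tacit.
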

\begin{proof}
The result follows directly by application of Theorems~\ref{Completeness} and~\ref{correctB1}
and by observing that for each transition $\langle s_j,a_{j+1},s_{j+1}\rangle$,
the satisfaction of constraint $C_{\mathcal{F}}^{s_{j},s_{j+1}}$ implies that
the state $s_{j+1}$
is closed w.r.t.~$\mathcal{SL}$.\qed
\end{proof}

Let $\langle\mathcal{D}, \mathcal{O}\rangle$ be an instance of a planning problem
where
$\mathcal{D}$ is an action description and $\mathcal{O}$ contains any
number of axioms of the form
$\codetext{initially}(C)$ and
$\codetext{goal}(C)$.
We can state the following.
\begin{corollary}
There is a trajectory
$\langle s_0,a_1,s_1,a_2,\ldots,a_{\n},s_{\n} \rangle$
for the planning problem
$\langle\mathcal{D}, \mathcal{O}\rangle$
if and only if
$s_0$ is closed w.r.t. the static causal laws of $\mathcal{D}$ and
there is a solution for the constraint
$$
\bigwedge_{\codetext{{\scriptsize initially}}(C)\in\mathcal{O}}C^{s_0}
\:\wedge\:
\bigwedge_{j=0}^{\n-1}\big(C_{\mathcal{F}}^{s_{j},s_{j+1}}
\wedge \mathit{Form}(\mathcal{D})^{s_{j},s_{j+1}}\big)
\:\wedge\:
\bigwedge_{\codetext{{\scriptsize goal}}(C)\in\mathcal{O}}C^{s_{\n}}
$$
\end{corollary}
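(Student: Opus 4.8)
The plan is to bootstrap from the theorem immediately preceding this corollary, which for \emph{fixed} states $s_0,\dots,s_{\n}$ characterizes the existence of a trajectory through them by solvability of $\bigwedge_{j=0}^{\n-1}\big(C_{\mathcal{F}}^{s_{j},s_{j+1}}\wedge \mathit{Form}(\mathcal{D})^{s_{j},s_{j+1}}\big)$, and then to absorb the two extra conjuncts $\bigwedge_{\codetext{initially}(C)\in\mathcal{O}}C^{s_0}$ and $\bigwedge_{\codetext{goal}(C)\in\mathcal{O}}C^{s_{\n}}$ by the correspondence between sets of fluent literals and truth assignments to the $F_\ell$-variables. The one elementary fact I would record first (it is implicit in the construction of \Sect{modelingB} and in the set-up at the start of \Sect{SoundCompleOfB}) is that, for any consistent and complete set $S$ of fluent literals and any list of fluent literals $C$, the assignment $\sigma_S$ satisfies $C^u$ iff $S\models C$: indeed $\sigma_S(F_\ell^u)=1$ exactly when $\ell\in S$, and $C^u$ is the conjunction of the $F_\ell^u$ over $\ell\in C$. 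Applied with $u=s_0$ and $u=s_{\n}$, this links the constraint conjuncts $C^{s_0}$, $C^{s_{\n}}$ to the semantic conditions $Lit(s_0)\models C$, $Lit(s_{\n})\models C$.

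For the ``only if'' direction I would start from a trajectory $\langle s_0,a_1,s_1,\dots,a_{\n},s_{\n}\rangle$ for $\langle\mathcal{D},\mathcal{O}\rangle$; by definition it is a trajectory in $\langle\mathcal{S},\nu,R\rangle$, so the preceding theorem gives that $s_0$ is closed w.r.t.\ $\mathcal{SL}$ and that $\sigma=\sigma_{Lit(s_0)}\circ\cdots\circ\sigma_{Lit(s_{\n})}$ together with the action assignments solves the middle conjunction. Since the trajectory witnesses the plan, $Lit(s_0)\models r$ for every $\codetext{initially}(r)\in\mathcal{O}$ and $Lit(s_{\n})\models\ell$ for every $\codetext{goal}(\ell)\in\mathcal{O}$, so by the fact above $\sigma$ also satisfies $\bigwedge_{\codetext{initially}(C)}C^{s_0}$ and $\bigwedge_{\codetext{goal}(C)}C^{s_{\n}}$, hence the whole constraint. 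For the ``if'' direction I would take a solution $\sigma$ of the displayed constraint with $s_0$ closed w.r.t.\ $\mathcal{SL}$; restricting $\sigma$ to the middle conjunction and invoking the preceding theorem yields a trajectory $\langle s_0,a_1,s_1,\dots,a_{\n},s_{\n}\rangle$ in $\langle\mathcal{S},\nu,R\rangle$ over exactly these states (the $F$-variables of $\sigma$ determine $Lit(s_j)$, and closure of $s_1,\dots,s_{\n}$ comes from the $C_{\mathcal{F}}^{s_j,s_{j+1}}$ as noted in that theorem's proof). Because $\sigma$ additionally satisfies each $C^{s_0}$ and each $C^{s_{\n}}$, the fact above gives $Lit(s_0)\models C$ for $\codetext{initially}(C)\in\mathcal{O}$ and $Lit(s_{\n})\models C$ for $\codetext{goal}(C)\in\mathcal{O}$, so the trajectory is one for the planning problem.

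The only point that needs a word of care — and it is a shallow one — is that the extra conjuncts $\bigwedge_{\codetext{initially}(C)}C^{s_0}$ and $\bigwedge_{\codetext{goal}(C)}C^{s_{\n}}$ constrain only variables $F_\ell^{s_0}$ and $F_\ell^{s_{\n}}$ that are already pinned down by the transition constraints $C_{\mathcal{F}}^{s_0,s_1}$ and $C_{\mathcal{F}}^{s_{\n-1},s_{\n}}$; hence they do not interact with the rest of the system beyond fixing the endpoints, and simultaneous satisfiability is equivalent to the chosen $s_0$ and $s_{\n}$ meeting the observations. Everything else reduces directly to the preceding theorem (and, through it, to Theorems~\ref{Completeness} and~\ref{correctB1}) and to the $S\mapsto\sigma_S$ correspondence, so no new estimates or constructions are required.
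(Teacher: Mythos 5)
Your proposal is correct and matches the route the paper intends: the corollary is stated without proof precisely because it follows immediately from the preceding theorem together with the observation that $\sigma_{S}$ satisfies $C^{u}$ iff $Lit(u)\models C$, which is exactly the reduction you carry out. Your explicit note that the \codetext{initially}/\codetext{goal} conjuncts only pin down the endpoint variables is a reasonable (and harmless) elaboration of the same argument.
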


\subsection{Mapping the Model to CLP(FD)}\label{CLPMappingB}

The modeling described in \Sect{modelingB} has been translated into a concrete implementation using
SICStus Prolog. In this translation, constrained CLP variables directly
reflect the Boolean variables modeling fluents and action's occurrences.
Consequently, causal laws and executability conditions
are directly translated into CLP constraints (and inherit the corresponding completeness and soundness
results).
In this section we highlight the main aspects of the implementation---while
the complete code can be found at~\sito.

A plan with exactly $\n+1$ states, $p$  fluents, and $m$  actions is represented by:
\begin{itemize}
\item
A list, called \codetext{States}, containing $\n+1$ lists, each
composed  of $p$ terms of the form
\codetext{fluent(fluent\_name, Bool\_var)}.
The variable of the $i^{th}$ term in the $j^{th}$ list is
assigned $1$ if and only if the
$i^{th}$ fluent is true  in the $j^{th}$ state of the trajectory.
For example, if we have $\n=2$ and the fluents \codetext{f}, \codetext{g},
and \codetext{h}, we have:

{\codesize\begin{verbatim}
States = [[fluent(f,X_f_0),fluent(g,X_g_0),fluent(h,X_h_0)],
          [fluent(f,X_f_1),fluent(g,X_g_1),fluent(h,X_h_1)],
          [fluent(f,X_f_2),fluent(g,X_g_2),fluent(h,X_h_2)]]
\end{verbatim}}

\item
A list \codetext{ActionsOcc}, containing  $\n$ lists, each
composed  of $m$ terms of the form
\codetext{action(action\_name,Bool\_var)}.
The variable of the $i^{th}$ term of the $j^{th}$ list is
assigned $1$
if and only if the $i^{th}$ action
occurs during the transition from state $j$ to state $j+1$.
For example, if we have $\n=2$ and the actions are \codetext{a} and \codetext{b}, then:

{\codesize\begin{verbatim}
ActionsOcc = [[action(a,X_a_1),action(b,X_b_1)],
              [action(a,X_a_2),action(b,X_b_2)]]
\end{verbatim}}
\end{itemize}
The planner makes use of these structures in the construction of the
plan; appropriate constraints are set between the various Boolean
variables to capture their relationships.
For each list in \codetext{ActionsOcc},
exactly one \codetext{action(a$_i$,VA$_i$)} contains a variable
that is assigned the value $1$ (cf.,~constraint~(\ref{unica_azione})).

\begin{figure}
\begin{center}
\begin{codice}{0}
clpplan(N, ActionsOcc, States) :- \\
\tab  setof(F, fluent(F), Lf),\\
\tab  setof(A, action(A), La),\\
\tab   make\_states(N, Lf, States),\\
\tab   make\_action\_occurrences(N, La, ActionsOcc),\\
\tab   setof(F, initially(F), Init),\\
\tab   setof(F, goal(F), Goal),\\
\tab   set\_initial(Init, States),\\
\tab   set\_goal(Goal, States),\\
\tab   set\_transitions(ActionsOcc, States),\\
\tab   set\_executability(ActionsOcc, States),\\
\tab   get\_all\_actions(ActionsOcc, AllActions),\\
\tab   labeling(AllActions).
\end{codice}
\end{center}
\caption{\label{mainplan}Main predicate of the CLP(FD) planner.}
\end{figure}

The CLP implementation of the
$\mathcal{B}$ language assumes that
the action description is encoded as Prolog facts---observe
that the syntax of  $\mathcal{B}$
is compliant with Prolog's syntax,
allowing us to directly store the domain description as rules and
facts in the Prolog database.
The entry point of the planner is shown in \Fig{mainplan}.

The main predicate is \codetext{clpplan(N, ActionsOcc, States)}
(line (1)) that computes a plan of length $\n$ for the action description present
in the Prolog database. If such a plan exists,
the variables in \codetext{ActionsOcc} and  \codetext{States} will be instantiated
so as to describe the found trajectory.

Lines (2) and (3)
collect the lists of all fluents (\codetext{Lf}) and all actions
(\codetext{La}). Lines (4) and (5) are used for the
creation of the  lists \codetext{States}
and \codetext{ActionsOcc}.
In particular, all the
variables for fluents and actions are declared
as Boolean variables. Furthermore, a constraint is added to enforce
that in every state transition, exactly
one action can be fired.

Lines (6) and (7) collect the description of the initial state (\codetext{Init}) and the
required content of the final state (\codetext{Goal}). These information
are then added to the Boolean variables related to the first and last state,
respectively, by the predicates in lines (8) and~(9).

Lines (10) and (11) impose the constraints on state transitions and action executability,
as described in \Sect{modelingB}. We will give more details on this part below.

Line (12) gathers all variables denoting action occurrences, in preparation
for the labeling phase (line (13)). Note that the labeling is focused on the
selection of the action to be executed at each time step.
Some details on the labeling strategy are discussed in \Sect{sec:experimental}.
Please observe that in the code of \Fig{mainplan} we  omit the parts
concerning delivering the results to the user.

The main constraints are added by the predicate \codetext{set\_transitions}.
The process is based on a recursion across
fluents and consecutive states. The predicate
\codetext{set\_one\_fluent}
is called (see \Fig{setonefluent}) at the core of the recursion.
Its parameters are the fluent \codetext{F}, the starting state
\codetext{FromState}, the next state \codetext{ToState}, the list
\codetext{Occ} of action variables, and
finally the variables \codetext{IV} and \codetext{EV},
related to the value of the fluent \codetext{F}
in \codetext{FromState} and \codetext{ToState}, respectively
(see also \Fig{act_fig}).

\begin{figure}
\begin{codice}{13}
set\_one\_fluent(F, IV, EV, Occ, FromState, ToState) :- \\
\tab    findall([X,L], causes(X,F,L), DynPos),\\
\tab    findall([Y,M], causes(Y,neg(F),M), DynNeg),\\
\tab    dynamic(DynPos, Occ, FromState, DynP, EV),\\
\tab    dynamic(DynNeg, Occ, FromState, DynN, EV),\\
\tab    findall(P, caused(P,F), StatPos),\\
\tab    findall(N, caused(N,neg(F)), StatNeg),\\
\tab    static(StatPos, ToState, StatP, EV),\\
\tab    static(StatNeg, ToState, StatN, EV),\\
\tab    bool\_disj(DynP, StatP, PosFired),\\
\tab    bool\_disj(DynN, StatN, NegFired),\\
\tab    PosFired*NegFired  \verb"#=" 0,\\
\tab    EV \verb"#<=>"  PosFired \verb"#\/" (\verb"#\" NegFired  \verb"#/\" IV).
\medskip\\
dynamic([], \_, \_, [], \_).\\
dynamic([[Action,Precondition]|R], Occ, FromState, [Flag|Flags], EV)~:-\\
\tab         member(action(Action,VA), Occ),\\
\tab         get\_precondition\_vars(Precondition, FromState, ListPV),\\
\tab         length(ListPV, NPrec),\\
\tab         sum(ListPV, SumPrec),\\
\tab         (VA  \verb"#/\" (SumPrec \verb"#=" NPrec)) \verb"#<=>" Flag,\\
\tab          dynamic(R, Occ, FromState, Flags, EV).
\medskip\\
static([], \_, [], \_).\\
static([Cond|Others], ToState, [Flag|Flags], EV) :- \\
\tab         get\_precondition\_vars(Cond, ToState, ListPV),\\
\tab         length(ListPV, NPrec),\\
\tab         sum(ListPV, SumPV),\\
\tab         (SumPV \verb"#=" NPrec) \verb"#<=>" Flag,\\
\tab     static(Others, ToState, Flags, EV).
\end{codice}
\caption{\label{setonefluent}Transition from state to state.}
\end{figure}

For a given fluent \codetext{F},
the predicate \codetext{set\_one\_fluent} collects the list \codetext{DynPos}
(respectively \codetext{DynNeg})
of all the pairs \codetext{[Action,Preconditions]}
such that the dynamic action
\codetext{Action} makes \codetext{F} true (respectively false) in the state transition
(lines (15) and (16)).
The variables involved are then constrained by the procedure \codetext{dynamic}
(lines (17) and (18)).

Similarly, the static causal laws are handled
by collecting the lists of conditions that affect the truth value of a fluent \codetext{F}
(i.e., the variables  \codetext{StatPos} and \codetext{StatNeg}, in lines (19)--(20))
and constraining them through the procedure
\codetext{static} (lines (21) and (22)).
The disjunctions of all the positive and negative conditions are collected
in lines (23) and (24) and stored in \codetext{PosFired}
and \codetext{NegFired}, respectively.

Finally, lines (25) and (26) take care of the relationships between all these variables.
Line (25) implements the constraint (\ref{per}) for
the state \codetext{ToState} of \Fig{cieffe}, stating
that we do not want inconsistent action theories.
If \codetext{PosFired} and \codetext{NegFired} are both false, then
$\codetext{EV}=\codetext{IV}$ (inertia).
Precisely, a fluent is true in the next state (\codetext{EV})
if and only if there is an action or a static causal law making it true (\codetext{PosFired})
or it was true in the previous state (\codetext{IV}) and no causal law makes it false.

Let us consider the predicate \codetext{dynamic} (see line (27) in \Fig{setonefluent}).
It recursively processes a list of pairs \codetext{[Action,Preconditions]}.
The variable \codetext{VA} associated to the execution of action \codetext{Action}
is retrieved in line (29).
The variables associated to its preconditions are retrieved from
state \codetext{FromState} and collected in \codetext{ListPV} in line~(30).
A precondition holds if and only if all the variables in the list
\codetext{ListPV} are assigned value~1, i.e.,
when their sum is equal to the length, \codetext{NPrec}, of the
list \codetext{ListPV}.
If (and only if) the action variable \codetext{VA} is true and
the preconditions holds, then there is an action effect (line~(33)).

Similarly, the predicate
\codetext{static} (line (35) in \Fig{setonefluent}) recursively processes a list of
preconditions.
The variables involved in each of such precondition \codetext{Cond} are retrieved from the
state \codetext{ToState} and collected in \codetext{ListPV} (line (37)).
A precondition holds if and only if all the variables in the list \codetext{ListPV} have
value~1, i.e.,
when their sum is equal to the length, \codetext{NPrec}, of \codetext{ListPV}.
This happens if and only if there is a static action effect
(see  line (40)).

\begin{figure}
\begin{codice}{41}
set\_executability\_sub([], \_, \_).\\
set\_executability\_sub([[Act,C]|CA], ActionsOcc, State) :-  \\
\tab    member(action(Act,VA), ActionsOcc),\\
\tab    preconditions\_flags(C, State, Flags),\\
\tab    bool\_disj(Flags, F),\\
\tab    VA \verb"#==>" F,\\
\tab    set\_executability\_sub(CA, ActionsOcc, State).\\
preconditions\_flags([], \_, []).\\
preconditions\_flags([C|R], State, [Flag|Flags]) :- \\
\tab      get\_precondition\_vars(C, State, Cs),\\
\tab      length(Cs, NCs),\\
\tab      sum(Cs, SumCs),\\
\tab      (NCs \verb"#=" SumCs)  \verb"#<=>" Flag,\\
\tab      preconditions\_flags(R, State, Flags).
\end{codice}
\caption{\label{execfig}Executability conditions.}
\end{figure}

Executability conditions are handled as follows. For each state transition
and for each action \codetext{Act}, the predicate
\codetext{set\_executability\_sub} is called (see \Fig{execfig}).
The variable \codetext{VA},
encoding the application of an action \codetext{Act}
is collected in line (44).
A precondition hold if and only if the sum of the (Boolean) values of its fluent literals
equals their number (lines (52)-(54)).
The variable \codetext{Flags} stores the list of these conditions
and the variable \codetext{F} their disjunction.
If the action is executed (\codetext{VA = 1}, see line (47)),
then at least one of the executability conditions must hold.

\section{The Action Language with Constraints on Multi-valued Fluents}\label{motivat}

As a matter of fact, constraints represent a very declarative notation to express relationships
between
un\-knowns.
As such, the ability to use them directly in an action language
greatly enhances the declarative and expressive power of the language,
facilitating
the encoding of complex action domains, such as those involving multi-valued
fluents.
Furthermore, the encoding of an action theory using multi-valued fluents
leads to more concise and more efficient representations
and
better exposing non-determinism (that could be exploited, for example,
by a parallel planner). Let us consider some representative examples.

\begin{example}[Maintenance Goals]\label{ExaMaintenanceGoals}
It is not uncommon to encounter planning problems where along with the
type of goals described earlier (known as \emph{achievement} goals), there
are also \emph{maintenance} goals, representing properties that must persist
throughout the trajectory.
Constraints are a natural way of encoding maintenance properties, and can be
introduced along with simple temporal operators. E.g., if the fluent
\codetext{$fuel$}
represents the amount of fuel available, then the maintenance goal which
guarantees
that we will not be left stranded could be encoded as:
\codetext{always($fuel>0$)}.
\eoe
\end{example}

\begin{example}[Control Knowledge]\label{ExaControlKnowledge}
Domain-specific control knowledge can be formalized as constraints that we
expect to be satisfied by all the trajectories. For example, we may know that
if a certain action occurs at a given time step  (e.g., \codetext{$ingest\_poison$})
then at the next time step we will always perform the same action
(e.g., \codetext{$call\_doctor$}). This could be encoded as
\begin{center}
\codetext{caused([occ($ingest\_poison$)], occ($call\_doctor$)$^{1}$)}
\end{center}

\noindent
where $\codetext{occ}(a)$ is a fluent describing the occurrence of the action $a$
and $f^1$ indicates that the fluent $f$ should hold at the
next time step.
\eoe
\end{example}

\begin{example}[Delayed Effect]\label{ExaDelayedEffect}
Let us assume that the action
\codetext{$request\_reimbursement$}
 has
a delayed effect (e.g., the increase by \$50 of
\codetext{$bank\_account$} after 30
time units). This could be
expressed as a dynamic causal law:
\begin{center}
\codetext{causes($request\_reimbursement$,incr($bank\_account$,$50$)$^{30}$,[])}
\end{center}
where \codetext{incr} is a constraint introduced to deal with additive
computations---in a way closer to $\mathcal{B}$'s syntax we should write:
\begin{center}
\codetext{causes($request\_reimbursement$,$bank\_account^{30}$ =
$bank\_account+50$,[])}.
\end{center}
This is a particular case of additive fluents~\cite{additive}.
\eoe
\end{example}

In what follows we introduce the action description language \BMV\
in which multi-valued fluents are admitted and constraints are first-class components
in the description of planning problems.
The availability of multi-valued constraints enables a number of immediate language
extensions and improves the expressive power of the overall framework.

Action description languages such as $\mathcal{B}$
rely on the common assumption, traditionally referred to as
\emph{Markovian property} in the context of systems and control theory: the
executability of an action and its effects depend exclusively on the
current state of the world~\cite{gabaldon,elephant}.
Nevertheless, it is not uncommon to encounter real world
situations where such property is not satisfied, i.e., situations
where the executability and/or the effects of an action depend not only on
what holds in the current situation, but also on whether some conditions
were satisfied at a previous point in time. For example, an agent controlling
access to a database should forbid access if in the recent past three failed
password submission attempts have been performed by the user.

Although non-Markovian preconditions and effects can be expressed in
a Markovian theory through the introduction of additional fluents (and
a correct handling of inertia), the resulting theory can become  significantly
larger and less intuitive.
An alternative solution consists of admitting past references
in modeling such kind of situations.
In this frame of mind, \BMV\ allows timed references to past points in time
within constraints, i.e., non-Markovian expressions that might
involve fluents' values.
Effects of dynamic laws that involves future references might
also be specified.
As a further feature the \BMV\ language admits the specification
of global constraints (involving absolutely specified points in time)
and costs for actions and plans.

The resulting description language
supports all the kind of modeling and reasoning outlined in the above
Examples~\ref{ExaMaintenanceGoals}--\ref{ExaDelayedEffect}.

\medskip

In the next sections,
we first introduce the syntax of the full-blown action description
language \BMV (\Sect{BMVsyntax}). In \Sect{BMVsemantics}
we will develop the semantics and the constraint-based
abstract implementation of this new language. In doing this,
for the sake of readability, we proceed incrementally in order
to focus on the main points and features of the framework.
We first consider the sub-language \BMVi\ obtained from \BMV\
by disallowing  timed references (\Sect{MarkovianBMV});
in \Sect{sema_2}, we treat the general case dealing with
past and future references.  The abstract implementation is provided in \Sect{abstractImplBMV}.
Finally, we give the semantics to the complete language involving cost and
global constraints (\Sect{sema_4}).

\section{The Language \BMV}\label{BMVsyntax}

As for $\mathcal{B}$, the action signature consists of a set $\mathcal{F}$  of fluent names,
a set $\mathcal{A}$ of action names,
and a set $\mathcal{V}$ of values for fluents in $\mathcal{F}$. In the following
we assume that $\mathcal{V}\subseteq\z$.

In an action domain description,
an assertion (\emph{domain declaration}) of the type
$$\codetext{fluent}(f, \{d_1,\ldots,d_k\})$$
declares that $f$ is a fluent
and that its set of values is $\{d_1,\ldots,d_k\}$; we refer
to the set $\{d_1,\dots,d_k\}$ as the \emph{domain} of $f$.
We also admit the simplified notation
$\codetext{fluent}(f, d_1, d_2)$
to specify all the integer values in the interval $[d_1,d_2]$ as admissible
(with $d_1 \leq d_2$).

An \emph{annotated fluent} \,(\codetext{AF})\, is an expression
$f^t$, where $f$ is a fluent and $t\in \interi$.
We will often denote $f^0$ simply by~$f$.
Intuitively speaking, if $t<0$ then
$f^t$ denotes the value that the fluent $f$ had $t$ steps
ago in the past; similarly, if $t > 0$, then $f^t$ denotes
the value $f$ will have $t$ steps in the future. We refer to
annotated fluents with $t>0$ as positively annotated fluents.

Annotated fluents can be used in \emph{Fluent Expressions} (\codetext{FE}),
which are defined inductively as follows:
$$ \codetext{FE} ::=
d \:\: |\:\:
\codetext{AF} \:\: |\:\:
\codetext{FE}_1 \oplus \,\codetext{FE}_2
\:\: |\:\:
- (\codetext{FE})
\:\: |\:\:
\codetext{abs}(\codetext{FE})
\:\: |\:\:
\codetext{rei}(\codetext{FC})
$$
where $d \in\mathcal{V}$ and $\oplus \in \{+,-,*,/,\codetext{mod}\}$.
\codetext{FC} is a fluent constraint (see below).
We refer to
the fluent expressions \codetext{rei(FC)} as the \emph{reification} of the
fluent constraint \codetext{FC}---its formal semantics is given in \Sect{reificaSem}.

Fluent expressions can be used to build
\emph{primitive fluent constraints} (\codetext{PC}), i.e., formulae of the
form  $\codetext{FE}_1 \:\op\: \codetext{FE}_2$,
where $\codetext{FE}_1$ and $\codetext{FE}_2$ are fluent expressions,
and $\op$ is a relational operator, i.e.,
$\op \in \{=, \neq, \geq, \leq, >, <\}$.
\emph{Fluent constraints} are propositional combinations of
primitive fluent constraints:
$$\begin{array}{rcl}
\codetext{PC} & ::= &
\codetext{FE}_1 \,\op \,\codetext{FE}_2\\
\codetext{C} & ::= &
\codetext{PC}
\:\: |\:\:
\neg\codetext{C}
\:\: |\:\:
\codetext{C}_1 \wedge \codetext{C}_2
\:\: |\:\:
\codetext{C}_1 \vee \codetext{C}_2
\end{array}
$$

The constant symbols \codetext{true} and \codetext{false} can be used
as a shorthand for true constraints (e.g., $d=d$, for some $d\in\mathcal{V}$)
and unsatisfiable constraints (e.g., $d \neq d$).

The language \BMV\ allows one to specify an
action domain description, which relates
actions, states, and fluents using  axioms of the following forms
($PC$ denotes a primitive fluent constraint, while
$C$ is a fluent constraint).
\begin{itemize}
\item Axioms of the form
$\codetext{executable}(a, C)$,
 stating  that
the fluent constraint
$C$
has to be satisfied by the current state
for the action $a$ to be executable.

\item Axioms of the form
$\codetext{causes}(a, PC, C)$
encode  dynamic causal laws. When the action $a$ is
executed, if the constraint $C$ is
satisfied by the current state, then state produced
by the execution of the action is required to satisfy
the primitive fluent constraint~$PC$.

\item Axioms of the form
$\codetext{caused}(C_1, C_2)$
describe  static causal laws.
If the fluent constraint
$C_1$ is satisfied in a  state,
then the  constraint $C_2$
must also hold in such  state.
\end{itemize}
An \emph{action domain description} of \BMV\ is a tuple
$\langle \mathcal{DL}, \mathcal{EL}, \mathcal{SL}
\rangle$, where
$\mathcal{EL}$ is a set of executability conditions,
$\mathcal{SL}$ is a set of static causal laws, and
$\mathcal{DL}$ is a set of  dynamic causal laws.
In the following, we assume that positively annotated
fluents can occur only in the effect part of dynamic
causal laws.

A specific instance of a planning problem
is a pair $\langle\mathcal{D}, \mathcal{O}\rangle$, where
$\mathcal{D}$ is an action domain description and $\mathcal{O}$ contains any
number of axioms of the form
$\codetext{initially}(C)$ and
$\codetext{goal}(C)$, where $C$ is a
fluent constraint.

\begin{example}
\label{rico2}
A sample action theory in \BMV\ is:
\begin{center}
\begin{tabular}{lcl}
\codetext{fluent($f$, $\{1,2,3,4,5\}$).} &&\\
\codetext{fluent($g$, $\{1,2,3,4,5\}$).} && \\
\codetext{fluent($h$, $\{1,2,3,4,5\}$).} &&\\
\codetext{causes($a$, $f=g+2$, $g<3$).}&&\\
\codetext{executable($a$, true).}      && \\
\codetext{initially($f=1$).}&&\\
\codetext{initially($g=1$).}&&\\
\codetext{initially($h=1$).}&&\\
\codetext{goal($f=5$).} &&
\end{tabular}
\end{center}
\eoe
\end{example}

Notice that, for any given dynamic law
$\codetext{causes}(a, PC, C)$, such that~$a$ is executed in a state~$u$ satisfying~$C$,
the constraint $PC$ has to be evaluates/satisfied in the target state~$v$. Hence, the (relative)
timed references occurring in $PC$ (respectively, in $C$) are resolved
with respect to~$v$ (resp.,~$u$).
On the other hand, for a static law $\codetext{caused}(C_1, C_2)$, relative timed references
of both $C_1$ and $C_2$ have to be resolved with respect to the current state.

\subsection{Absolute Temporal References}\label{absolutetemporalconstraints}

The language \BMV\ allows the definition of \emph{absolute temporal
constraints}, i.e., constraints that refer to specific moments in time
in the trajectory (by associating the time point $0$ to the initial state).
differently from the case of annotated fluents, where
points in time are \emph{relative} to the current state.
A \emph{timed fluent} is defined as an expression of the form
$$\codetext{FLUENT} \,\verb"@"\, \codetext{TIME}.$$
Timed fluents can be used to build \emph{timed fluent expressions} ($TE$) and
\emph{timed primitive constraints} ($TC$), similarly to what done for normal fluents.
For instance, the constraint
$$f{\verb"@"}2 < g{\verb"@"}4$$
states that the value the fluent $f$ has at time $2$ in the plan
is less than the value that the fluent $g$ has at time~4.
Similarly, $h{\verb"@"}2 = 3$ imposes that the fluent $h$ must assume value $3$
at time~$2$.

Timed constraints can be used in the following kind of assertion:
$$\codetext{time\_constraint}(TC)$$
The assertion requires the timed constraint $TC$ to hold.

Some other accepted constraints are:
\begin{itemize}
\item
$\codetext{holds}(FC, n)$: this constraint is
a particular case of the previous one.
It is satisfied if the
primitive fluent constraint $FC$ holds in the
$n^{th}$ state.
It is therefore a
generalization of the \codetext{initially} axiom.
Observe that assertions of this kind can be used to guide
the search of a plan by adding some point-wise information
about the states occurring along the computed trajectory (e.g.,
this is useful to implement the landmarks model as used
in the FF planner~\cite{ff}).
\item
$\codetext{always}(FC)$: this constraint
imposes the condition that the fluent constraint $FC$
holds in all the states.
Observe  that $FC$ has to be evaluated in all states,
and its evaluation is strict---i.e., any reference to
fluents outside the time limits leads to
the satisfaction of the constraint; hence, annotated fluents
should be  avoided in $FC$.
\end{itemize}
In specifying  a planning problem
 $\langle\mathcal{D}, \mathcal{O}\rangle$, we can consider such kinds of assertions
as part of the observations $\mathcal{O}$.

\begin{example}
Let us consider the case of an agent that has a certain amount of money
(e.g., $\$5,000$) to invest; she is interested in purchasing as many
stocks as possible. The stocks can be purchased from three trading agencies
($1$, $2$, and $3$); each agency has $1,000$ stocks available at $\$2$ each. The
stocks have to be purchased in separate transactions, but each trading agency
require the agent to have a balance of at least $\$2,000$ at the start of
the day before agreeing in the transaction. A purchase can be of at most $3,000$
shares at a time.

We can model this problem with the following fluents:
{\small$$\begin{array}{lcl}
\codetext{fluent}(money,0, 5000). & & \codetext{fluent}(have(stock1),0,1000).\\
\codetext{fluent}(have(stock2),0,1000). &               & \codetext{fluent}(have(stock3),0,1000).\\
\codetext{fluent}(available(stock1),0, 1000). & & \codetext{fluent}(available(stock2),0,1000).\\
\codetext{fluent}(available(stock3),0,1000). \\
\codetext{fluent}(price(stock1),2,2).         &                 & \codetext{fluent}(price(stock2),2,2).\\
\codetext{fluent}(price(stock3),2,2)).        &                &\\
\end{array}$$
}

\noindent
The only action is
\[ \codetext{action$(buy(StockType, N)) :-  N > 0, N < 3000.$} \]
The executability condition for the action captures one property: the agent is
accepted by the trading agency.
\[
\codetext{executable$(buy(Type,N), money @0 > 2000 \wedge money > N *price(Type))$}.
\]
The dynamic causal law for this action is:
\[
\begin{array}{l}
\codetext{causes$(buy(Type,N), money = money-N*price(Type), \codetext{true})$}.\\
\codetext{causes$(buy(Type,N), have(Type) = have(Type)+N, \codetext{true})$}.
\end{array}
\]

The initial state can be described as
{\small$$\begin{array}{lcl}
\codetext{initially}(price(stock1) = 2). &  & \codetext{initially}(price(stock2) = 2).\\
\codetext{initially}(price(stock3) = 2). & & \codetext{initially}(have(stock1)=0).\\
\codetext{initially}(have(stock2) = 0). & & \codetext{initially}(have(stock3) = 0).\\
\codetext{initially}(money = 5000). && \codetext{initially}(available(stock1) = 1000).\\
\codetext{initially}(available(stock2)=1000). && \codetext{initially}(available(stock3) = 1000).\\
\end{array}$$}
\eoe
\end{example}

\subsection{Cost Constraints}\label{costconstraints}

In \BMV\ it is possible to specify information about the \emph{cost}
of each action and about the \emph{global cost} of a plan (that is
defined as the sum of the costs of all its actions). This type of information
are useful to explore the use of constraints in determining
\emph{optimal} plans.

The cost of actions is expressed using  assertions of the
following forms
(where $FE$ is a fluent expression built using the fluents
present in the state):
\begin{itemize}
\item
$\codetext{action\_cost}(a,FE)$ specifies the cost of the
execution of the action $a$ as result of the expression $FE$.
\item
$\codetext{state\_cost}(FE)$
specifies the cost of a state as the result of the
evaluation of $FE$.
\end{itemize}
Whenever, for an action or a state, no cost declaration is provided,
a default cost of $1$ is assumed.
Once we have provided the costs for actions and states, we can impose constraints
on the cumulative costs of specific states or complete trajectories.
This can be done in  \BMV\ using assertions of the following types
(where $k$ is a number and \op\ a relational operator):
\begin{itemize}
\item
$\codetext{cost\_constraint}(\codetext{plan}\:\op\:k)$;
the assertion adds a constraint on the global cost of the plan.
\item
$\codetext{cost\_constraint}(\codetext{goal}\:\op\:k)$;
the assertion imposes a constraint on the global cost of the final state.
\item
$\codetext{cost\_constraint}(\codetext{state(i)}\:\op\:k)$;
the assertion imposes a constraint on the global cost of the \codetext{i}$^{th}$ state
of the trajectory.
\end{itemize}
As an immediate generalization of the above constraints, we admit
assertions of the form
~$\codetext{cost\_constraint}(C)$,~
where $C$ is a constraint, possibly involving fluents,
where the atoms \codetext{plan}, \codetext{goal}, and \codetext{state(i)}
might occur in any place where a fluent might---intuitively representing
the cost of a plan, of the goal state, and of the \codetext{i}$^{th}$
state, respectively.

Some directives can be added to an action theory to select optimal
solutions with respect to the specified costs:
$$\codetext{minimize\_cost}(FE),$$
where $FE$ is an expression involving the atoms \codetext{plan},
\codetext{goal}, and \codetext{state(i)},
and possibly other fluents.
This assertion constrains the search to determine  a plan
that minimizes the value of the expression $FE$.
For instance, the two assertions
~$\codetext{minimize\_cost}(\codetext{plan})$~
and
~$\codetext{minimize\_cost}(\codetext{goal})$~
 constrain the search of a plan
with minimal global cost and with minimal cost of the goal state, respectively.

We provide a more precise semantics for all these assertions in \Sect{sema_4}.
In specifying  a planning problem
 $\langle\mathcal{D}, \mathcal{O}\rangle$, we consider cost constraints as
part of the observations~$\mathcal{O}$.

\section{Semantics and Abstract Implementation of \BMV}\label{BMVsemantics}
We will build the semantics of the language \BMV\ incrementally. We will
start by building the semantics
for the sub-language of \BMV\ devoid of any form of time reference
and cost constraints (\Sect{MarkovianBMV}).
This core language is called \BMVi.
The subsequent \Sects{sema_2}--\ref{sema_4} treat the full \BMV.

\subsection{Semantics for Timeless Constraints}\label{BMVisemantics}\label{MarkovianBMV}

Each fluent $f$ is uniquely assigned to a domain $\dom(f)$
in the following way:
\begin{itemize}
\item if $\codetext{fluent}(f,Set) \in \mathcal{D}$,
then $\dom(f) = Set$.
\end{itemize}
A function $v: \mathcal{F} \rightarrow \mathcal{V}\cup\{\bot\}$
is a \emph{state} if $v(f) \in \dom(f) \cup\{\bot\}$ for all $f \in \mathcal{F}$. The
special symbol $\bot$ denotes that the value of the fluent is undefined.
A state $v$ is \emph{complete} if for all $f\in \mathcal{F}$, $v(f) \neq \bot$.
For a number $\n\geqslant 1$, we define a \emph{state sequence} $\vec{v}$
as a tuple $\langle v_0, \dots, v_{\n}\rangle$ where each $v_i$
is a state.

Given a state ${v}$, and an expression $\varphi$, we define
the \emph{value} of $\varphi$ in ${v}$
(with abuse of notation, denoted by ${v}(\varphi)$)
as follows:\label{reificaSem}\footnote{The expression $|n|$ denotes
the (algebraic) absolute value of~$n$.}
\begin{equation}\label{eq:semOfBMV}
\begin{array}{lcl}
\bullet & \hspace{.2cm} & {v}(x) = x ~~ \mbox{ if $x \in \mathcal{V}$}
\\
\bullet & \hspace{.2cm} &{v}(f) = v(f) ~~ \mbox{ if $f \in \mathcal{F}$ (abuse of notation here)}
\\
\bullet & \hspace{.2cm} &{v}(-(\varphi)) = -({v}(\varphi))
\\
\bullet & \hspace{.2cm} &{v}(\codetext{abs}(\varphi)) = |({v}(\varphi))|
\\
\bullet & \hspace{.2cm} &{v}(\varphi_1 \oplus \varphi_2)=  {v}(\varphi_1) \oplus
{v}(\varphi_2)
\\
\bullet & \hspace{.2cm} &{v}(\codetext{rei}(C))= 1 ~~ \mbox{if ${v}\models C$}
\\
\bullet & \hspace{.2cm} &{v}(\codetext{rei}(C))= 0 ~~ \mbox{if ${v}\not\models C$}
\end{array}
\end{equation}
We treat the interpretation of the various $\oplus$
operations and relations as strict with respect to~$\bot$
(i.e., $\bot \oplus x = x \oplus \bot = \bot$, $\codetext{abs}(\bot)=\bot$, etc.).

The last two cases in~(\ref{eq:semOfBMV}) specify the semantics of reification.
Reified constraints are useful to enable reasoning about the satisfaction
state of other formulae.
The intuitive semantics is that a fluent expression $\codetext{rei}(C)$,
where $C$ is a fluent constraint,
assumes a Boolean value ($0$ or $1$) depending on the truth of~$C$.
Note that the semantics of reified constrains relies on the
notion of \emph{satisfaction}, which in turn is defined by structural induction on
constrains, as follows.
Given a primitive fluent constraint $\varphi_1 \:\codetext{op}\:\varphi_2$,
a state ${v}$  \emph{satisfies} $\varphi_1 \:\codetext{op}\:\varphi_2$, written
$ {v}\models  \varphi_1  \:\codetext{op}\:\varphi_2$, if and only if
it holds that
$ {v}(\varphi_1) \:\codetext{op}\:  {v}(\varphi_2) $
where the semantics of the arithmetic relators/operators is the usual one on $\z$.
If either $ {v}(\varphi_1)$ or $ {v}(\varphi_2)$ is $\bot$, we assume
that $ {v}\not\models \varphi_1\:\codetext{op}\:\varphi_2$
(and $ {v}\not\models \varphi_1\:\codetext{nop}\:\varphi_2$
where \codetext{nop} is the negation of the operator
\codetext{op}). Basically undefined formulas are neither proved nor
disproved.
The satisfaction relation $\models $ can be generalized to the case of
propositional combinations of fluent constraints in the usual manner.

\smallskip

Given a constraint $C$, let $\FV(C)$ be the set of fluents
occurring in it. A function $\sigma:\FV(C) \longrightarrow \mathcal{V}$
is a \emph{solution} of $C$ if $\sigma \models C$.
We denote the domain $\FV(C)$ of the function $\sigma$ as $\dom(\sigma)$.
In other words, a solution $\sigma$ of $C$ can be seen as a partial state satisfying $C$.
Observe that we require the solution to manipulate exclusively the fluents
that appear in the constraint.

\begin{example}
\label{rico1}
Let us consider an action theory over the fluents $f, g, h$, where each
fluent has domain $\{1, \dots, 5\}$. If $C$ is the constraint
$f > g+2$, then a solution of $C$ is $\sigma = \{f/5,g/2\}$. Note that
the substitution $\theta=\{f/5, g/2, h/1\}$ is not a solution of $C$, since
$\dom(\theta) \neq \FV(f > g+2)$.
\eoe
\end{example}

Let $\sigma$ be a solution of a constraint $C$ and $v$ a state,
with $\ine(\sigma,v)$ we denote the state obtained completing $\sigma$ in
$v$ \emph{by inertia}, as follows:
$$\ine(\sigma,v)(f) = \left\{ \begin{array}{ll}
          \sigma(f) & \mbox{if $f \in \dom(\sigma)$}\\
          v(f) & \mbox{otherwise}
          \end{array}
          \right.$$

\begin{example}
Let us continue with Example \ref{rico1}. If $\sigma = \{f/5,g/2\}$ and $v = \{f/1,g/1,h/1\}$, then
$\ine(\sigma,v) = \{f/5,g/2,h/1\}$.
\eoe
\end{example}

An action $a$ is
\emph{executable} in a state  ${v}$
if there is  an  axiom
$\codetext{executable}(a,C)$ such that
$ {v}\models  C$.

\begin{remark}
As for the case of the language $\mathcal{B}$,
also in \BMV\ the executability laws express
necessary but not sufficient preconditions
for action execution (cf., Remark~\ref{execVSnonexec}).
Moreover,  thanks to the generality of the
constraint language---i.e., any propositional combination of primitive
constraints can be used in \BMV---the \codetext{executable} laws also allow direct formulation of
non-executability conditions and the roles of the \codetext{executable}
and \codetext{nonexecutable} axioms coincide.
\end{remark}

Let us denote with $Dyn(a)$ the set of dynamic causal law axioms for
action $a$.
The \emph{effect} of executing $a$ in state $v$,
denoted by $\Eff(a, v)$, is a constraint defined as follows:
\[ \Eff(a, v) =\bigwedge \left\{ C \:|\:
        \codetext{causes}(a,C,C_1) \in Dyn(a),
         {v}\models  C_1\right\}. \]

\subsubsection{\BMVi\ without static causal laws}\label{sema_0}

Let us start by considering the simplified situation in which
$\mathcal{SL}=\emptyset$, i.e., no static causal laws are specified
in the domain description.

During the execution of an action \codetext{a},
a fluent has to be considered as inertial, provided that
it does not appear among the effects of the dynamic laws for \codetext{a}.
In other words, since these effects are expressed through constraints,
a fluent is inertial if it does not occur in any
of the constraints specified in the dynamic laws for \codetext{a}.

The description of the state transition system
corresponding to a given action description theory
$\langle \mathcal{DL}, \mathcal{EL}, \emptyset \rangle$
can be completed by defining the notion of transition.

A triplet $\langle v,a,v'\rangle$, where $v,v'$
are complete states and $a$ is an action, is a \emph{valid state
transition} if:
\begin{itemize}
\item the action $a$ is executable in $ {v}$, ~and
\item
$v'
= \ine(\sigma,v)$,
where $\sigma$ is a solution of the constraint
$\Eff(a,{v})$.
\end{itemize}

Let  $\langle \mathcal{D}, \mathcal{O}\rangle$ be an instance of a planning problem,
 $\vec{v} = \langle v_0, \dots, v_{\n}\rangle$ be a sequence
of complete states and $a_1,\ldots,a_{\n}$ be actions.
We say that $\langle v_0,a_1,v_1, \dots, a_{\n},v_{\n}\rangle$
is a \emph{valid trajectory} if:
\begin{itemize}

\item for each axiom of the form $\codetext{initially}(C)$ in $\mathcal{O}$,
we have that ${v}_0 \models~C$,
\item for each axiom of the form $\codetext{goal}(C)$ in $\mathcal{O}$, we have that
    ${v}_{\n} \models C$, and
\item for all $i \in \{0,\dots,\n-1\}$,
$\langle{v}_{i},a_{i+1},v_{i+1}\rangle$ is a valid state transition.
\end{itemize}

\begin{example}
Let us consider the Example \ref{rico2}. Observe that
$\langle \{f/1,g/1,h/1\}, a, \{f/5, g/3, h/1\}\rangle$ is a valid
trajectory.
\eoe
\end{example}

\begin{remark}
Given a planning problem $\langle\mathcal{D},\mathcal{O}\rangle$ in \BMVi,
differently from what happens in the case of $\mathcal{B}$,
a solution to a planning problem is described by a valid trajectory, not just by
a sequence of actions. This is the case because actions might have
non-deterministic effects. For instance, let us consider Example \ref{rico2}. If the action \codetext{a}
is executed and the precondition \codetext{g<3} holds, then the dynamic causal law
imposes the constraint \codetext{f=g+2} in the reached state. There are
many different ways to satisfy this requirement.
Hence, in general, a sequence of actions might not characterize a unique state sequence.

The same argument also applies to the action description language \BMV,
so in what follows we will consider the valid trajectories as the solutions of a
planning problem.
\end{remark}

\subsubsection{Abstract implementation in absence of static laws}\label{absConcBMVi}\label{nonMarkovianBMV}

In this section we propose
a constraint-based characterization of the
state transition system defined in \Sect{sema_0}.
Similarly to what we have done in the case of $\mathcal{B}$,
for any specific state, each fluent $f$ will be represented by
an integer-valued constraint variable.
Boolean variables will instead model the occurrences of actions.

Let $u$ be a state;
given a fluent $f$, we indicate with $F_f^u$ the
variable representing $f$ in $u$.
We generalize such a notation to any constraint $C$, i.e.,
we denote with $C^u$ the constraint obtained from $C$ by
replacing each fluent $f \in \FV(C)$
by $F_f^u$.
For each action $a \in \mathcal{A}$, a Boolean
variable $A_a^u$ is introduced,
representing whether the action
is executed or not in the transition from
$u$ to the next state.

Given a specific fluent $f$,
we develop a system of constraints to constrain the values of~$F_f^u$.
Let us consider the dynamic
causal laws that have $f$ within their consequences:
$$
\mathcal{DL}_f ~ = ~ \big\{\codetext{causes}(a_{i_{f,1}}, C_{{f,1}}, \alpha_{{f,1}}),~
\cdots,~ \codetext{causes}(a_{i_{f,m_f}}, C_{{f,m_f}}, \alpha_{{f,m_f}})
\big\}
$$

For each action $a$ we will have its executability
conditions:
$$
\mathcal{EL}_a ~ = ~ \big\{\codetext{executable}(a, \delta_{a,1}),~
\cdots,~ \codetext{executable}(a, \delta_{a,p_{a}})
\big\}
$$

\begin{figure}
\begin{center}
{\small\fbox{\begin{minipage}[c]{.87\textwidth}
\begin{eqnarray}
 F_f^v,F_f^u     &   \in         & \dom(f) \label{cieffebis9}\\
 A_a^u         & \rightarrow   & \displaystyle{\bigvee_{j=1}^{p_{a}} \delta^u_{a,j}}  \label{cieffebis10}\\
 A_{a_{i_{f,j}}}^u \wedge \alpha_{{f,j}}^u & \leftrightarrow & Dyn^u_{f,j}~~~~~~~~~\forall j \in \{1,\ldots,m_f\}  \label{cieffebis11}\\
Dyn^u_{f,j}  & \rightarrow   & C_{{f,j}}^v~~~~~~~~~~~~~~\forall j \in \{1,\ldots,m_f\}  \label{cieffebis12}\\
\displaystyle{\neg \bigvee_{j=1}^{m_f} Dyn^u_{f,j}} & \rightarrow & F_f^u = F_f^v \label{cieffebis13}
\end{eqnarray}
\end{minipage}}
}\end{center}
\caption{\label{cieffebis}The constraints $C^{u,v}_{f,a}$ for a state transition from $u$ to $v$,
for a fluent $f$.}
\end{figure}

\noindent
\Fig{cieffebis} describes the constraints
$C^{u,v}_{f,a}$
that can be used in encoding the relations that determine
the value of the fluent $f$ in the state $v$ (i.e., constrain the variable $F_f^v$)
w.r.t.\ the application of the
action $a$ in the state~$u$.
After the settings of the domains (by~(\ref{cieffebis9})),
we impose through~(\ref{cieffebis10}) that if  action $a$ is
executed, then at least one of the preconditions for its executability must hold in $u$.
For each $j\in\{1,\ldots,m_f\}$ the constraint~(\ref{cieffebis11}) defines a Boolean flag $Dyn_{f,j}^u$
that holds if and only if action $a_{i_{f,j}}$
is applicable in $u$ and the preconditions of the $j^{th}$ dynamic causal law
for $f$ holds in $u$.
The constraint~(\ref{cieffebis12}) requires that if $Dyn_{f,j}^u$ is true, then
the corresponding effects must
hold in the new state $v$. Finally, inertia constraints are set by means of~(\ref{cieffebis13}).

\smallskip

We will denote
with $C_{f}^{u,v}$ the conjunction of these constraints for all actions $a\in\mathcal{A}$.
Given an action domain specification over the signature $\langle \mathcal{V},
\mathcal{F}, \mathcal{A}\rangle$ and two states $u,v$,
 the system of constraints $C_{\mathcal{F}}^{u,v}$ includes:
\begin{itemize}
\item the constraint $C_{f}^{u,v}$ for each fluent literal
        $f$ in the language of $\mathcal{F}$
\item the constraint
$\sum_{a \in \mathcal{A}} A_a^u= 1$
(unique action execution in the state transition).
\end{itemize}

The next theorem states completeness and soundness of the encoding described so far.
We need a further piece of notation. Given two states $u,v$ and  an action~$a$,
let $C_{\mathcal{F}}^{u,a}$ be the constraint obtained from $C_{\mathcal{F}}^{u,v}$ by
setting $A_a=1$, $A_b=0$ for all $b\not=a$, and $F_f^u=u(f)$
for each fluent literal $f$.

\begin{theorem}\label{soundcompleteBMV0}
Let $\mathcal{D} = \langle \mathcal{DL},\mathcal{EL},\emptyset\rangle$
and let $u,v$ two states and $a$ an action.
Then
$\langle u,a,v\rangle$ is a valid
transition in the semantics of the language \BMVi\
if and only if $v$ represents a solution of the constraint
$C_{\mathcal{F}}^{u,a}$.
\end{theorem}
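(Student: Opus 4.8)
The plan is to prove the two directions in tandem, by first turning $C_{\mathcal{F}}^{u,a}$ into an explicit description of its solutions and then matching that description clause by clause with the definition of a valid state transition. First I would carry out the substitutions $A_a^u=1$, $A_b^u=0$ for $b\neq a$, and $F_f^u=u(f)$ that define $C_{\mathcal{F}}^{u,a}$, and note that the only remaining unknowns are the variables $F_f^v$ and the auxiliary flags $Dyn_{f,j}^u$, the latter being \emph{forced}: constraint~(\ref{cieffebis11}) makes $Dyn_{f,j}^u$ equal $1$ exactly when $a_{i_{f,j}}=a$ and $u\models\alpha_{f,j}$ (when $a_{i_{f,j}}\neq a$ the conjunct $A_{a_{i_{f,j}}}^u$ is $0$, so the flag is $0$). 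Hence ``$v$ represents a solution of $C_{\mathcal{F}}^{u,a}$'' amounts to four conditions: $v$ is a complete state with $v(f)\in\dom(f)$ for every $f$ [from~(\ref{cieffebis9})]; $u\models\bigvee_{j=1}^{p_a}\delta_{a,j}$, i.e.\ $a$ is executable in $u$ [from~(\ref{cieffebis10}) at $a$, the instances at $b\neq a$ being vacuous]; $v\models C_{f,j}$ for every dynamic law $\codetext{causes}(a, C_{f,j}, \alpha_{f,j})$ with $u\models\alpha_{f,j}$ [from~(\ref{cieffebis12})]; and $v(f)=u(f)$ for every fluent $f$ not occurring in the consequent of any such law [from~(\ref{cieffebis13})]. (As is implicit in the definition of state transition, $u$ is taken to be complete.)

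The link between this description and the semantic side rests on two elementary observations, which I would isolate as lemmas. The first is a \emph{relevance} property: if two states agree on $\FV(C)$ then one satisfies the fluent constraint $C$ iff the other does; this is a routine structural induction on $C$ following the evaluation clauses~(\ref{eq:semOfBMV}) and the definition of $\models$. The second identifies the family of ``active'' effect constraints with $\Eff(a,u)$: the conjunction $\bigwedge\{C_{f,j} : a_{i_{f,j}}=a,\ u\models\alpha_{f,j}\}$ is logically equivalent to $\Eff(a,u)$, and the union of the sets $\FV(C_{f,j})$ over the same index set equals $\FV(\Eff(a,u))$. This simply records that a single law $\codetext{causes}(a, PC, C_1)$ is listed once in $\mathcal{DL}_f$ for each fluent $f$ occurring in $PC$, so the active effect constraints are, up to repetition, the conjuncts of $\Eff(a,u)$, and a fluent lies in some active $C_{f,j}$ iff it lies in $\FV(\Eff(a,u))$.

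With these in place, both implications are bookkeeping. For ($\Rightarrow$), given a valid state transition $\langle u,a,v\rangle$ with witness $\sigma\models\Eff(a,u)$ and $v=\ine(\sigma,u)$: executability gives~(\ref{cieffebis10}); completeness of $v$ gives~(\ref{cieffebis9}); for~(\ref{cieffebis12}), each active $C_{f,j}$ is a conjunct of $\Eff(a,u)$ so $\sigma\models C_{f,j}$, and since $v$ agrees with $\sigma$ on $\FV(C_{f,j})\subseteq\FV(\Eff(a,u))=\dom(\sigma)$ the relevance lemma gives $v\models C_{f,j}$; and~(\ref{cieffebis13}) holds because a fluent all of whose flags are $0$ lies outside $\FV(\Eff(a,u))=\dom(\sigma)$ and therefore retains the value $u(f)$ under $\ine(\sigma,u)$. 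For ($\Leftarrow$), from a solution I set $\sigma:=v|_{\FV(\Eff(a,u))}$: (\ref{cieffebis9}) places $\sigma$'s values inside the domains; (\ref{cieffebis12}) plus the relevance lemma give $\sigma\models\Eff(a,u)$; (\ref{cieffebis13}) together with the second lemma give $v=\ine(\sigma,u)$ off $\FV(\Eff(a,u))$, while the very definition of $\sigma$ handles it on $\FV(\Eff(a,u))$; and~(\ref{cieffebis10}) gives executability, so $\langle u,a,v\rangle$ is a valid state transition. The one place needing care --- and the main obstacle --- is the ``exact-domain'' requirement on solutions: $\sigma$ must satisfy $\dom(\sigma)=\FV(\Eff(a,u))$ on the nose, so the splitting and recombining of multi-fluent effect constraints across the various $\mathcal{DL}_f$ must be stated at the level of fluent \emph{sets} (the second lemma), and the inertia clause~(\ref{cieffebis13}) must be matched against membership in that exact set rather than against the weaker statement ``$f$ occurs in some dynamic law for $a$''.
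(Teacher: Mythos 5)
Your proof is correct and follows essentially the same route as the paper's: both directions are established by matching the constraints (\ref{cieffebis9})--(\ref{cieffebis13}) clause by clause against the definition of a valid transition, with the witness $\sigma$ in the backward direction read off as (the restriction of) $v$ to $\FV(\Eff(a,u))$. Your two auxiliary lemmas (relevance, and the identification of the active effect constraints with $\Eff(a,u)$ together with the exact-domain bookkeeping) are left implicit in the paper's argument, so your write-up is if anything the more careful of the two.
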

\begin{proof}
\begin{description}
\item[($\Rightarrow$)]
Let $\langle u,a,v\rangle$ be a valid
transition. Then, $a$ is executable in $u$. Hence $u\models \delta_{a,j}$ for some
$j\in\{1,\ldots,p_{a}\}$ and~(\ref{cieffebis10}) is satisfied.
By the definition of state we have that~(\ref{cieffebis9}) is also satisfied.
Let  $v=\ine(\sigma,u)$ with $\sigma$ solution of $\Eff(a,u)$.

If $f$ is a fluent not belonging to $\dom(\sigma)$ then $f$ does not occur in $\Eff(a,u)$
and it is not affected by any dynamic causal law involved in the state transition.
By definition of $\ine(\cdot)$ we have that $v(f)=u(f)$ and this satisfies constraint~(\ref{cieffebis13}).
Satisfaction of constraints~(\ref{cieffebis11}) and~(\ref{cieffebis12}) is immediately verified
by observing that for all dynamic causal laws
$\codetext{causes}(a_{i_{f,h}}, C_{{f,h}}, \alpha_{{f,h}})$ having $f$ in $C_{{f,h}}$,
the constraint $\alpha_{{f,h}}$ is false in $u$. Then, the corresponding flag
$Dyn_{f,h}^u$ is set false by~(\ref{cieffebis11}). Consequently,~(\ref{cieffebis12})
is satisfied.

Assume now that $f$ is a fluent in $\dom(\sigma)$. This means that there are
dynamic causal laws $\codetext{causes}(a_{i_{f,h}}, C_{{f,h}}, \alpha_{{f,h}})$
such that $\alpha_{{f,h}}$ is true in $u$, for $h\in X=\{j_1,\ldots,j_r\}\subseteq\{1,\ldots,m_f\}$.
Consequently, the flag $Dyn_{f,h}^u$ is set true for $h\in X$ and false otherwise.
Since $\sigma$ is a solution of $\Eff(a,u)$, $v$ satisfies the constraint
$C_{{f,j}}^v$ for all $j\in X$.
This implies that~(\ref{cieffebis12}) is satisfied for each $j\in\{1,\ldots,m_f\}$.
Since some flags $Dyn_{f,i}^u$ are true constraint~(\ref{cieffebis13}) is satisfied too.

\item[($\Leftarrow$)]
Assume that $v$ satisfies the constraint $C_{\mathcal{F}}^{u,a}$.
By~(\ref{cieffebis10}), because $A_a=1$, some of the constraints $\delta_{a,h}^u$
is satisfied. Hence, action $a$ is executable in $u$.
By the satisfaction of~(\ref{cieffebis11})
and~(\ref{cieffebis12}), $v$ satisfies all constraints $C_{{f,j}}^v$ for which the
corresponding $\alpha_{{f,j}}^u$ is satisfied. Then, $v$ is a solution for $\Eff(a,u)$.
Consequently, since $v=\ine(v,u)$ (by definition, since $v$ is complete),
$\langle u,a,v\rangle$ is a valid transition.
\end{description}
\qed\end{proof}

Let $\langle\mathcal{D}, \mathcal{O}\rangle$ be an instance of a planning problem
where
$\mathcal{D}$ is an action description and~$\mathcal{O}$ contains any
number of axioms of the form
$\codetext{initially}(C)$ and
$\codetext{goal}(C)$.
We can state the following.
\begin{theorem}\label{completeBMV0soundBMV0}
There is a valid trajectory $\langle v_0,a_1,v_1,a_2,\ldots,a_{\n},v_{\n} \rangle$
if and only if
there is a solution for the constraint
$$
\bigwedge_{\codetext{{\scriptsize initially}}(C)\in\mathcal{O}}C^{v_0}
\:\wedge\:
\bigwedge_{j=0}^{\n-1}\big(C_{\mathcal{F}}^{v_j,v_{j+1}}\big)
\:\wedge\:
\bigwedge_{\codetext{{\scriptsize goal}}(C)\in\mathcal{O}}C^{v_{\n}}
$$
\end{theorem}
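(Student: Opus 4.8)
The plan is to reduce the statement to the single-transition equivalence already established in Theorem~\ref{soundcompleteBMV0} (which relates a valid transition $\langle u,a,v\rangle$ to the solutions of $C_{\mathcal{F}}^{u,a}$), and then to iterate this equivalence along the sequence $v_0,\ldots,v_{\n}$, treating the \codetext{initially} and \codetext{goal} axioms separately at the two ends. No genuine induction is needed beyond running the one-step argument for each $j \in \{0,\ldots,\n-1\}$, since the big conjunction is just the pointwise conjunction of the per-transition constraint systems together with the two boundary conjuncts $C^{v_0}$ and $C^{v_{\n}}$.

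For the forward direction, I would start from a valid trajectory $\langle v_0,a_1,v_1,\ldots,a_{\n},v_{\n}\rangle$ and build a single global assignment $\sigma$ on all variables occurring in the conjunction by setting $\sigma(F_f^{v_j}) = v_j(f)$ for every fluent $f$ and every $j$, and $\sigma(A_a^{v_j}) = 1$ if and only if $a = a_{j+1}$. Since each $v_j$ is a complete state, each $F_f^{v_j}$ receives a value in $\dom(f)$, and since exactly one action is executed per step, the conjunct $\sum_{a\in\mathcal{A}} A_a^{v_j} = 1$ holds. For each $j$, the restriction of $\sigma$ to the variables of $C_{\mathcal{F}}^{v_j,v_{j+1}}$ is precisely the assignment described in the definition of $C_{\mathcal{F}}^{v_j,a_{j+1}}$ (it pins the action to $a_{j+1}$ and the source values to those of $v_j$); as $\langle v_j,a_{j+1},v_{j+1}\rangle$ is a valid transition, Theorem~\ref{soundcompleteBMV0} gives that $v_{j+1}$ solves $C_{\mathcal{F}}^{v_j,a_{j+1}}$, hence $\sigma$ satisfies $C_{\mathcal{F}}^{v_j,v_{j+1}}$. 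Finally, validity of the trajectory gives $v_0\models C$ for every $\codetext{initially}(C)$ and $v_{\n}\models C$ for every $\codetext{goal}(C)$, so the conjuncts $C^{v_0}$ and $C^{v_{\n}}$ are satisfied too; the one point to note is that the variables $F_f^{v_j}$ shared between the two blocks $C_{\mathcal{F}}^{v_{j-1},v_j}$ and $C_{\mathcal{F}}^{v_j,v_{j+1}}$ automatically receive a consistent value, because $\sigma$ is a single assignment.

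For the backward direction, I would take a solution $\sigma$ of the big constraint and read off states by $v_j(f) = \sigma(F_f^{v_j})$. Constraint~(\ref{cieffebis9}), which is a conjunct of each $C_{\mathcal{F}}^{v_j,v_{j+1}}$, guarantees $v_j(f)\in\dom(f)$, so each $v_j$ is a complete state. For each $j$, the conjunct $\sum_{a\in\mathcal{A}} A_a^{v_j}=1$ forces $\sigma$ to make exactly one action variable true, say $A_{a_{j+1}}^{v_j}$; restricting $\sigma$ to the variables of $C_{\mathcal{F}}^{v_j,v_{j+1}}$ then exhibits a solution of $C_{\mathcal{F}}^{v_j,a_{j+1}}$ with induced target state $v_{j+1}$, so by Theorem~\ref{soundcompleteBMV0} the triple $\langle v_j,a_{j+1},v_{j+1}\rangle$ is a valid transition. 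The conjuncts $C^{v_0}$ and $C^{v_{\n}}$ give $v_0\models C$ for each $\codetext{initially}(C)$ and $v_{\n}\models C$ for each $\codetext{goal}(C)$, and collecting all of this yields that $\langle v_0,a_1,\ldots,a_{\n},v_{\n}\rangle$ is a valid trajectory.

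The only delicate bookkeeping — and the step I expect to require the most care, though it is still routine — is the interface between Theorem~\ref{soundcompleteBMV0}, which is phrased with the action occurrence and the source state pinned down in $C_{\mathcal{F}}^{u,a}$, and the global constraint, which leaves the action variables free and shares the fluent variables $F_f^{v_j}$ between consecutive one-step blocks. One must verify that ``$\sigma$ satisfies $C_{\mathcal{F}}^{v_j,v_{j+1}}$ and makes $A_{a_{j+1}}^{v_j}$ its unique true action variable'' is exactly the condition ``the target state induced by $\sigma$ solves $C_{\mathcal{F}}^{v_j,a_{j+1}}$'', after which Theorem~\ref{soundcompleteBMV0} applies verbatim to each $j$.
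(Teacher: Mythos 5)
Your proposal is correct and follows the same route as the paper, which proves this theorem in one line by "repeated applications of Theorem~\ref{soundcompleteBMV0}"; you have simply made explicit the bookkeeping (assembling a single global assignment in the forward direction, reading off states and the unique executed action per step in the backward direction, and matching the free-variable global constraint against the pinned constraint $C_{\mathcal{F}}^{v_j,a_{j+1}}$) that the paper leaves implicit. No gap.
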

\begin{proof}
The result follows from (repeated) applications of Theorem~\ref{soundcompleteBMV0}.
\qed\end{proof}

\subsubsection{Adding static causal laws}\label{sema_1}

In this section we consider the case of action theories
$\langle \mathcal{DL},\mathcal{EL},\mathcal{SL}\rangle$ of \BMVi,
involving static causal laws (i.e., such that $\mathcal{SL}\not=\emptyset$).

The presence of static laws requires refining the semantics
of the language, in order
to ensure proper treatment of inertia in the construction of a
valid trajectory.

We start by defining three operations $\cap,\cup$, and $\triangle$
on states, as follows:
$$
\begin{array}{rcl}
v_1\cup v_2 (f) & = & \left\{\begin{array}{ll}
                v_1(f) &  \textit{if }\:v_1(f)=v_2(f)\\
                v_1(f) &  \textit{if }\: v_2(f)=\bot\\
                v_2(f) &  \textit{if }\: v_1(f) = \bot\\
                \bot &  \textit{otherwise}
                 \end{array}
            \right. \\
v_1\cap v_2 (f) & = &
        \left\{\begin{array}{ll}
                v_1(f) &  \textit{if } \:v_1(f)=v_2(f)\\
                    \bot & \textit{otherwise}
                \end{array}
        \right.\\
\triangle(v_1,v_2,S)(f) & = &\left\{\begin{array}{cl}
v_1(f) & \mbox{\textit{if} $f \in S$}\\
v_2(f) & \mbox{otherwise}
\end{array}\right.
\end{array}$$
where the set $S$ used in $\triangle$ is a set of fluents.
Observe that $\ine(\sigma,v) = \triangle(\sigma,v,\dom(\sigma))$.

\medskip

A state $v$ is \emph{closed} w.r.t.\ a set of static causal laws
$$
\mathcal{SL}
=   \{\codetext{caused}(C_1,D_1),\dots, \codetext{caused}(C_k,D_k)\}
$$
if $v \models (C_1 \rightarrow D_1) \wedge \cdots \wedge
(C_k \rightarrow D_k)$. We denote this property as $v \models \mathcal{SL}$.

Given two states $v,v'$, a set of fluents $D$,
and a set $\mathcal{SL}$ of static causal  laws,
we say that $v'$ is \emph{minimally closed} w.r.t.\ $v,D,$ and $\mathcal{SL}$
if
\begin{itemize}
\item $v'\models\mathcal{SL}$ (i.e., $v'$ is closed) and
\item for all $S \subseteq D$, if $\triangle(v,v',S) \neq v'$
then $\triangle(v,v',S) \not\models \mathcal{SL}$.
\end{itemize}
The notion of minimally closed state is intended to capture the law of inertia,
w.r.t.\ a given set $D$ of fluents.
Notice, in fact, that $\triangle(v,v',\emptyset) = v'$.
Intuitively speaking,
$v'$ is minimally closed when it is obtainable from $v$ by applying
a minimal set of (necessary) changes in the values of the `inertial' fluents (those in $D$).
In other words, it is not possible to obtain from $v$ a state different from $v'$
and closed w.r.t.\ $\mathcal{SL}$, by applying ``fewer changes'' than those involved in
obtaining~$v'$.
A pictorial representation of $\triangle(v,v',X)$ is shown in \Fig{chiusuraminima}.

Observe that if $\mathcal{SL}=\emptyset$ then $v'$ is minimally
closed w.r.t.\ $v,D,$ and $\mathcal{SL}$
if and only if $v =v'$.

\begin{figure}
\begin{center}
\fbox{\begin{minipage}[c]{.8\textwidth}
\begin{tabular}{c}
\begin{pspicture}(-1.0,1)(8.5,4.4)
\put(0,0){\psscalebox{1.1}{\figurauno}}
\end{pspicture}
\\
~\\
\begin{pspicture}(-1.0,1)(8.5,4.4)
\put(0,0){\psscalebox{1.1}{\figuradue}}
\end{pspicture}
\end{tabular}
\end{minipage}}
\end{center}
\caption{\label{chiusuraminima}The set $\triangle(v,v',X)$ is obtained by combining a portion of $v$
and a portion of $v'$, depending on the third argument $X$, which acts as a regulator in ``mixing''
portions of $v$ and~$v'$.
The figure visualizes, in gray, the two sets $\triangle(v,v',D)$ (above)
and $\triangle(v,v',S)$ (below)
for $S\subseteq D\subseteq \mathcal{F}$ and illustrates the definition of minimal closure.
A state $v'$ is minimally closed if and only if $v'\models \mathcal{SL}$
and for all $S\subseteq D$, if $\triangle(v,v',D)\not=v'$ then
$\triangle(v,v',S)\not\models\mathcal{SL}$.
In both cases, the surrounding frame represents the set $\mathcal{F}$ of all fluents.}
\end{figure}
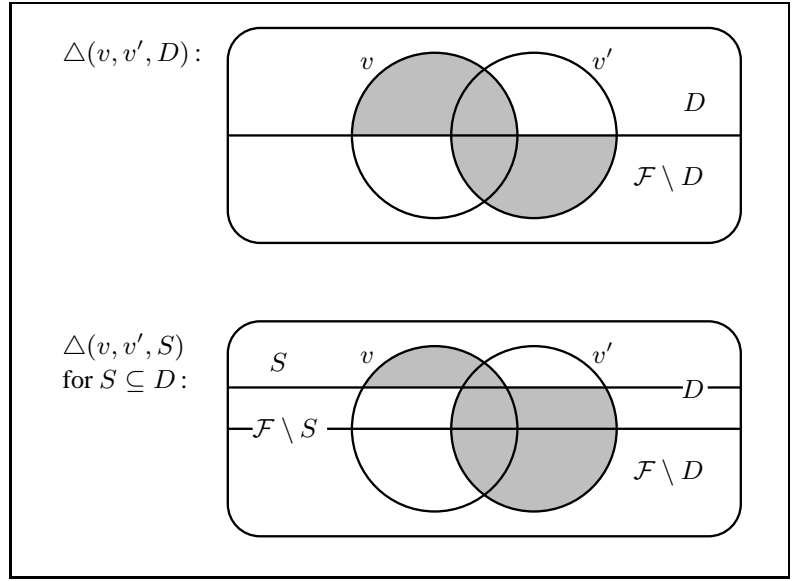

\begin{example}\label{esa:miniclo}
Let $f,g,h$ be fluents with $\dom(f)=\dom(g)=\dom(h)=\{0,1\}$ and
\\\centerline{$\mathcal{SL}
=   \{\codetext{caused}(f=1,g=1),\codetext{caused}(f=0,g=0)\}$.}
Consider the states $v=\{f/0, g/0, h/0\}$, $v' = \{f/1, g/1, h/1\}$,
$v'' = \{f/0, g/0, h/1\}$ and let $D = \{f,g\}$.
Then, $v'$ and $v''$ are both closed w.r.t.\ $\mathcal{SL}$.

However,
$v''$ is minimally closed w.r.t.\ $v$, $D$, and $\mathcal{SL}$,
while
$v'$ is not minimally closed since
$\triangle(v,v',D) = \{h/1,f/0,g/0\}$ is different
from $v'$ and closed.
\eoe
\end{example}

A triplet
$\langle v, a, v'\rangle$,
where $v$ and $v'$ are complete states and $a$ is an action,
is a \emph{valid transition} if:

\begin{enumerate}
\item
the action $a$ is executable in $v$ and
\item
we have that $v'= \ine(\sigma,v')$
where
\begin{itemize}
\item $\sigma$ is a solution of the constraint
$\Eff(a,{v})$,
and
\item $v'$ is minimally closed w.r.t.\
$v$, $\mathcal{F} \setminus \dom(\sigma)$, and
$\mathcal{SL}$.
\end{itemize}
\end{enumerate}
Intuitively, the conditions that define a transition are
designed to guarantee that:
\begin{itemize}
\item a solution $\sigma$ for the constraints
        describing the effects of the action is determined;
\item such solution is part of the new state $v'$ constructed
        (thanks to $v' = \ine(\sigma,v')$); and
\item the new state is minimally closed with respect to
        all the fluents not affected by the execution of the
        action.
\end{itemize}
Let us observe that,
since all fluents in the domain of any solution $\sigma$ of
$\Eff(a,v)$ maintain  the same value in~$v'$,
it holds that
$v' \models\Eff(a,v)$.

Notice that the notion of a valid transition given in presence of static laws
properly extends the one given in \Sect{sema_0}.
In fact, the following property holds:
\begin{lemma}
If ${\cal SL} = \emptyset$ then $\ine(\sigma,v)=\ine(\sigma,v')$.
\end{lemma}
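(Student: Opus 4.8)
The plan is to read the statement in its intended context --- $\langle v,a,v'\rangle$ is a valid transition in the sense of \Sect{sema_1} and $\sigma$ is the solution of $\Eff(a,v)$ that witnesses it --- and to show that, once $\mathcal{SL}=\emptyset$, the minimal-closure requirement on $v'$ degenerates into the statement that $v$ and $v'$ agree on every fluent outside $\dom(\sigma)$. Since $\ine(\sigma,\cdot)$ copies the value of its second argument exactly on those fluents (and returns $\sigma(f)$ on $\dom(\sigma)$ regardless of the second argument), this agreement is all that is needed to conclude $\ine(\sigma,v)=\ine(\sigma,v')$.

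Concretely, I would set $D=\mathcal{F}\setminus\dom(\sigma)$, so that $v'$ is minimally closed w.r.t.\ $v$, $D$, and $\mathcal{SL}=\emptyset$. The clause $v'\models\mathcal{SL}$ is then vacuously true, and in the minimality clause ``for all $S\subseteq D$, if $\triangle(v,v',S)\neq v'$ then $\triangle(v,v',S)\not\models\mathcal{SL}$'' the consequent is unsatisfiable; hence that clause is equivalent to saying $\triangle(v,v',S)=v'$ for every $S\subseteq D$. Instantiating this with $S=D$ and unfolding the definition of $\triangle$ (which gives $\triangle(v,v',D)(f)=v(f)$ when $f\in D$ and $\triangle(v,v',D)(f)=v'(f)$ when $f\notin D$) yields $v(f)=v'(f)$ for all $f\in D=\mathcal{F}\setminus\dom(\sigma)$. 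A one-line case split on a fluent $f$ then finishes the argument: on $\dom(\sigma)$ both $\ine(\sigma,v)(f)$ and $\ine(\sigma,v')(f)$ equal $\sigma(f)$, while off $\dom(\sigma)$ they equal $v(f)$ and $v'(f)$ respectively, which coincide. In fact, since the transition definition also forces $v'=\ine(\sigma,v')$, both sides equal $v'$, so the target state prescribed by the definition of \Sect{sema_0} is recovered, which is exactly the point of the surrounding remark.

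I do not expect a genuine obstacle here: the only point requiring attention is the correct reading of the minimal-closure clause in the boundary case $\mathcal{SL}=\emptyset$ --- an implication whose consequent is always false must be turned into a universally quantified equality --- together with the observation that the set $D$ on which $v$ and $v'$ are thereby forced to agree is precisely the set of fluents on which $\ine(\sigma,\cdot)$ reads its second argument. Everything else is a direct unfolding of the definitions of $\triangle$ and $\ine$.
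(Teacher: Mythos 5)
Your proof is correct and follows essentially the same route as the paper's, which simply observes that when $\mathcal{SL}=\emptyset$ minimal closure of $v'$ w.r.t.\ $v$, $\mathcal{F}\setminus\dom(\sigma)$, and $\mathcal{SL}$ is equivalent to $\ine(\sigma,v)=v'$; your unfolding of the minimality clause (vacuously false consequent forces $\triangle(v,v',S)=v'$ for all $S\subseteq D$, hence $v$ and $v'$ agree off $\dom(\sigma)$, hence the two inertial completions coincide) is exactly the content of that equivalence. If anything, your reading is the more careful one, since it correctly interprets the degenerate minimality condition as agreement on $D$ only rather than full equality of $v$ and $v'$.
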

\begin{proof}
It is sufficient to note that,
if ${\cal SL} = \emptyset$ then
$v'$ is minimally closed w.r.t. ${\cal F}\setminus dom(\sigma)$
if and only if $\ine(\sigma,v)=v'$.
\end{proof}

\begin{example}
Let us extend the action description of Example \ref{esa:miniclo}. We consider
the following domain description:
\[\begin{array}{lcl}
\codetext{fluent}(f,\{0,1\}). & \hspace{.5cm} & \codetext{fluent}(g,\{0,1\}).\\
\codetext{fluent}(h,\{0,1\}).\\
\codetext{action}(a). & & \codetext{executable}(a,h=0).\\
\codetext{causes}(a,h=1) &&\\
\codetext{caused}(f=1,g=1). & & \codetext{caused}(f=0,g=0).
\end{array}
\]
Let us consider the three states $v=\{f/0, g/0, h/0\}$, $v' = \{f/1, g/1, h/1\}$,
and $v'' = \{f/0, g/0, h/1\}$.
Then $\langle v,a,v''\rangle$ is a valid transition, while
$\langle v,a,v'\rangle$ is not.
\eoe
\end{example}

Let  $\langle \mathcal{D}, \mathcal{O}\rangle$ be a planning problem instance.
Let $\vec{v} = \langle v_0, \dots, v_{\n}\rangle$ be a sequence
of complete states and let $a_1,\ldots,a_{\n}$ be actions.
Then $\langle v_0,a_1,v_1, \dots, a_{\n},v_{\n}\rangle$
is a \emph{valid trajectory}
if the following conditions hold:
\begin{itemize}

\item $v_{0} \models \mathcal{SL}$, and
    for each axiom $\codetext{initially}(C)$ in $\mathcal{O}$,
 we have that ${v}_0 \models C$;
\item
for each axiom of the form $\codetext{goal}(C)$ in $\mathcal{O}$,
we have that ${v}_{\n} \models C$;
\item
$\langle v_{i},a_{i+1},v_{i+1}\rangle$ is a valid transition,
for each $i \in \{0,\dots,\n-1\}$.
\end{itemize}

\subsubsection{Abstract implementation in presence of static laws}\label{absConcBMVii}
Let us consider a fluent $f$ and a transition from state $u$ to state $w$,
due to an action $a$,
and let us adopt the same notation ($F_f^u$, $C^u$, $A_a^u$, etc.)
introduced in \Sect{absConcBMVi}.
The state transition from $u$ to $w$ can be seen as the composition of two steps
involving an intermediate state $v$.
The first of these steps reflects the effects of the dynamic laws, whereas
the second step realizes the closure w.r.t.\ the static causal laws.
Hence we proceed by introducing a set of variables corresponding to
the intermediate state $v = \ine(\sigma,u)$,
where $\sigma$ is a solution of $\Eff(a,u)$.
The constraint-based description of the first step is essentially the same we
described in \Sect{nonMarkovianBMV}---thus, we only need to
extend the constraint system defined in \Fig{cieffebis} to reflect the second part of the transition.

\medskip

Given a set $L\subseteq\mathcal{F}$ of fluents, let $\mathcal{SL}_L\subseteq\mathcal{SL}$ be
the collection of all static causal laws in which at least one fluent of $L$ occurs.
Moreover, for simplicity, let $\mathcal{SL}_f$ denote $\mathcal{SL}_{\{f\}}$, i.e.,
the set of all static causal laws that involve the fluent $f$.

Let us define a relation $R\subseteq \mathcal{F}\times\mathcal{F}$ so that
$f_1{R}f_2$ if and only if $\mathcal{SL}_{f_1}\cap\mathcal{SL}_{f_2}\not=\emptyset$.
$R$ is an equivalence relation and it partitions $\mathcal{F}$.
Each element (i.e., equivalence class) of the quotient $\mathcal{F}/R$ is said to
be a \emph{cluster} (w.r.t.\  $\mathcal{SL}$).
Notice that a cluster can be a singleton $\{f\}$.
Let $f$ be a fluent, we denote with $L_f$ its cluster w.r.t.\  $\mathcal{SL}$.

\begin{example}
Assume that $\mathcal{SL}$ consists of the rules
\[
\begin{array}{lclcl}
\codetext{caused}(\codetext{true},f=1). & \hspace{.2cm} &
\codetext{caused}(g=2,h=3). & \hspace{.2cm} &
\codetext{caused}(h<5,r=2).
\end{array}
\]
Then the two clusters are $\{f\}$ and $\{g,h,r\}$.
\eoe
\end{example}

Given a fluent $f$,
let us consider the sets of dynamic and executability
laws $\mathcal{DL}_f$ and $\mathcal{EL}_a$,
as defined in \Sect{absConcBMVi}.
Moreover, let us consider the cluster containing $f$, let it be $L_f=\{f_1,\ldots,f_k\}$,
and the corresponding set of static causal laws  $\mathcal{SL}_{L_f}$:
$$
\mathcal{SL}_{L_f} ~ = ~ \big\{\codetext{caused}(G_{f,1},D_{f,1}),~
\cdots, ~
\codetext{caused}(G_{f,h_f},D_{f,h_f})
\big\}.
$$

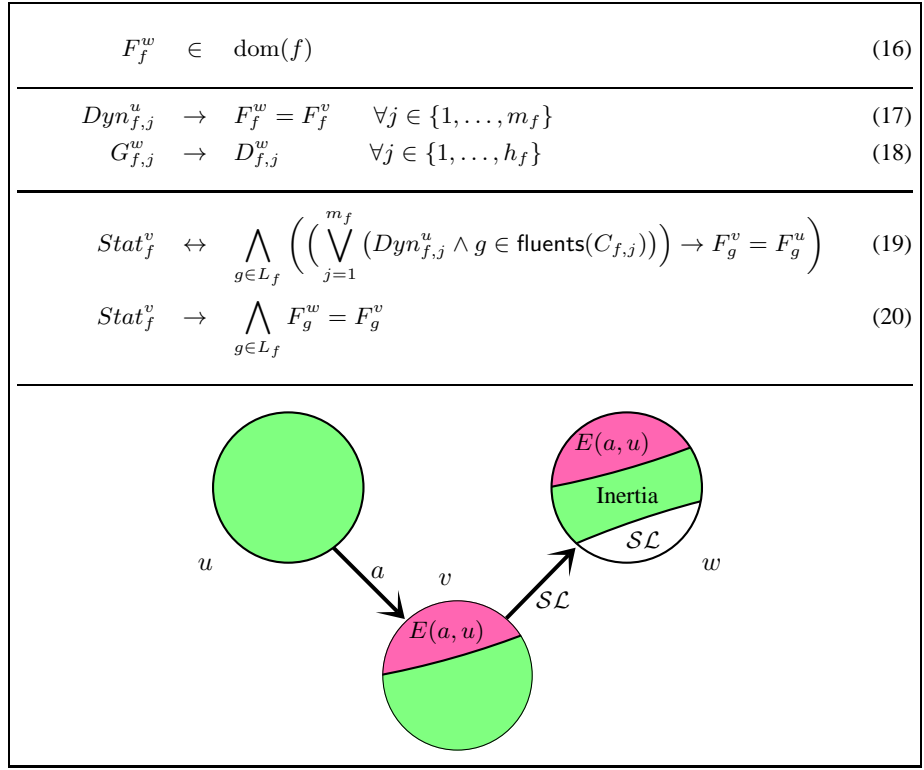
\begin{figure}
\begin{center}
{\small\fbox{\begin{minipage}[c]{.94\textwidth}
\begin{eqnarray}
 F_f^w & \in & \dom(f) \label{cieffequatris15}\\
\hline
Dyn^u_{f,j} & \rightarrow & F_f^w=F_f^v ~~~~~~~\forall j \in \{1,\ldots,m_f\}\label{cieffequatris20}~~~\\
G^w_{f,j} & \rightarrow & D^w_{f,j} ~~~~~~~~~~~~~~~\forall j \in \{1,\ldots,h_f\}\label{cieffequatris21}~~~\\
\hline
Stat_f^v & \leftrightarrow &
\bigwedge_{g\in L_f}\bigg(\Big(\bigvee_{j=1}^{m_f}\big(Dyn^u_{f,j}  \wedge g \in \FV(C_{{f,j}})\big)\Big) \rightarrow F_{g}^v=F_{g}^u\bigg)
\label{cieffequatris22}~~~~\\
Stat_f^v & \rightarrow &
\bigwedge_{g\in L_f}F_{g}^w=F_{g}^v
\label{cieffequatris23}\\
\hline\nonumber
\end{eqnarray}
\centerline{
\begin{pspicture}(0,1)(7.7,5.5)
\put(0,0.1){\psscalebox{1.0}{\figuratre}}
\end{pspicture}
}
\end{minipage}}}
\end{center}
\caption{\label{cieffequatris}The constraints for a state transition from $u$ to $w$  (with intermediate state~$v$), for a fluent $f$.}
\end{figure}

\noindent
\Fig{cieffequatris} describes the constraints (to be added to those in \Fig{cieffebis})
that are used in encoding the relations that determine
the value of the fluent $f$ in state $w$ (represented through the variable $F_f^w$)
after the execution of
action $a$ in the state~$u$ (we recall that $v$ is to be considered as an
intermediate state $v=\ine(\sigma,u)$).

The constraint (\ref{cieffequatris15}) sets the domains for the
variables $F_f^w$. The constraint~(\ref{cieffequatris20}) propagates to $w$ the effects of the dynamic laws.
Constraint~(\ref{cieffequatris21}) imposes closure w.r.t.\ the static causal laws.
Finally, constraints~(\ref{cieffequatris22})--(\ref{cieffequatris23}) require that if all
the fluents in $\dom(\sigma)$ that belong to  the cluster $L_f$
are left unchanged in the transition, then all the fluents of $L_f$
should not change their values.
More precisely,
as far as~(\ref{cieffequatris22}) is concerned, $Stat_f^v$ is set
to true if, for all fluents
$g$ in $L_f$, either $g$ is not affected by the dynamic laws (i.e., $F_{g}^v=F_{g}^u$), or
for each activated dynamic law $\codetext{causes}(a_{i_{f,j}}, C_{{f,j}}, \alpha_{{f,j}})$
(i.e., such that its precondition $\alpha_{{f,j}}^u$ is true), $g$
does not occur in its effects (i.e., in $C_{{f,j}}$).
Notice that, with respect to a specific state transition,
we are not considering subject to inertia all those fluents that
occur in the effects of (at least) one activated dynamic law.

The enforcement of the constraint~(\ref{cieffequatris23})
constitutes a necessary, but not sufficient,
condition for the target state to be minimally closed.
We will discuss later on this point.

Let us denote
with $C_{f}^{u,w}$ the conjunction of the constraints
(\ref{cieffebis9})--(\ref{cieffequatris21}) for all actions
$a\in\mathcal{A}$.
Given an action domain specification over the signature $\langle \mathcal{V},
\mathcal{F}, \mathcal{A}\rangle$,
 the system of constraints $C_{\mathcal{F}}^{u,w}$ includes:
\begin{itemize}
\item the constraint $C_{f}^{u,w}$ for each fluent literal
        $f$ in the language of $\mathcal{F}$;
\item the constraint
$\sum_{a \in \mathcal{A}} A_a^u= 1$.
\end{itemize}
Similarly, let $\mathit{Stat}_{\mathcal{F}}^{u,w}$  denote the conjunction of all
the constraints of the forms~(\ref{cieffequatris22}) and (\ref{cieffequatris23}).

The next theorem states completeness of the encoding described so far.
Again, given two states $u,w$ and  an action~$a$,
let $C_{\mathcal{F}}^{u,a}$ and $\mathit{Stat}_{\mathcal{F}}^{u,a}$
denote the constraints obtained from
$C_{\mathcal{F}}^{u,w}$  and $\mathit{Stat}_{\mathcal{F}}^{u,w}$,
respectively,
by setting $A_a=1$, $A_b=0$ for all $b\not=a$, and $F_f^u=u(f)$
for each fluent literal $f$.

\begin{theorem}\label{completeBMV1}
Let $\mathcal{D} = \langle \mathcal{DL},\mathcal{EL},\mathcal{SL}\rangle$
and let $u,w$ two states and $a$ an action.
Then, if
$\langle u,a,w\rangle$ is a valid
transition in the semantics of the language \BMVi,
then $w$ represents a solution of the constraint
$C_{\mathcal{F}}^{u,a} \wedge \mathit{Stat}_{\mathcal{F}}^{u,a}$.
\end{theorem}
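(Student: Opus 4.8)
The plan is to exhibit an explicit assignment of all the variables appearing in $C_{\mathcal{F}}^{u,a}\wedge\mathit{Stat}_{\mathcal{F}}^{u,a}$ --- the fluent variables $F_f^u$, $F_f^v$, $F_f^w$ for the source state, an intermediate state, and the target state, the action variables, and the auxiliary flags $Dyn^u_{f,j}$, $Stat^v_f$, $G^w_{f,j}$, $D^w_{f,j}$ --- and then to verify the constraints of \Fig{cieffebis} and \Fig{cieffequatris} one at a time. Since $\langle u,a,w\rangle$ is a valid transition, fix a solution $\sigma$ of $\Eff(a,u)$ with $w=\ine(\sigma,w)$ and $w$ minimally closed w.r.t.\ $u$, $D:=\mathcal{F}\setminus\dom(\sigma)$, and $\mathcal{SL}$. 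Put $v:=\ine(\sigma,u)$, which is a complete state, and assign $A_a^u=1$, $A_b^u=0$ for $b\neq a$, $F_f^u=u(f)$, $F_f^v=v(f)$, $F_f^w=w(f)$, letting the flags take the unique values forced by the biconditionals~(\ref{cieffebis11}) and~(\ref{cieffequatris22}).

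First I would dispatch the block~(\ref{cieffebis9})--(\ref{cieffebis13}) together with the single-action constraint $\sum_a A_a^u=1$. The observation is that $\langle u,a,v\rangle$ is itself a valid transition in the \emph{static-law-free} semantics of \Sect{sema_0}: $a$ is executable in $u$, and $v=\ine(\sigma,u)$ with $\sigma$ a solution of $\Eff(a,u)$. Hence, by the ($\Rightarrow$) direction of Theorem~\ref{soundcompleteBMV0} (taking $\mathcal{SL}=\emptyset$), the assignment above already satisfies~(\ref{cieffebis9})--(\ref{cieffebis13}) and the single-action constraint. The new constraints of \Fig{cieffequatris} are then easy except for the last: (\ref{cieffequatris15}) holds because $w$ is a complete state; (\ref{cieffequatris20}) because whenever $Dyn^u_{f,j}$ is true the effect $C_{f,j}$ is one of the conjuncts of $\Eff(a,u)$, so $f\in\FV(C_{f,j})\subseteq\dom(\sigma)$ and therefore $v(f)=\sigma(f)=w(f)$; (\ref{cieffequatris21}) because $w$ minimally closed implies $w\models\mathcal{SL}$, i.e.\ $w\models G_{f,j}\rightarrow D_{f,j}$ for every static law; and (\ref{cieffequatris22}) by our choice of the values of the $Stat^v_f$ flags.

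The heart of the proof is constraint~(\ref{cieffequatris23}): $Stat^v_f\rightarrow\bigwedge_{g\in L_f}F_g^w=F_g^v$. Fix $f$ and assume $Stat^v_f$ holds. The point of $Stat^v_f$ is that the dynamic part of the transition leaves the entire cluster $L_f$ unchanged, that is $v(g)=u(g)$ for every $g\in L_f$; combined with the fact that $u$ is closed w.r.t.\ $\mathcal{SL}$ (which holds for the source state of every transition occurring in a valid trajectory), this gives that $v$ satisfies every static law of $\mathcal{SL}_{L_f}$. Now suppose, for contradiction, that $w(g)\neq v(g)$ for some $g\in L_f$. Since $w(h)=\sigma(h)=v(h)$ for all $h\in L_f\cap\dom(\sigma)$, the set $S^{*}:=\{g\in L_f\setminus\dom(\sigma):w(g)\neq u(g)\}$ is nonempty, is contained in $D$, and $\triangle(u,w,S^{*})\neq w$. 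Minimal closure of $w$ then forces $\triangle(u,w,S^{*})\not\models\mathcal{SL}$, so some law $\codetext{caused}(G,D')$ is violated there. As $\triangle(u,w,S^{*})$ and $w$ differ only on the fluents of $S^{*}\subseteq L_f$ and $w\models\mathcal{SL}$, that law must mention a fluent of $L_f$, and hence --- because a cluster is an $R$-equivalence class and $R$ links all fluents that co-occur in a static law --- every fluent of that law belongs to $L_f$, i.e.\ it is one of the $\codetext{caused}(G_{f,j},D_{f,j})$ of $\mathcal{SL}_{L_f}$. But on $L_f$ the state $\triangle(u,w,S^{*})$ coincides with $v$: on $S^{*}$ it takes the $u$-value, which agrees with $v$ there because $S^{*}\cap\dom(\sigma)=\emptyset$; on $L_f\cap\dom(\sigma)$ it takes the $w$-value $\sigma=v$; and on the remaining inertial fluents of $L_f$ it takes the $w$-value, which agrees with $u=v$ since those fluents lie outside $S^{*}$. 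So $v$ itself would violate a law of $\mathcal{SL}_{L_f}$, contradicting the previous step; hence $S^{*}=\emptyset$ and~(\ref{cieffequatris23}) holds.

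I expect this last step to be the main obstacle: the routine constraints are essentially inherited from Theorem~\ref{soundcompleteBMV0}, but~(\ref{cieffequatris23}) requires reading off correctly from $Stat^v_f$ that the cluster $L_f$ is untouched by the dynamics, choosing the right ``revert'' set $S^{*}\subseteq D$ to feed to the minimal-closure hypothesis, and showing that reverting $S^{*}$ reproduces $v$ on $L_f$, so that the minimality of the closure of $w$ collides with $v$ being closed for the laws of $\mathcal{SL}_{L_f}$. The cluster decomposition is what makes this work, since it localizes both the relevant static laws and the inertia bookkeeping to $L_f$; closedness of the source state $u$ is the other ingredient that must be in place.
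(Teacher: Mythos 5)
Your proposal is correct and follows the same overall route as the paper's proof, but the paper's own argument for the static part is only a two-sentence sketch (``from the fact that $w$ is minimally closed \dots it follows that $w$ satisfies~(\ref{cieffequatris22})--(\ref{cieffequatris23})''), whereas you actually supply the missing reasoning: the reduction of the block~(\ref{cieffebis9})--(\ref{cieffebis13}) to the forward direction of Theorem~\ref{soundcompleteBMV0} applied to $\langle u,a,\ine(\sigma,u)\rangle$, the observation that $w$ and $v$ agree on $\dom(\sigma)$ for~(\ref{cieffequatris20}), and---the real content---the derivation of~(\ref{cieffequatris23}) by feeding the revert set $S^{*}\subseteq\mathcal{F}\setminus\dom(\sigma)$ to the minimal-closure hypothesis and using the cluster structure to localize the violated static law to $\mathcal{SL}_{L_f}$, where it collides with $v$. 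That last computation (checking that $\triangle(u,w,S^{*})$ coincides with $v$ on $L_f$ by splitting into $S^{*}$, $L_f\cap\dom(\sigma)$, and the remaining inertial fluents) is exactly the step the paper leaves implicit, and it is carried out correctly.

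Two caveats. First, your argument genuinely uses the closedness of $u$ w.r.t.\ $\mathcal{SL}$ (to conclude that $v$ satisfies $\mathcal{SL}_{L_f}$ when the cluster is untouched by the dynamics); the theorem as stated does not hypothesize this, and without it the claim fails---e.g.\ with $\codetext{caused}(\codetext{true},f=1)$, $u(f)=0$, and an action not affecting $f$, one gets $Stat_f^v$ true but $F_f^w\neq F_f^v$. You correctly flag source-state closedness as a needed ingredient; it holds for every state along a valid trajectory, which is the only context in which the theorem is invoked, but strictly speaking you are proving a version of the statement with an added hypothesis that the paper omits. Second, your reading of $Stat_f^v$ as ``the entire cluster $L_f$ is left unchanged by the dynamic laws'' requires interpreting the disjunction in~(\ref{cieffequatris22}) as ranging over all activated dynamic laws whose effects mention $g$, not literally over $\mathcal{DL}_f$ only; under the literal indexing, a fluent $g\in L_f$ could be modified by an activated law whose consequence does not mention $f$ while $Stat_f^v$ remains true, and the step ``hence $v$ satisfies every law of $\mathcal{SL}_{L_f}$'' would break. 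This is the reading the paper's prose gloss of~(\ref{cieffequatris22}) intends, so your proof is sound for the constraint system the authors mean to define.
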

\begin{proof}
For the constraints~(\ref{cieffebis9})--(\ref{cieffequatris15}),
considering the transition from $u$ to $v$, the proof proceeds
analogously to  the first part of the proof of Theorem~\ref{soundcompleteBMV0}.

Let us sketch the part of the proof regarding the effect of the static causal laws.
Since $\langle u,a,w\rangle$ is a valid
transition, $w=\ine(\sigma,w)$, \,$w$ agrees with $v=\ine(\sigma,u)$
on all fluents in $\dom(\sigma)$, hence~(\ref{cieffequatris20}) hold.
Moreover, $w$ is closed w.r.t.\ $\mathcal{SL}$, hence it satisfies~(\ref{cieffequatris21}).
{F}rom the fact that $w$ is minimally closed w.r.t.\
$\ine(\sigma,u),\mathcal{F}\setminus\dom(\sigma)$, and $\mathcal{SL}$,
it follows that $w$ satisfies~(\ref{cieffequatris22})--(\ref{cieffequatris23}).
\qed\end{proof}

\medskip

The above encoding does not guarantee soundness.
This is because the constraints~(\ref{cieffequatris20})--(\ref{cieffequatris21})
in \Figs{cieffebis} and~\ref{cieffequatris} might admit solutions not corresponding to
minimally closed states.

We introduced the notion of cluster to partially recover the soundness
of the encoding. Intuitively speaking,
a cluster generalizes, to the multi-valued case, the notion of loop seen
in \Sect{SoundCompleOfB}: a cluster is a set of fluents whose values have
been declared to be mutually dependent through a set of static causal laws.
In a state transition, similarly to the case of loops, changes to
the fluents of a cluster might occur because of their mutual influence, not
being (indirectly) caused by dynamic laws.

Constraints~(\ref{cieffequatris22}) and (\ref{cieffequatris23}) impose inertia
on all the fluents of a cluster whenever none of them is influenced by dynamic laws.
However, note that imposing~(\ref{cieffequatris22})--(\ref{cieffequatris23}) does not
completely circumvent the problem because state transitions violating the inertia are
still admitted.
In fact, (\ref{cieffequatris22})--(\ref{cieffequatris23}) do not impose
inertia on the fluents of a cluster when at least one of them is changed
by the dynamic laws. This might lead to invalid transitions, in which
a change in the value of a fluent of a cluster happens even if
this is not necessary in order to satisfy all the static causal laws.

Nevertheless, we introduced the constraints~(\ref{cieffequatris22})
and (\ref{cieffequatris23})  because they constitute
a good compromise w.r.t.\ the efficiency of a
concrete implementation (as discussed later).

To completely enforce soundness, we
need to apply a filter on the solutions that are admitted by the
encoding described so far.
To this aim, let us introduce a condition on the values of the fluent, which is
intended to mimic, in the multi-valued setting, the effect of loop formulae.

Let us assume that the action $a$ is executed in the state $u$, and that
$\sigma$, $v$, and $w$ have been determined so that to satisfy the
constraint $C_{\mathcal{F}}^{u,w}$.
In this situation the following constraint characterizes
an hypothetical state $x$, different from $w$:
\begin{eqnarray}
\mathit{Form}(\mathcal{D})^{u,a} & = &
\bigg(
~
C_{\mathcal{F}}^{u,x}  ~\wedge \label{formulazza2}\\
& & ~~~~
\bigwedge_{f\in\mathcal{F}}\Big(\bigvee_{j=1}^{m_f}Dyn^u_{f,j}\rightarrow F_{f}^x=F_{f}^w\Big) ~\wedge \label{formulazza1}\\
& & ~~~~
\bigvee_{f\in\mathcal{F}}F^x_{f}\not=F^w_{f} ~\wedge~
 \bigwedge_{f\in\mathcal{F}}\Big(F^x_{f}\not=F^w_{f} \rightarrow F^x_{f}=F^u_{f}\Big)~
\bigg)\label{formulazza3}
\end{eqnarray}
Intuitively, the satisfaction of such a formula witnesses the existence of a counterexample
for the minimal closure of $w$.
Notice that, being $\sigma$, $v$, and $w$ already determined, the only fluents/variables to
be determined are those describing the state~$x$, if any.
The conjunct in line~(\ref{formulazza2}) states that $x$ is a target state
alternative to $w$; in particular, it enforces the closure of $x$ w.r.t.\ $\mathcal{SL}$.
The conjunction~(\ref{formulazza1}) states that $x$ and $w$ agree on the fluents in $\dom(\sigma)$.
Finally,~(\ref{formulazza3}) states that $x$ must differ from $w$ and it
must  agree with $u$ in at least one fluent---that,
because of~(\ref{formulazza1}), it is in $\mathcal{F}\setminus\dom(\sigma)$.

We can prove the following result, that generalizes Theorem~\ref{soundcompleteBMV0}
to the case of $\mathcal{SL}\not=\emptyset$.
\begin{theorem}\label{soundBMV1}
Let $\mathcal{D} = \langle \mathcal{DL},\mathcal{EL},\mathcal{SL}\rangle$
and $u,w$ two states, with $u$ closed w.r.t. $\mathcal{SL}$.
Let $a$ an action such that
$w$ represents a solution of the constraint
$C_{\mathcal{F}}^{u,a}$.
Then
$\langle u,a,w\rangle$ is a valid
transition in the semantics of the language \BMVi,
if $\mathit{Form}(\mathcal{D})^{u,a}$ is unsatisfiable.
\end{theorem}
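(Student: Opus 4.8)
The plan is to follow the two-part structure of the proof of Theorem~\ref{soundcompleteBMV0}, the new ingredient being that the \emph{minimal}-closure clause in the definition of a valid transition has to be extracted from the hypothesis that $\mathit{Form}(\mathcal{D})^{u,a}$ is unsatisfiable. First I would fix a solution of $C_{\mathcal{F}}^{u,a}$ that realizes $w$ on the variables $F_f^w$. Once $A_a=1$, $A_b=0$ for $b\neq a$, and the $F_f^u$ are pinned to $u$, the flags $Dyn^u_{f,j}$ are determined by~(\ref{cieffebis11}), and by~(\ref{cieffebis13}) and~(\ref{cieffequatris20}) the intermediate variables $F_f^v$ are forced as well: $v(f)=w(f)$ whenever some $Dyn^u_{f,j}$ holds, and $v(f)=u(f)$ otherwise. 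The crucial bookkeeping remark is that $\bigvee_j Dyn^u_{f,j}$ is true for exactly the fluents $f$ that occur in an activated effect constraint, i.e.\ for $f\in\FV(\Eff(a,u))$. Put $\sigma$ equal to the restriction of $w$ to $\FV(\Eff(a,u))$ and $D:=\mathcal{F}\setminus\FV(\Eff(a,u))$.

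Next I would clear the ``easy'' conjuncts of the definition of a valid transition, just as in the $\Leftarrow$ direction of Theorem~\ref{soundcompleteBMV0}. Constraint~(\ref{cieffebis10}) with $A_a=1$ gives executability of $a$ in $u$. Constraint~(\ref{cieffebis12}) forces $v\models\Eff(a,u)$, and~(\ref{cieffequatris20}) forces $v$ and $w$ to agree on $\FV(\Eff(a,u))$; together these show that $\sigma$ is a solution of $\Eff(a,u)$, that $v=\ine(\sigma,u)$ (using~(\ref{cieffebis13}) off $\dom(\sigma)$), and that $w=\ine(\sigma,w)$ holds trivially, since $w$ is complete and $\sigma$ is a restriction of $w$. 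Finally, constraint~(\ref{cieffequatris21}), quantified over all fluents, states precisely that $w\models\mathcal{SL}$, so $w$ is closed.

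It remains to show that $w$ is \emph{minimally} closed w.r.t.\ $u$, $D$, and $\mathcal{SL}$, and this is the heart of the argument. I would argue by contradiction: suppose there is $S\subseteq D$ with $\triangle(u,w,S)\neq w$ and $\triangle(u,w,S)\models\mathcal{SL}$, and set $x:=\triangle(u,w,S)$; I claim that $x$, together with an intermediate state equal to the original $v$, satisfies $\mathit{Form}(\mathcal{D})^{u,a}$, contradicting the hypothesis. For conjunct~(\ref{formulazza2}), one checks that this assignment solves $C_{\mathcal{F}}^{u,x}$: constraints~(\ref{cieffebis9})--(\ref{cieffebis13}) involve only the $u$- and $v$-values and the auxiliary ($A$- and $Dyn$-) variables and are inherited from the original solution; the domain constraint~(\ref{cieffequatris15}) holds because $x(f)\in\{u(f),w(f)\}\subseteq\dom(f)$; constraint~(\ref{cieffequatris20}) holds because on $\FV(\Eff(a,u))$ we have $x=w=v$ (since $S\cap\FV(\Eff(a,u))=\emptyset$), so its only non-vacuous instances are met; and~(\ref{cieffequatris21}) is exactly the assumption $x\models\mathcal{SL}$. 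Conjunct~(\ref{formulazza1}) holds because $x$ and $w$ coincide outside $S$, hence on $\FV(\Eff(a,u))$; and conjunct~(\ref{formulazza3}) holds because $x\neq w$ while every fluent on which $x$ and $w$ differ lies in $S$ and is therefore set to its $u$-value by $\triangle$. This contradiction shows no such $S$ exists, so $w$ is minimally closed and $\langle u,a,w\rangle$ is a valid transition.

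The main obstacle is the careful verification inside the minimal-closure step: one must make sure that the candidate counter-model $x=\triangle(u,w,S)$ really satisfies the \emph{whole} system $C_{\mathcal{F}}^{u,x}$ --- in particular that its intermediate variables can be taken equal to the original $v$, so that no fresh witness for the dynamic-law part has to be exhibited --- and that the three shape requirements~(\ref{formulazza1})--(\ref{formulazza3}) match exactly the property ``$x$ differs from $w$ only on a subset of $D$ and agrees with $u$ there''. This is essentially a transcription, to the multi-valued setting, of the loop-formula argument behind Theorem~\ref{correctB1}; the hypothesis that $u$ is closed is available throughout but is really needed only when chaining single transitions into a valid trajectory.
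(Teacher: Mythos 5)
Your proposal is correct and follows essentially the same route as the paper's proof: the non-minimality conditions are dispatched by the argument of Theorem~\ref{soundcompleteBMV0}, and minimal closure is obtained by contradiction, taking $x=\triangle(u,w,S)$ as a witness for the satisfiability of $\mathit{Form}(\mathcal{D})^{u,a}$ and checking conjuncts~(\ref{formulazza2})--(\ref{formulazza3}) exactly as you do. Your write-up is in fact somewhat more explicit than the paper's (e.g., in identifying $\dom(\sigma)$ with $\FV(\Eff(a,u))$ and in verifying that the intermediate state $v$ can be reused unchanged for $C_{\mathcal{F}}^{u,x}$), but no new idea is involved.
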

\begin{proof}
By proceeding as in the proof of Theorem~\ref{soundcompleteBMV0},
we can show that all needed conditions for $\langle u,a,w\rangle$
to be a valid transition are satisfied, except for the minimal closure of $w$.

Let us assume, by contradiction,
that $w$ is not minimally closed w.r.t.\ $u$, $\mathcal{F}\setminus\dom(\sigma)$,
and $\mathcal{SL}$.
Then, there exists $S\subseteq\mathcal{F}\setminus\dom(\sigma)$ such that
$x=\triangle(u,w,S)\not=w$ and $x\models\mathcal{SL}$
For each fluent $f\not\in S$ it holds that $F_f^x=F_f^w$. Moreover, $F_f^v=F_f^w$
holds too, because $w$ satisfies $C_{\mathcal{F}}^{u,a}$.
Hence, $Dyn^u_{f,j}\rightarrow F_{f}^x=F_{f}^v$ holds for all~$j$.

For each fluent $f$, since $x$ is closed w.r.t.\ $\mathcal{SL}$, we have that
$G^x_{f,j}\rightarrow D^x_{f,j}$ (for all $j \in \{1,\ldots,h_f\}$).
Observe that the conditions of the forms~(\ref{cieffebis9})--(\ref{cieffequatris15})
in the conjunct at line~(\ref{formulazza2}) (i.e., in $C_{\mathcal{F}}^{u,x}$)
do not depend on the specific $x$. Then, the conjunct~(\ref{formulazza2})
is satisfied.

Let us also observe that condition~(\ref{formulazza1}) holds too. This is so
because, for all $f\in\dom(\sigma)$ we have that $F_{f}^x=F_{f}^w=F_{f}^v$.
{F}rom the fact that $x\not=w$ it follows that
$\bigvee_{f\in\mathcal{F}}F^x_{f}\not=F^w_{f}$ holds.
Finally, the condition~(\ref{formulazza3}) is satisfied because,
whenever $F^x_{f}\not=F^w_{f}$ holds, by the definition of $\triangle$,
it must be the case that $F^x_{f}=F^u_{f}$.
It follows that $\mathit{Form}(\mathcal{D})^{u,a}$ is satisfiable (by $x$).

This is a contradiction and proves that $w$ is minimally closed w.r.t.\ $u$,
$\mathcal{F}\setminus\dom(\sigma)$, and $\mathcal{SL}$,
and that $\langle u,a,w\rangle$ is a valid transition.
\qed\end{proof}

Let $\langle\mathcal{D}, \mathcal{O}\rangle$ be an instance of a planning problem,
where
$\mathcal{D}$ is a domain description and $\mathcal{O}$ contains any
number of axioms of the form
$\codetext{initially}(C)$ and
$\codetext{goal}(C)$.
We conclude this section by stating a generalization of Theorem~\ref{completeBMV0soundBMV0}
to the case of $\mathcal{SL}\not=\emptyset$.

\begin{theorem}\label{completeBMV1soundBMV1}
There is a valid trajectory $\langle v_0,a_1,v_1,a_2,\ldots,a_{\n},v_{\n} \rangle$
if and only if
\begin{itemize}
\item
$v_0\models\mathcal{SL}$
\item
There is a solution for the constraint
$$
\bigwedge_{\codetext{{\scriptsize initially}}(C)\in\mathcal{O}}C^{v_0}
\:\wedge\:
\bigwedge_{j=0}^{\n-1}C_{\mathcal{F}}^{v_j,v_{j+1}}
\:\wedge\:
\bigwedge_{\codetext{{\scriptsize goal}}(C)\in\mathcal{O}}C^{v_{\n}}
$$
\item
 For each $j\in\{0,\ldots,\n-1\}$ the formula
$\mathit{Form}(\mathcal{D})^{v_{j},a_{j+1}}$ is unsatisfiable.
\end{itemize}
\end{theorem}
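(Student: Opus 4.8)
The plan is to derive the statement by applying, transition by transition, the per-step results already established, and gluing them together through the fluent and action variables that consecutive constraint systems $C_\mathcal{F}^{v_{j-1},v_j}$ and $C_\mathcal{F}^{v_j,v_{j+1}}$ share. Concretely, I would invoke Theorem~\ref{completeBMV1} for the ``only if'' direction and Theorem~\ref{soundBMV1} for the ``if'' direction, and treat the $\codetext{initially}$/$\codetext{goal}$ conjuncts and the closedness of $v_0$ directly from the definition of a valid trajectory.

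For the ($\Rightarrow$) direction, assume $\langle v_0,a_1,v_1,\dots,a_\n,v_\n\rangle$ is a valid trajectory. By definition, $v_0\models\mathcal{SL}$, and ${v}_0\models C$ (resp.\ ${v}_\n\models C$) for every $\codetext{initially}(C)$ (resp.\ $\codetext{goal}(C)$) in $\mathcal{O}$, which gives satisfaction of the conjuncts $C^{v_0}$ and $C^{v_\n}$. For each $j$, the transition $\langle v_j,a_{j+1},v_{j+1}\rangle$ is valid, so Theorem~\ref{completeBMV1} yields that $v_{j+1}$, together with the action assignment $A_{a_{j+1}}^{v_j}=1$, $A_b^{v_j}=0$ for $b\neq a_{j+1}$, and the fluent values of $v_j$, is a solution of $C_\mathcal{F}^{v_j,a_{j+1}}$, hence a fortiori of $C_\mathcal{F}^{v_j,v_{j+1}}$. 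Since the assignments produced by the different steps agree on overlapping variables (the variables of state $v_j$ are pinned to $v_j(f)$ in both $C_\mathcal{F}^{v_{j-1},v_j}$ and $C_\mathcal{F}^{v_j,v_{j+1}}$), their union satisfies the whole conjunction. Finally, $\mathit{Form}(\mathcal{D})^{v_j,a_{j+1}}$ must be unsatisfiable: reading off its definition, a satisfying assignment would describe a state $x=\triangle(v_j,v_{j+1},S)\neq v_{j+1}$ with $S\subseteq\mathcal{F}\setminus\dom(\sigma)$ and $x\models\mathcal{SL}$, contradicting the minimal closedness of $v_{j+1}$ built into the notion of valid transition (this is exactly the converse of the core step in the proof of Theorem~\ref{soundBMV1}).

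For the ($\Leftarrow$) direction, assume the three listed conditions and fix a solution of the global constraint. The key observation is that satisfaction of $C_\mathcal{F}^{v_j,v_{j+1}}$ forces $v_{j+1}\models\mathcal{SL}$, since constraint~(\ref{cieffequatris21}), ranging over all clusters, reproduces every implication $C_i\rightarrow D_i$ of $\mathcal{SL}$ evaluated in $v_{j+1}$; combined with $v_0\models\mathcal{SL}$, an easy induction on $j$ shows every $v_j$ is closed. Hence, for each $j$, Theorem~\ref{soundBMV1} applies with $u=v_j$, $w=v_{j+1}$, $a=a_{j+1}$: $v_{j+1}$ restricts to a solution of $C_\mathcal{F}^{v_j,a_{j+1}}$, $v_j$ is closed, and $\mathit{Form}(\mathcal{D})^{v_j,a_{j+1}}$ is unsatisfiable by hypothesis; thus $\langle v_j,a_{j+1},v_{j+1}\rangle$ is a valid transition. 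The conjuncts $C^{v_0}$ and $C^{v_\n}$ give ${v}_0\models C$ for each $\codetext{initially}(C)$ and ${v}_\n\models C$ for each $\codetext{goal}(C)$, and $v_0\models\mathcal{SL}$ is assumed, so all clauses of the definition of a valid trajectory hold.

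I expect the only delicate point to be the precise matching between the \emph{global} constraint, whose fluent/action variables are free and shared across steps, and the \emph{local} per-transition statements, which are phrased in terms of the specialized $C_\mathcal{F}^{u,a}$ obtained by pinning $F_f^u=u(f)$ and the action variables; one must be explicit that the global solution restricts to a solution of each local $C_\mathcal{F}^{v_j,a_{j+1}}$ and, conversely, that the local solutions amalgamate because they coincide on overlapping variables. The auxiliary fact that $C_\mathcal{F}^{v_j,v_{j+1}}$-satisfaction propagates closedness is what allows the closedness hypothesis of Theorem~\ref{soundBMV1} to travel down the whole trajectory from the single assumption $v_0\models\mathcal{SL}$; everything else is a routine bookkeeping argument analogous to the proof of Theorem~\ref{completeBMV0soundBMV0}.
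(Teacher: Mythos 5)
Your proposal is correct and follows essentially the same route as the paper, whose proof of this theorem is just the one-line remark that it ``follows from Theorems~\ref{completeBMV1} and~\ref{soundBMV1}.'' You in fact supply more detail than the paper does --- notably the amalgamation of the per-step solutions, the inductive propagation of closedness needed to discharge the hypothesis of Theorem~\ref{soundBMV1}, and the observation that minimal closure of $v_{j+1}$ forces $\mathit{Form}(\mathcal{D})^{v_j,a_{j+1}}$ to be unsatisfiable (the converse reading of the argument inside the proof of Theorem~\ref{soundBMV1}) --- all of which is sound.
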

\begin{proof}
The result follows from Theorems~\ref{completeBMV1} and~\ref{soundBMV1}.
\qed\end{proof}

\begin{remark}[Embedding of $\mathcal{B}$ into \BMVi]
We conclude this section by showing that \BMVi\ is at least as expressive as
$\mathcal{B}$.
To this aim it suffices to describe how to translate a
 domain description $\mathcal{D}$ of $\mathcal{B}$
to a \BMVi domain description $\mathcal{D}'$,
in such a way that the semantics of the domain is preserved.
Let us outline the main points of such a translation.

Each Boolean fluent $f$ in $\mathcal{D}$
can be modeled in  \BMVi\ by a multi-valued fluent $f'$ whose
domain is $\mathcal{V} = \{0,1\}\subseteq\z$.

\noindent
Each action in $\mathcal{D}$ uniquely corresponds to an action in $\mathcal{D}'$.

\noindent
Let us consider a
dynamic causal law of $\mathcal{D}$, e.g.,
$${\codetext{causes}(a, f, [f_1,\ldots,f_k, \codetext{neg}(g_1),\ldots,\codetext{neg}(g_h)])}.$$
This law is translated in $\mathcal{D}'$ as
$${\codetext{causes}(a, f'=1,[f'_1=1,\ldots,f'_k=1, g'_1=0, \ldots, g'_h=0])}.$$
In a similar manner, static laws and executability conditions of $\mathcal{D}$
are mapped into \BMVi.
Consequently, the two domain descriptions $\mathcal{D}$ and $\mathcal{D}'$
describe two isomorphic transition systems.
\end{remark}

\subsection{Adding annotated fluents and non-Markovian references}\label{sema_2}

In this section, we generalize the treatment described in \Sect{BMVisemantics}
in order to provide a state-transition semantics for \BMV\ suitable to cope with temporal
references.
The first form of temporal references involves annotated fluents and concerns relative access
to their past values, w.r.t.\ the current state.
There is no restriction on the occurrences of this kind of annotated fluents: they
might be used in all laws of a domain description.
In this case, the extension of the semantics described in \Sect{BMVisemantics}
comes rather naturally.
Since references may relate different points in time along the plan,
the approach consists of considering
sequences of states instead of pairs of states, to define the transition constraints.

Regarding  references to future points in time (i.e., positively annotated
fluents), we recall that they
are admitted in the consequences of dynamic causal laws only.
This restriction allows the treatment of future and past references by
exploiting the very same mechanisms.
The semantics is further enriched in \Sect{sema_4} to encompass state constraints specified
by using absolute time references, as well as costs.

\medskip

Let $\vec{v} = \langle v_0,\dots,v_\n \rangle$ be a  state sequence.
Given $\vec{v}$, and $i \in \{0,\dots,\n\}$,
we define the concept of \emph{value} of $\varphi$ in  $\vec{v}$ at time $i$
(with abuse of notation, denoted by $\vec{v}(i,\varphi)$) as follows:\footnote{A
slightly simplified treatment could be described if only past references are admitted.
In this case, we consider~$i$ to be the current point in time and $j$ to be negative.
The notation could then be simplified by considering just a
prefix $\vec{v} = \langle v_0,\dots,v_i \rangle$ of the state sequence.}
$$
\begin{array}{l}
\vec{v}(i,x) = x  ~~ \mbox{ if $x \in \mathcal{V}$}
\\
\vec{v}(i,f^j) = v_{i+j}(f) ~~ \mbox{ if $f \in \mathcal{F}$, and $0\leqslant i+j\leqslant \n$}
\\
\vec{v}(i,f^j) = v_0(f) ~~ \mbox{ if $f \in \mathcal{F}$ and $i+j < 0$}
\\
\vec{v}(i,f^j) = v_{\n}(f) ~~ \mbox{ if $f \in \mathcal{F}$ and $i+j>\n$}
\\
\vec{v}(i,abs(\varphi)) = |\vec{v}(i,\varphi)|
\\
\vec{v}(i,-(\varphi)) = -(\vec{v}(i,\varphi))
\\
\vec{v}(i,\varphi_1 \oplus \varphi_2)= \vec{v}(i,\varphi_1) \oplus \vec{v}(i,\varphi_2)
\\
\vec{v}(i, \codetext{rei}(C))= 1 ~~  \mbox{if $\vec{v}\models_i C$}
\\
\vec{v}(i, \codetext{rei}(C))= 0 ~~  \mbox{if $\vec{v}\not\models_i C$}
\end{array}
$$
where $n \in\mathcal{V}$,\: $\oplus \in \{+,-,*,/,\codetext{mod}\}$.

As for (\ref{eq:semOfBMV}) of \Sect{BMVisemantics},
the semantics of reified constraints relies on
the notion of satisfaction, which in turn has to be contextualized to a
specific point in time~$i$.  More formally,
given a fluent constraint $\varphi_1 \:\codetext{op}\:\varphi_2$ and a state sequence
$\vec{v}$,
the notion of satisfaction at time $i$ is defined as
{$  \vec{v} \models_i  \varphi_1  \:\codetext{op}\:\varphi_2 \:\Leftrightarrow
\:
    \vec{v}(i,\varphi_1) \:\codetext{op}\:  \vec{v}(i,\varphi_2)
$.}
The notion $\models_i $ is generalized to the case of
propositional combinations of  fluent constraints in the usual manner.

\smallskip

Given a constraint $C$, let $\GFV(C)$ be the set of annotated fluents
$f^i$, for $i \geqslant 0$, occurring in $C$.
Given a state sequence $\vec{v} = \langle v_0,\dots,v_i\rangle$,
with $0\leqslant i<\n$,
a function $\sigma:\GFV(C) \longrightarrow \mathcal{V}$
is an \emph{$i$-solution} of $C$ w.r.t. $\vec{v}$,
if it holds that
$$\langle v_0,\dots,v_i,\ine(\sigma|_0,v_i),
\overline{(\sigma|_1)}, \dots,\overline{(\sigma|_{\n-(i+1)})}\rangle \models_{i+1} C,$$
where each $\sigma|_k$ (for $k\geqslant 0$)
is the restriction of the assignment $\sigma$ to the fluent annotated with $k$,
and $\overline \mu$ denotes the substitution obtained by completing $\mu$,
with assignment to $\bot$ for all fluents not in $\dom(\mu)$.
Note that we treat the interpretation of the various
operations as strict w.r.t. $\bot$ and we assume satisfied all constraints that
refer to undefined expressions.
Hence, for instance, if $C$ is constraint and there is a
sub-expression $\psi$ of $C$ evaluated as $\bot$, then we assume $\vec{v}\models_i C$.

\begin{example}
\label{rico3}
Let $\n=3$ and $i=1$.
Consider the constraint $C\,\equiv\,(g^0 = f^{-1}+f^{-2})$ and let
$\vec{v} = \langle v_0, v_1\rangle = \langle \{f/2,g/1\}, \{f/1,g/2\}\rangle$.

Then $\sigma = \{g/3\} = \sigma|_0$ is a 1-solution of the constraint~$C$,
since
\begin{itemize}
\item
 $\ine(\sigma|_0, \{f/1,g/2\}) =
 \ine(\{g/3\}, \{f/1,g/2\}) =
 \{f/1,g/3\}$, ~and
\item
$\langle \{f/2,g/1\}, \{f/1,g/2\}, \{f/1,g/3\},
\{f/\bot,g/\bot\},\rangle\models_2 g^0=f^{-1}+f^{-2}$,
in fact, we have that $\vec{v}(2,C)$ is $\vec{v}(2,g^0)=\vec{v}(2,f^{-1}+f^{-2})$,
which is equivalent to $v_2(g^0)=v_1(f)+v_0(f)$.
\end{itemize}
\eoe
\end{example}

A state sequence $\vec{v} = \langle v_0,\dots,v_h \rangle$ is
\emph{closed} w.r.t.\ a set of static causal laws $$\mathcal{SL} =
\{ \codetext{caused}(C_1,D_1),\dots,\codetext{caused}(C_k,D_k)\}$$ if
for all $i \in \{0,\dots,h\}$ it holds that $\vec{v}\models_i
(C_1 \rightarrow D_1) \wedge \cdots \wedge (C_k \rightarrow D_k)$.

We also generalize the notion of minimal closure as follows: given a
state sequence $\vec{v} = \langle v_0, \dots,v_{i}\rangle$
and a  state $v'$ we say that $v'$ is minimally closed w.r.t.
$\vec{v}$, $D$, and $\mathcal{SL}$ if
\begin{itemize}
\item $\langle v_0, \dots, v_i,v' \rangle$ is closed w.r.t.  $\mathcal{SL}$
\item for all sets of fluents $S \subseteq D$, if the state $\Delta(v_{i},v',S)$
    is different from $v'$, then
    $\langle v_0,\dots,v_i,\Delta(v_{i},v',S)\rangle$ is
    not closed w.r.t. $\mathcal{SL}$.
\end{itemize}

The action $a$ is
\emph{executable} in $\vec{v}$ at time $i$
if there is  an  axiom
$\codetext{executable}(a,C)$ such that
$ \vec{v}\models_i  C$.

Let us denote with $Dyn(a)$ the set of dynamic causal laws for an action $a$.
The \emph{effects} of executing $a$ in $\vec{v}$ at time $i$,
denoted by $\Eff(a, \vec{v}, i)$, is
\[ \Eff(a, \vec{v},i) =\bigwedge \left\{ PC \:|\:
        \codetext{causes}(a,PC,C) \in Dyn(a),
         \vec{v}\models_i  C\right\} \]

Given a constraint $C$, we denote by $\mathit{shift}^t(C)$ the constraint obtained
from $C$ by replacing each fluent $f^x$ with $f^{x-t}$.

Let us assume that $\vec{v} = \langle v_0, \dots, v_{i}\rangle$ is
a sequence of complete states and that $\vec{a}$ is a sequence of actions $\langle a_1,\dots,a_{i+1}\rangle$.
The effects of the sequence of actions in
$\vec{v}$ is represented by the formula
$$E(i,\vec{a}, \vec{v}) =
  \bigwedge_{j=0}^i \mathit{shift}^{j-i}\big(\Eff(a_{j+1},\vec{v}, j)\big)
  \wedge
  \bigwedge_{j=0}^i \bigwedge_{f \in \mathcal{F}}
      f^{j-i} = v_{j}(f)$$

Let us observe that this constraint might involve all fluents of
the states $v_0,\dots,v_i$, as well as fluents of future states.
The values of fluents in states $v_0,\dots,v_i$ are fixed by $\vec{v}$.

Let  $\langle \mathcal{D}, \mathcal{O}\rangle$ be a planning problem instance,
$\vec{v} = \langle v_0, \dots, v_{\n}\rangle$ be a sequence
of complete states and $a_1,\ldots,a_{\n}$ be actions.
Then, $\langle v_0,a_1,v_1, \ldots, a_{\n},v_{\n}\rangle$
is a \emph{valid trajectory} if the following conditions
hold:
\begin{itemize}
\item
$\langle v_0, \dots, v_{\n} \rangle$
is closed w.r.t. $\mathcal{SL}$

\item
for each axiom of the form $\codetext{initial}(C)$ in $\mathcal{O}$,
    we have that $\vec{v} \models_0 C$
\item
for each axiom of the form $\codetext{goal}(C)$ in $\mathcal{O}$, we have that
    $\vec{v} \models_{\n} C$
\item
for each $i \in \{0,\dots,\n-1\}$ the following conditions hold
    \begin{itemize}
\item
action $a_{i+1}$ is executable in $\vec{v}$ at time $i$ and
\item
we have that
$v_{i+1}  = \ine(\sigma|_0,v_{i+1})$
where
\begin{itemize}
\item
$\sigma$ is a $i$-solution of the
constraint
$E(i,\langle a_1,\dots,a_{\n}\rangle, \langle v_0,\dots,v_{\n-1}\rangle)$ w.r.t.\
 $\langle v_0, \dots,v_i\rangle$,
\item
$v_{i+1}$ is minimally closed
w.r.t.\ $\langle v_0,\dots,v_i\rangle$,
$\mathcal{F} \setminus \dom(\sigma)$, and $\mathcal{SL}$.
\end{itemize}
\end{itemize}
\end{itemize}

\begin{example}
Let us consider the following domain specification and planning
 problem instance (for $\n=2$):
\[\begin{array}{lcl}
\codetext{fluent}(f,1,5). &~~~~~~~~~~~~ &\\
\codetext{fluent}(g,1,5).&&\\
\codetext{fluent}(h,1,5). &    & \\
\codetext{action}(a).  &    & \\
\codetext{action}(b).  &    & \\
\codetext{executable}(a, \codetext{true}). && \\
\codetext{executable}(b, \codetext{true}).&&\\
\codetext{causes}(a, g^0=g^{-1}+2, \codetext{true}). && \\
\codetext{causes}(b,f^0=g^{-1}+h^{-2}, \codetext{true}). &&\\
\codetext{initially}(f=1).   && \\
\codetext{initially}(g=1). &&\\
\codetext{initially}(h < 3). && \\
\codetext{goal}(f > 4).&&
\end{array}
\]
 Observe that the only valid trajectory is
$$\langle \{f/1,g/1,h/2\}, a, \{f/1,g/3,h/2\}, b, \{f/5,g/3,h/2\}\rangle.$$
The validity can be verified by observing that:
\begin{itemize}
\item
$\{f/1,g/1,h/2\}$ satisfies all the constraints provided in the
        \codetext{initial} declarations;
\item
$\{f/5,g/3,h/2\}$ satisfies the goal constraint $f>4$;
\item
the action $a$ is executable in $\langle v_0\rangle= \langle \{f/1,g/1,h/2\}\rangle$ and
        action $b$ is executable in
$$\langle v_0, v_1\rangle= \langle \{f/1,g/1,h/2\}, \{f/1,g/3,h/2\}\rangle$$
(since both their executability laws and the action conditions are trivially true).

\item
Consider the first state transition and $i=0$ and note that $\GFV(g^0=g^{-1}+2)=\{g\}$.
Then, $\sigma'=\{g/3\}$ is a $0$-solution of $g^0=g^{-1}+2$
w.r.t.\ $\langle \{f/1,g/1,h/2\}\rangle$.
In fact, $\sigma'|_0=\sigma'$,\, $\sigma'|_1=\{\}$, and
        \begin{itemize}
        \item
 $v_1 = \ine(\sigma'|_0,v_0)= \ine(\{g/3\},\{f/1,g/1,h/2\}) = \{f/1,g/3,h/2\}$
        \item
$\langle  v_0, v_1, \overline{\sigma'|_1} \rangle = \langle v_0, v_1, \{f/\bot,g/\bot,h/\bot\} \rangle
\models_1 g^0=g^{-1}+2$.
        \item
$ v_1$ is minimally closed
                w.r.t.
           $\langle \{f/1,g/1,h/2\} \rangle$, $\{f,h\}$ and $\emptyset$.
        \end{itemize}

\item
Consider the second state transition and $i=1$ and note that $\GFV(f^0=g^{-1}+h^{-2})=\{f\}$.
Then, $\sigma''=\{f/5\}$ is a $1$-solution of $f^0=g^{-1}+h^{-2}$
w.r.t.\ $\langle v_0, v_1\rangle$.
In fact, $\sigma''|_0=\sigma''$, and
        \begin{itemize}
        \item
 $v_2 = \ine(\sigma''|_0,v_1)= \ine(\{f/5\},\{f/1,g/3,h/2\}) = \{f/5,g/3,h/2\}$
        \item
$\langle  v_0, v_1, v_2\rangle \models_2 f^0=g^{-1}+h^{-2}$.
        \item
$ v_2$ is minimally closed
                w.r.t.
           $\langle v_0, v_1 \rangle$, $\{g,h\}$ and $\emptyset$.
        \end{itemize}
\end{itemize}
\eoe
\end{example}

\subsection{Abstract implementation of \BMV}\label{abstractImplBMV}

The constraint encoding for \BMV\ is similar to the one developed earlier
for the case
of \BMVi\ (cf., \Figs{cieffebis} and~\ref{cieffequatris}).
In the encoding of a trajectory
$\langle v_0,a_1,v_1, \dots, a_{\n},v_{\n}\rangle$
in \BMVi, we introduced a variable $F_{f}^{v_i}$
to represent the value of the fluent $f$ in the $i^{th}$ state~$v_i$.
In each state transition, say from $v_i$ to $v_{i+1}$,
the implementation of \BMVi\
imposes only constraints involving
the variables/fluents of the current state.
In the language encompassing timed references, each constraint occurring
in the action description
can address the values that fluents assume in any
of the states of the sequence $\vec{v} = \langle v_0, \dots, v_{\n}\rangle$.
Since all the variables representing these values are
present in the encoding,
only the following change is needed to adapt to \BMV\ the
implementation designed for \BMVi:
to obtain from a constraint $C$ (involving fluents), a
constraint $C^{{\vec{v}},i}$ (involving the corresponding variables),
at time $i$,
we replace each $f^j$ with the variable $F^{v_{i+j}}_f$.

By adopting this refined construction for $C^{{\vec{v}},i}$,
we can inherit all the results of \Sect{absConcBMVii}.
In particular, for an action description $\mathcal{D}$,
similarly to what done in \Sect{absConcBMVii}, we
denote by $C_{\mathcal{F}}^{{\vec{v}},a_i}$ and by $\mathit{Form}(\mathcal{D})^{{\vec{v}},a_i}$
the constraints homologous to $C_{\mathcal{F}}^{v_{i-1},v_{i}}$ and $\mathit{Form}(\mathcal{D})^{v_{i-1},a_i}$,
respectively.

The completeness result for \BMV\ directly generalizes that obtained for
\BMVi. With regards to  soundness, the observation made w.r.t.\ \BMVi\
in \Sect{absConcBMVii} still applies.
In fact, let $\langle\mathcal{D}, \mathcal{O}\rangle$ be an instance of a planning problem
where $\mathcal{D}$ is a domain
description and $\mathcal{O}$ contains  axioms of the form $\codetext{initially}(C)$ and $\codetext{goal}(C)$.
We state the following:

\begin{theorem}\label{completeBMVsoundBMV}
There is a valid trajectory ${\vec{v}} = \langle v_0,a_1,v_1,\ldots,v_{\n},v_{\n} \rangle$
if and only if
\begin{itemize}
\item
${\vec{v}}$ is closed w.r.t.\ $\mathcal{SL}$
\item
There is a solution for the constraint
$$
\bigwedge_{\codetext{{\scriptsize initially}}(C)\in\mathcal{O}}C^{{\vec{v}},0}
\:\wedge\:
\bigwedge_{j=0}^{\n-1}C_{\mathcal{F}}^{{\vec{v}},a_{j+1}}
\:\wedge\:
\bigwedge_{\codetext{{\scriptsize goal}}(C)\in\mathcal{O}}C^{{\vec{v}},\n}
$$
\item
For each $j\in\{0,\ldots,\n-1\}$ the formula
$\mathit{Form}(\mathcal{D})^{{\vec{v}},a_{j+1}}$ is unsatisfiable.
\end{itemize}
\end{theorem}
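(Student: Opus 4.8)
The plan is to reduce Theorem~\ref{completeBMVsoundBMV} to timed analogues of the per-transition results Theorems~\ref{completeBMV1} and~\ref{soundBMV1}, and then chain them across the $\n$ transitions exactly as Theorem~\ref{completeBMV1soundBMV1} is obtained in the timeless case. The observation that makes this reduction essentially mechanical is the one already stressed in \Sect{abstractImplBMV}: in the global constraint store the variables $F_f^{v_0},\dots,F_f^{v_\n}$ are all present at once, so a timed reference $f^j$ appearing in a law active at time $i$ is merely a syntactic pointer to the pre-existing variable $F_f^{v_{i+j}}$ (clamped to $F_f^{v_0}$ or $F_f^{v_\n}$ when $i+j$ leaves $\{0,\dots,\n\}$, in agreement with the definition of $\vec{v}(i,\varphi)$). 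Hence $C_{\mathcal F}^{\vec{v},a_{j+1}}$ and $\mathit{Form}(\mathcal D)^{\vec{v},a_{j+1}}$ are the images of $C_{\mathcal F}^{v_j,v_{j+1}}$ and $\mathit{Form}(\mathcal D)^{v_j,a_{j+1}}$ under this renaming, and all auxiliary flags ($Dyn^u_{f,j}$, $Stat_f^v$, and so on) remain uniquely determined once the fluent values are fixed.

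First I would establish the timed version of Theorem~\ref{completeBMV1}: if $\langle\langle v_0,\dots,v_j\rangle,a_{j+1},v_{j+1}\rangle$ is a valid transition, then the values of $\vec{v}$ satisfy $C_{\mathcal F}^{\vec{v},a_{j+1}}$. This is the argument of Theorems~\ref{soundcompleteBMV0} and~\ref{completeBMV1} transcribed through the renaming: executability of $a_{j+1}$ in $\vec{v}$ at time $j$ gives $\vec{v}\models_j\delta_{a,h}$, hence (\ref{cieffebis10}); $v_{j+1}$ completes a $j$-solution $\sigma$ of the effect constraint by inertia, so $v_{j+1}\models_{j+1}\Eff(a_{j+1},\vec{v},j)$, and the $\mathit{shift}$ operators line the annotations up with the correct state variables, which yields (\ref{cieffebis11})--(\ref{cieffebis13}) and (\ref{cieffequatris20})--(\ref{cieffequatris21}); minimal closure w.r.t.\ $\langle v_0,\dots,v_j\rangle$, $\mathcal F\setminus\dom(\sigma)$, $\mathcal{SL}$ gives (\ref{cieffequatris22})--(\ref{cieffequatris23}). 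Dually, I would establish the timed version of Theorem~\ref{soundBMV1}: from a solution of $C_{\mathcal F}^{\vec{v},a_{j+1}}$ with $v_j$ closed, one reads off complete states $v_0,\dots,v_\n$, recovers a $j$-solution $\sigma$ with $v_{j+1}=\ine(\sigma|_0,v_{j+1})$ and $v_{j+1}\models\mathcal{SL}$, and, were $v_{j+1}$ not minimally closed, the witnessing $S\subseteq\mathcal F\setminus\dom(\sigma)$ would make $x=\triangle(v_j,v_{j+1},S)$ satisfy $\mathit{Form}(\mathcal D)^{\vec{v},a_{j+1}}$, whose conjuncts (\ref{formulazza2})--(\ref{formulazza3}) carry over verbatim because every reference in them is resolved against the already-fixed states. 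Assembling: for the forward direction, a valid trajectory is closed w.r.t.\ $\mathcal{SL}$ by definition, satisfies the \codetext{initially}/\codetext{goal} conjuncts since $\vec{v}\models_0 C$ and $\vec{v}\models_\n C$ become satisfaction of $C^{\vec{v},0}$ and $C^{\vec{v},\n}$, satisfies every $C_{\mathcal F}^{\vec{v},a_{j+1}}$ by the first lemma, and makes every $\mathit{Form}(\mathcal D)^{\vec{v},a_{j+1}}$ unsatisfiable by minimal closure; for the converse, extract $v_0,\dots,v_\n$ from a solution and apply the second lemma at each step to certify that every $\langle\langle v_0,\dots,v_j\rangle,a_{j+1},v_{j+1}\rangle$ is a valid transition.

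The step I expect to require the most care is the compatibility between the clamped translation $C^{\vec{v},i}$ used in the encoding and the $\bot$-padding in the definition of a $j$-solution: the semantics clamps out-of-range references to $v_0$ or $v_\n$, whereas the $j$-solution definition completes the ``future tail'' with $\bot$ and then declares satisfied any constraint containing a $\bot$-subexpression. One has to verify that, because a trajectory consists of exactly $\n+1$ complete states and positively annotated fluents occur only in the effect parts of dynamic laws, these two conventions agree on all the states that actually occur in $\vec{v}$, so that the translation neither drops nor invents constraints. Once this is pinned down, the remaining work is a routine transcription of the $\mathcal B$- and \BMVi-level proofs, and the theorem follows from the two per-transition lemmas exactly as Theorem~\ref{completeBMV1soundBMV1} follows from Theorems~\ref{completeBMV1} and~\ref{soundBMV1}.
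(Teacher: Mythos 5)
Your proposal is correct and follows essentially the same route as the paper, which in fact states Theorem~\ref{completeBMVsoundBMV} without an explicit proof and relies on exactly the reduction you describe: the timed constraints $C_{\mathcal{F}}^{\vec{v},a_{j+1}}$ and $\mathit{Form}(\mathcal{D})^{\vec{v},a_{j+1}}$ are the renamings of their \BMVi\ counterparts over the globally available state variables, so Theorems~\ref{completeBMV1} and~\ref{soundBMV1} lift per transition and are chained as in Theorem~\ref{completeBMV1soundBMV1}. Your additional check on the compatibility between the clamping of out-of-range references and the $\bot$-padding in the definition of a $j$-solution addresses a point the paper glosses over, and is resolved as you indicate.
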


\subsection{Adding costs and global constraints}\label{sema_4}

Cost and time constraints can be introduced by filtering the
solutions characterized by Theorem~\ref{completeBMVsoundBMV}, in
order to rule out
the unsatisfactory solutions.
More precisely, given a trajectory
$\langle v_0,a_1,v_1,\dots,a_\n,v_\n \rangle$
satisfying the requirements of Theorem~\ref{completeBMVsoundBMV},
we say that the trajectory satisfies
a set of global constraints as described in \Sects{absolutetemporalconstraints}
and~\ref{costconstraints}
 if all the constraints described next hold.

Let us start by investigating the cost constraints.
Let
$$\codetext{action\_cost}(a_1,FE_1), \dots,\codetext{action\_cost}(a_{\n},FE_{\n})$$ and
$\codetext{state\_cost}(FE')$
be specified in the action description.\footnote{As mentioned, if some of these
assertion is missing a default cost 1 is assumed.}

Let us recall that the general form of cost constraints is $\codetext{cost\_constraint}(C)$,
 where $C$ is a constraint defined as in \Sect{BMVsyntax}, with
the added ability to refer to
the atoms \codetext{plan}, \codetext{goal}, and \codetext{state($i$)} wherever
fluents can be used.
Consequently, we extend our definition of value of an expression $\varphi$
in $\vec{v}=\langle v_0,\dots,v_\n \rangle$ at time $i$ (for all $j$):
$$
\begin{array}{l}
\vec{v}(j,\codetext{plan}) = v_{0}(FE_1)+\cdots+v_{\n-1}(FE_{\n})
\\
\vec{v}(j,\codetext{goal}) = v_{\n}(FE')
\\
\vec{v}(j,\codetext{state($i$)}) = v_{i}(FE') ~~ \mbox{ if $0\leqslant i\leqslant \n$}
\end{array}
$$
(assigning cost constraints to to states outside the plan is senseless.
However, for completeness, for $i < 0$ or $i>\n$ we set
$\vec{v}(j,\codetext{state($i$)}) =  0$ but any other choice ---
e.g., $\bot$,  or the values on states $0$ or $\n$ --- is reasonable).
This modification allows us to derive the notion of satisfaction of
a cost constraint $C$
from the notion of satisfaction
defined in \Sect{abstractImplBMV}.
As particular cases, we obtain that:
\begin{itemize}
\item
for each assertion  $\codetext{cost\_constraint}(\codetext{plan}\,\op\,k)$
the plan cost
$(v_{0}(FE_1)+\cdots+v_{\n-1}(FE_{\n}))$ has to satisfy the
stated constraint, i.e.,
it must hold that
$(v_{0}(FE_1)+\cdots+v_{\n-1}(FE_{\n}))\,\op\, k$;

\item
for each assertion $\codetext{cost\_constraint}(\codetext{goal} \,\op\, k)$,
the cost $v_\n(FE')$ of the goal state must satisfy the constraint:
$v_\n(FE')  \,\op\, k$;
\item
for each assertion
$\codetext{cost\_constraint}(\codetext{state($i$)} \,\op\, k)$,
the cost $v_i(FE')$ assigned to the  $i^{th}$  state
has to satisfy the constraint  $v_i(FE')  \,\op\, k$.
\end{itemize}

The handling of time constraints requires the following modifications:
\begin{itemize}
\item
for each assertion
$\codetext{time\_constraint}(C)$,
it holds that $\langle v_0,\dots,v_\n \rangle \models_0 C$, where each timed fluent
$f{\verb"@"}i$ is evaluated as $v_i(f)$;

\item
for each assertion of the form
$\codetext{holds}(C,i)$ it holds that $\langle v_0,\dots,v_\n \rangle  \models_i C$;

\item
for each assertion of the form
$\codetext{always}(C)$, it holds that
$\langle v_0,\dots,v_\n \rangle  \models_i C$
for all $i\in\{0,\dots,\n\}$.
\end{itemize}

Moreover, if $\codetext{minimize\_cost}(FE'')$ is specified,
then there exists no other trajectory $\vec{v}'$ such that
$\vec{v}'(\n,FE'') < \vec{v}(\n,FE'')$.
As particular cases, we have that
\begin{itemize}
\item
if \codetext{minimize\_cost}(\codetext{plan}) is specified,
then there exists no other trajectory having a smaller plan cost;
\item
if \codetext{minimize\_cost}(\codetext{goal}) is specified in the action description,
then there is no trajectory $\langle v'_0,a'_1,v'_1,\dots,a'_\n,v'_\n \rangle$,
fulfilling all constraints, and such that  $v'_\n(FE') < v_\n(FE')$.
\end{itemize}

In this manner, we characterize the solutions of a given planning problem
to be exactly those solutions described by Theorem~\ref{completeBMVsoundBMV}
that additionally satisfy all the global constraints,
the requirements on costs, and the time constraints
expressed in the action description.
Soundness and completeness properties directly
carry over.

\begin{figure}
\begin{minipage}[t]{1.0\textwidth}
\begin{codice}{55}
\label{bmapcode056} set\_one\_fluent(fluent(FluentName,IV),~ActionOccs,~Now,~States) :- \\
\label{bmapcode057} \tab    findall([Act,OP,FE1,FE2,L],\\
\label{bmapcode058} \tab\tab\tab~(causes(Act,FC,L), zero\_subterm(FluentName,FC),\\
\label{bmapcode059} \tab\tab\tab~FC =..~[OP,FE1,FE2]), Dyn),\\
\label{bmapcode060} \tab    state\_select(Now, States, FromState),\\
\label{bmapcode061} \tab    Next is Now+1,\\
\label{bmapcode062} \tab    state\_select(Next, States, ToState),\\
\label{bmapcode063} \tab    member(fluent(FluentName,EV), ToState),\\
\label{bmapcode064} \tab    dynamic(Dyn, ActionOccs, FromState, DynFormula, Next, States),\\
\label{bmapcode065} \tab    cluster\_rules(FluentName, Stat), \%\%\% These 2 lines can be dropped in\\
\label{bmapcode066} \tab    static(Stat, States, Next, StatFormula),\%\%\% absence of static laws\\
\label{bmapcode067} \tab    bool\_disj(DynFormula, StatFormula, Formula),\\
\label{bmapcode068} \tab    \verb"#\" Formula \verb"#=>" EV \verb"#=" IV.
\smallskip\\
\label{bmapcode069} dynamic([], \_, \_, [], \_, \_).\\
\label{bmapcode070} dynamic([[Act,OP,FE1,FE2,Prec]|Rest],AOccs,State,[Flag|PF1],Now,States)~:-\\
\label{bmapcode071} \tab    member(action(Act,VA), AOccs),\\
\label{bmapcode072} \tab    Last is Now-1, \%\%\% Looks for preconditions in FromState and before\\
\label{bmapcode073} \tab    get\_precondition\_vars(Last, Prec, States, ListPV),\\
\label{bmapcode074} \tab    length(Prec, NPrec),\\
\label{bmapcode075} \tab    sum(ListPV, SumPrec),\\
\label{bmapcode076} \tab    \%\%\% The effect is in the next state (Now=Last+1)\\
\label{bmapcode077} \tab    rel\_parsing(FE1, Val1, Now, States),\\
\label{bmapcode078} \tab    rel\_parsing(FE2, Val2, Now, States),\\
\label{bmapcode079} \tab    exp\_constraint(Val1, OP, Val2, C),\\
\label{bmapcode080} \tab    (VA  \verb"#/\" (SumPrec \verb"#=" NPrec)) \verb"#<=>" Flag, \\
\label{bmapcode081} \tab    Flag \verb"#=>" C,\\
\label{bmapcode082} \tab    dynamic(Rest, ActionOccs, State, PF1, Now, States).
\smallskip\\
\label{bmapcode083} rel\_parsing(Num, Num, \_, \_) :- \\
\label{bmapcode084} \tab   integer(Num), !.
\\
\label{bmapcode085} rel\_parsing(rei(RC), Val, Time, States) :- \\
\label{bmapcode086} \tab   RC =..~[OP,E1,E2],\\
\label{bmapcode087} \tab   rel\_parsing(E1, Val1, Time, States),\\
\label{bmapcode088} \tab   rel\_parsing(E2, Val2, Time, States),\\
\label{bmapcode089} \tab   exp\_constraint(Val1, OP, Val2, Val), !.
\\
\label{bmapcode090} rel\_parsing(abs(FE), Val, Time, States) :- \%\%\% similar for -(FE) \\
\label{bmapcode091} \tab   rel\_parsing(FE, Val1, Time, States),\\
\label{bmapcode092} \tab   Val \verb"#=" abs(Val1), !.
\\
\label{bmapcode093} rel\_parsing(FE, Val, Time, States) :- \\
\label{bmapcode094} \tab   FE =..~[OP,FE1,FE2], \\
\label{bmapcode095} \tab   member(OP, [+,-,mod,/,*]),\\
\label{bmapcode096} \tab   rel\_parsing(FE1, Val1, Time, States),\\
\label{bmapcode097} \tab   rel\_parsing(FE2, Val2, Time, States),\\
\label{bmapcode098} \tab   ( OP = + -> Val \verb"#=" Val1 + Val2;\\
\label{bmapcode099} \tab     OP = - -> Val \verb"#=" Val1 - Val2;\\
\label{bmapcode100} \tab     OP = * -> Val \verb"#=" Val1 * Val2;\\
\label{bmapcode101} \tab     OP = / -> Val \verb"#=" Val1 / Val2;\\
\label{bmapcode102} \tab     OP = mod -> Val \verb"#=" Val1 mod Val2 ), !.
\\
\label{bmapcode103} rel\_parsing(Fluent\verb"^"Delta, Val, Time, States) :- \\
\label{bmapcode104} \tab   H is Time+Delta,\\
\label{bmapcode105} \tab   length(States,N),\\
\label{bmapcode106} \tab    in\_interval(H,N,E),\\
\label{bmapcode107} \tab    state\_select(E, States, State),\\
\label{bmapcode108} \tab    member(fluent(Fluent,Val),State),!.
\\
\label{bmapcode109} rel\_parsing(Fluent \verb"@" Time, Val, \_, States) :- \\
\label{bmapcode110} \tab   state\_select(Time,States,State),\\
\label{bmapcode111} \tab   member(fluent(Fluent,Val),State), !.
\\
\label{bmapcode112} rel\_parsing(Fluent, Val, Time, States) :- \\
\label{bmapcode113} \tab   state\_select(Time, States, State),\\
\label{bmapcode114} \tab   member(fluent(Fluent,Val), State).
\medskip\\
\label{bmapcode115} parsing(Fluent, Val, State) :- \\
\label{bmapcode116} \tab   rel\_parsing(Fluent, Val, 0, [State]).
\medskip\\
\label{bmapcode117} exp\_constraint(L, OP, R, C) :- \\
\label{bmapcode118} \tab       (OP == eq  -> C \verb"#<=>" L \verb"#="  R;\\
\label{bmapcode119} \tab        OP == neq -> C \verb"#<=>" L \verb"#\=" R;\\
\label{bmapcode120} \tab        OP == geq -> C \verb"#<=>" L \verb"#>=" R;\\
\label{bmapcode121} \tab        OP == leq -> C \verb"#<=>" L \verb"#=<" R;\\
\label{bmapcode122} \tab        OP == gt  -> C \verb"#<=>" L \verb"#>"  R;\\
\label{bmapcode123} \tab        OP == lt  -> C \verb"#<=>" L \verb"#<"  R).
\end{codice}
\end{minipage}
\caption{\label{Fig:codice_BMVi}Relevant parts of the \BMV\ implementation.}
\end{figure}

\section{Concrete Implementation of \BMV}\label{concreteImpleBMV}

The overall structure of the concrete implementation of the
language \BMV\ follows that used for implementing
the $\mathcal{B}$ language. We focus here on the main differences.

To start, let us briefly describe the code depicted in \Fig{Fig:codice_BMVi} and
show that this concrete implementation reflects the abstract one
defined in \Fig{cieffebis}.\footnote{Observe that the concrete implementation
uses the functors {\tt eq}, {\tt neq}, etc. to denote the primitive
constraints $=$, $\neq$, etc.}
Hence, we preliminarily ignore lines (\ref{bmapcode065})--(\ref{bmapcode066})
of \Fig{Fig:codice_BMVi}.

The first difference w.r.t.\ the implementation of $\mathcal{B}$ (cf., \Sect{implemB})
is that each fluent variable is
assigned to a finite set domain, drawn from the fluent declaration---instead
of being treated as a Boolean variable.

The predicate  \codetext{set\_one\_fluent}
(lines (\ref{bmapcode056})--(\ref{bmapcode068})) has a similar role
as in the implementation of $\mathcal{B}$.
Given the fluent \codetext{FluentName},
the relevant parts of the dynamic causal laws
are collected in  lines (\ref{bmapcode057})--(\ref{bmapcode059}).
The predicate \codetext{zero\_subterm} is an auxiliary predicate
that detects if a constraint involves a fluent---i.e., it
looks for an occurrence of \codetext{FluentName}
in the constraint
imposed by the dynamic causal laws.
All the fluents explicitly involved
in the consequence of a dynamic law are collected.
In line (\ref{bmapcode063}), the variable \codetext{EV}
identifying the fluent \codetext{FluentName} in the following state
\codetext{ToState} is retrieved.

The  predicate \codetext{dynamic} (line (\ref{bmapcode064}))
collects the list of Boolean flags \codetext{DynFormula}.
If one of the variables in \codetext{Dyn}
is true then the variable \codetext{EV} is involved in
a constraint imposed by a dynamic causal law.
In line (\ref{bmapcode067}) the disjunction of these
flag variables is computed in \codetext{Formula}
(let us ignore, for the time being, the variable \codetext{StatFormula}).
In line (\ref{bmapcode068}) the inertia constraint is added:
if \codetext{Formula} is false then the value of the fluent
is left unchanged by the transition (i.e., $\codetext{IV}=\codetext{EV}$).
This corresponds to the $\ine(\cdot)$ operator.

For each action \codetext{Act} affecting the value \codetext{EV},
the predicate \codetext{dynamic} (lines (\ref{bmapcode069})--(\ref{bmapcode082}))
retrieves its preconditions
and builds the constraint \codetext{C} involving \codetext{EV}
that must be imposed if the preconditions
are satisfied.
The flag variable \codetext{Flag} in line (\ref{bmapcode080})
is introduced to keep track of the fact that the action
has occurred (i.e., \codetext{VA} is true) and
the corresponding precondition holds.
If \codetext{Flag} is true then  the constrain \codetext{C}  is
asserted (line (\ref{bmapcode081})).
All flags are stored in a list (cf., the variable \codetext{DynFormula}
in line (\ref{bmapcode064})).

Lines (\ref{bmapcode083})--(\ref{bmapcode114}) provide an excerpt of the definition
of the predicate \codetext{rel\_parsing}.
This predicate is used to
transform fluent expressions to internal
expressions involving fluent variables.
\codetext{States} is a list of states (each of them, in turn is a
list of all the fluent variables).
The first argument is the fluent expression and the second one
is the output internal expression.
The argument \codetext{Time} represents the specific point in time
in which a fluent is referred to (cf., the variable \codetext{Now}
used in lines (\ref{bmapcode069})--(\ref{bmapcode082}) and (\ref{bmapcode124})--(\ref{bmapcode134})
to specify the precise point in time in which a fluent expression/constraint has to be evaluated).
The predicate \codetext{in\_interval} called in line (\ref{bmapcode106})
sets $\codetext{E}=\codetext{H}$ if $0 \leq \codetext{H} \leq \n$,
$\codetext{E}=0$ (resp., $\codetext{E}=\n$) if $\codetext{H}<0$ (resp., $\codetext{H} > \n$).
Similarly, predicate \codetext{exp\_constraint} (lines (\ref{bmapcode117})--(\ref{bmapcode123}))
transforms fluent constraints into the corresponding constraints
on the fluent variables.

The above described fragment of implementation is completed with the code needed
to handle initial and goal state specifications. Namely,
for a specific instance of a planning problem $\langle\mathcal{D}, \mathcal{O}\rangle$,
as done for $\mathcal{B}$, all constraint on the initial state (resp., those
on the goal state) are reflected by constraining the variables $F_f$ in the
representation of the initial (resp., final) state.

We  proceed by splitting the
correctness proof into steps.
We can now state the following
result.\footnote{When establishing completeness an soundness results for the
concrete implementation, we assume the same properties hold for
the real implementation of the CLP(FD) solver at hand (in our case, SICStus Prolog).}

\begin{theorem}\label{concretesoundcompleteNostatic}
The concrete implementation (partially depicted in \Fig{Fig:codice_BMVi}) is
correct and complete w.r.t.\
the system of constraints of \Fig{cieffebis}.
\end{theorem}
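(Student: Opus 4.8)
The plan is to prove the two halves of the claim -- completeness (every solution of the abstract constraint system $C_{\mathcal{F}}^{u,v}$ of \Fig{cieffebis} is produced by the code) and correctness (every answer computed by the code is a solution of $C_{\mathcal{F}}^{u,v}$) -- by a direct, line-by-line matching of the predicates of \Fig{Fig:codice_BMVi} against the constraints (\ref{cieffebis9})--(\ref{cieffebis13}). Fix a domain description $\mathcal{D}=\langle\mathcal{DL},\mathcal{EL},\emptyset\rangle$ with no annotated fluents (so that \Fig{cieffebis} is the relevant abstract system), a fluent $f$, and one transition step from state $u$ (the logic term \codetext{FromState}, evaluated at time point \codetext{Now}) to state $v$ (\codetext{ToState}, at \codetext{Next}, i.e.\ \codetext{Now}$+1$). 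The first step is to set up the dictionary between the two formalisms: \codetext{IV} is the variable $F_f^u$, \codetext{EV} is $F_f^v$, the variable \codetext{VA} retrieved in line~(\ref{bmapcode071}) for an action $a$ is $A_a^u$, and each \codetext{Flag} built in line~(\ref{bmapcode080}) for the $j$-th dynamic law having $f$ in its effect is the auxiliary variable $Dyn^u_{f,j}$ of (\ref{cieffebis11}). The domain declarations (\ref{cieffebis9}), the unique-action constraint $\sum_a A_a^u=1$, and the executability implications (\ref{cieffebis10}) are posted not in the depicted fragment but in the predicates that build \codetext{States} and \codetext{ActionsOcc} and in the executability handler (structurally the same as in \Fig{execfig}); I would dispatch these correspondences first and then concentrate on \codetext{set\_one\_fluent} and \codetext{dynamic}.

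The core of the argument is then four routine verifications. First, \emph{faithfulness of the expression translation}: on a domain without time annotations, \codetext{rel\_parsing} applied to a fluent expression $\varphi$ at a time $t$ returns a term denoting exactly $v(\varphi)$ in the sense of (\ref{eq:semOfBMV}) -- clauses (\ref{bmapcode083})--(\ref{bmapcode102}) handle the numeric, \codetext{abs}, arithmetic, and \codetext{rei} cases by structural induction, and clauses (\ref{bmapcode112})--(\ref{bmapcode114}) resolve a plain fluent to its variable in the state selected by \codetext{Time} -- while \codetext{exp\_constraint} reifies the relational operators into the matching finite-domain constraints; hence the constraint \codetext{C} assembled in lines~(\ref{bmapcode077})--(\ref{bmapcode079}) from the effect of a dynamic law, evaluated at \codetext{Now}, is precisely $C_{f,j}^v$. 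Second, \emph{the $Dyn$ flags}: line~(\ref{bmapcode080}) equates \codetext{Flag} with the conjunction of \codetext{VA} and the test that every sub-condition of the $j$-th law's precondition holds in \codetext{FromState}; since \codetext{get\_precondition\_vars} reifies each atomic condition and the sum of those reifications equals their number exactly when all of them hold, this equates \codetext{Flag} with the translation of $A_{a_{i_{f,j}}}^u\wedge\alpha^u_{f,j}$, so line~(\ref{bmapcode080}) realizes (\ref{cieffebis11}). Third, \emph{effects}: line~(\ref{bmapcode081}), ``\codetext{Flag} implies \codetext{C}'', is (\ref{cieffebis12}). Fourth, \emph{inertia}: ignoring \codetext{StatFormula}, i.e.\ lines~(\ref{bmapcode065})--(\ref{bmapcode066}) as agreed, the disjunction \codetext{Formula} computed in line~(\ref{bmapcode067}) equals $\bigvee_{j=1}^{m_f}Dyn^u_{f,j}$, so line~(\ref{bmapcode068}) -- ``if \codetext{Formula} is false then \codetext{EV} and \codetext{IV} are forced equal'' -- is exactly the inertia constraint (\ref{cieffebis13}).

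With the constraints for a single fluent matched, completeness and correctness follow by observing that the two systems coincide up to the auxiliary variables $Dyn^u_{f,j}$/\codetext{Flag}, which are functionally determined by the fluent and action variables through the biconditional (\ref{cieffebis11}) (equivalently, line~(\ref{bmapcode080})); hence a valuation of the fluent and action variables solves one system if and only if it extends -- uniquely -- to a solution of the other, and the \codetext{labeling} call of \Fig{mainplan} enumerates exactly those valuations. The correspondence then lifts from a single $f$ to $C_{\mathcal{F}}^{u,v}$ (the conjunction over all fluents together with $\sum_a A_a^u=1$) and, through the recursion of \codetext{set\_transitions} over consecutive states, to an entire trajectory; combined with Theorem~\ref{soundcompleteBMV0} this also re-derives soundness and completeness with respect to the transition-system semantics. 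The main obstacle is not depth but bookkeeping: checking that \codetext{Now} versus \codetext{Last} (which is \codetext{Now}$-1$) in lines~(\ref{bmapcode072})--(\ref{bmapcode079}) makes precondition sub-expressions be read from $u$ while effect sub-expressions are imposed on $v$; confirming that the ``sum equals length'' idiom for preconditions genuinely coincides with the propositional reading of $\alpha^u_{f,j}$ for every form of condition returned by \codetext{get\_precondition\_vars}; and verifying that the mutually recursive clauses of \codetext{rel\_parsing} terminate and mirror the clauses of (\ref{eq:semOfBMV}) one by one.
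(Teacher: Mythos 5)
Your proof is correct and follows essentially the same route as the paper's own argument, which is just a line-by-line correspondence: constraint~(\ref{cieffebis9}) via the CLP domain declarations, (\ref{cieffebis11})--(\ref{cieffebis12}) via lines~(\ref{bmapcode057})--(\ref{bmapcode064}) (i.e., the \codetext{Flag} construction and the implication \codetext{Flag}~\verb"#=>"~\codetext{C}), (\ref{cieffebis13}) via line~(\ref{bmapcode068}), and (\ref{cieffebis10}) together with the unique-action constraint via the executability code shared with the $\mathcal{B}$ implementation. Your version merely makes explicit the bookkeeping (the variable dictionary, the faithfulness of \codetext{rel\_parsing}/\codetext{exp\_constraint}, and the functional determination of the auxiliary $Dyn$ flags) that the paper leaves implicit.
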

\begin{proof}
This result immediately follows from the above argument.
In fact, the constraint~(\ref{cieffebis9}) of \Fig{cieffebis}
is implicitly rendered by domain assignment for CLP variables.
Constraints~(\ref{cieffebis11}) and (\ref{cieffebis12}) are dealt with in
lines~(\ref{bmapcode057})--(\ref{bmapcode064}).
Line~(\ref{bmapcode068}) imposes constraint~(\ref{cieffebis13}).
Concerning the sequentiality of the plan and the executability conditions
(i.e., constraint~(\ref{cieffebis10})), we can observe that
the implementation does not differ from that of $\mathcal{B}$
(in \Fig{Fig:codice_BMVi} we omitted the corresponding code, see \Fig{execfig}).
\qed\end{proof}

\begin{figure}[thb]
\begin{codice}{123}
\label{bmapcode124} static([], \_, \_, []).\\
\label{bmapcode125} static([[OP,FE1,FE2,Cond]|Others], States, Now, [Flag|Flags]) :- \\
\label{bmapcode126} \tab    get\_precondition\_vars(Now, Cond, States, List),\\
\label{bmapcode127} \tab    length(List, NL),\\
\label{bmapcode128} \tab    sum(List, Result),\\
\label{bmapcode129} \tab    rel\_parsing(FE1, Val1, Now, States),\\
\label{bmapcode130} \tab    rel\_parsing(FE2, Val2, Now, States),\\
\label{bmapcode131} \tab    exp\_constraint(Val1, OP, Val2, C),\\
\label{bmapcode132} \tab    (Result \verb"#=" NL) \verb"#<=>" Flag,\\
\label{bmapcode133} \tab    Flag \verb"#=>" C,\\
\label{bmapcode134} \tab    static(Others, States, Now, Flags).
\end{codice}
\caption{\label{Fig:BMVii}Static causal laws treatment}
\end{figure}

Let us now consider the presence of static causal laws.
In \Fig{Fig:BMVii}, we list the predicate used to add constraints for the static causal laws.
Notice that the concrete implementation of
\Fig{Fig:BMVii} contains a discrepancy with respect to the
abstract one of \Fig{cieffequatris}.
In particular, the concrete implementation does not deal with an intermediate state (named $v$
in the abstract implementation).
The fluents of the target state are computed
by exploiting direct relationships with the starting state of the transition.
This allows us to introduce fewer CLP variables.

In line~(\ref{bmapcode065}) of \Fig{Fig:codice_BMVi} the predicate \codetext{cluster\_rules} collects
all the (static) conditions imposed on the fluents of the cluster of \codetext{FluentName}.
The call to the predicates \codetext{static} (line (\ref{bmapcode066}))
collects the list of Boolean flags \codetext{StatFormula}
which are used to model the constraints~(\ref{cieffequatris22})
and~(\ref{cieffequatris23}) of \Fig{cieffequatris}.
In line (\ref{bmapcode067}), the disjunction of these
flag variables, together with those originating from the dynamic causal laws
(i.e., \codetext{DynFormula}), is computed in \codetext{Formula}, as explained above.

For each condition implied by a static causal law,
the predicate \codetext{static} (lines (\ref{bmapcode124})--(\ref{bmapcode134}))
builds the constraint \codetext{C}
that must be imposed to ensure closure.
The flag variable \codetext{Flag} in line (\ref{bmapcode132})
is introduced to reflect the  satisfaction of the constraint.
If \codetext{Flag} is true then  the constrain \codetext{C}  is
asserted (line (\ref{bmapcode133})).
All such flags are stored in the list \codetext{Flags}
(cf., the variable \codetext{StatFormula}).

We have the following result:

\begin{theorem}\label{completOfConcreteBMVii}
The concrete implementation (partially depicted in \Figs{Fig:codice_BMVi} and \ref{Fig:BMVii}) is
complete w.r.t.\ the system of constraints of \Figs{cieffebis} and \ref{cieffequatris}.
\end{theorem}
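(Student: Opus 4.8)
The plan is to extend the line-by-line correspondence used in the proof of Theorem~\ref{concretesoundcompleteNostatic}. That theorem already shows that the fragment of \Fig{Fig:codice_BMVi} obtained by ignoring lines~(\ref{bmapcode065})--(\ref{bmapcode066}) faithfully renders constraints~(\ref{cieffebis9})--(\ref{cieffebis13}) of \Fig{cieffebis} (plus the sequentiality and executability constraints borrowed from the $\mathcal{B}$ implementation). Hence it remains to show that, once lines~(\ref{bmapcode065})--(\ref{bmapcode066}) and the predicate \codetext{static} of \Fig{Fig:BMVii} are taken into account, every solution of the abstract system of \Figs{cieffebis} and~\ref{cieffequatris} --- an assignment to the variables $F_f^u$, $F_f^v$, $F_f^w$, the action variables $A_a^u$, and the auxiliary Booleans --- yields, after deleting the intermediate-state variables $F_f^v$, a solution of the constraints posted by the concrete code.

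First I would match the static machinery to constraints~(\ref{cieffequatris15})--(\ref{cieffequatris21}). Constraint~(\ref{cieffequatris15}), $F_f^w\in\dom(f)$, is discharged, exactly as in Theorem~\ref{concretesoundcompleteNostatic}, by the finite-domain declaration attached to the fluent variable of \codetext{ToState} retrieved in line~(\ref{bmapcode063}). For constraint~(\ref{cieffequatris21}), note that line~(\ref{bmapcode065}) (\codetext{cluster\_rules}) collects exactly the laws $\mathcal{SL}_{L_f}=\{\codetext{caused}(G_{f,1},D_{f,1}),\dots,\codetext{caused}(G_{f,h_f},D_{f,h_f})\}$ of the cluster of $f$, and that for each such law lines~(\ref{bmapcode126})--(\ref{bmapcode133}) of \codetext{static} introduce a Boolean flag equivalent to $\codetext{rei}(G_{f,j}^w)$ (line~(\ref{bmapcode132}), via the encoding of the conjunctive precondition as a sum equal to its length, together with \codetext{rel\_parsing} and \codetext{exp\_constraint} evaluated at time \codetext{Next}) and post $\codetext{Flag}\rightarrow D_{f,j}^w$ (line~(\ref{bmapcode133})); conjoined over $j$ this is precisely $\bigwedge_j (G_{f,j}^w\rightarrow D_{f,j}^w)$. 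Constraint~(\ref{cieffequatris20}) and the dynamic flags $Dyn^u_{f,j}$ are inherited from the dynamic part handled in Theorem~\ref{concretesoundcompleteNostatic} (lines~(\ref{bmapcode064}) and~(\ref{bmapcode069})--(\ref{bmapcode082})).

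The substantive step is to account for constraints~(\ref{cieffequatris22})--(\ref{cieffequatris23}) together with the fact, flagged in the paper, that the concrete code keeps no variables for the intermediate state $v$. Here I would argue that in any abstract solution the value of $v$ is forced from $u$ and the fired dynamic laws: for $g\in L_f$ one has $F_g^v=F_g^u$ unless $g$ occurs in the effect $C_{f,j}$ of some fired law ($Dyn^u_{f,j}$ true), and $F_g^w=F_g^v$ is required by~(\ref{cieffequatris23}) exactly when $Stat_f^v$ is true, i.e.\ exactly when no cluster fluent is touched by the dynamics. That latter condition coincides with the disjunction \codetext{Formula} of line~(\ref{bmapcode067}) --- the join of \codetext{DynFormula} and \codetext{StatFormula} --- being false, so that the concrete inertia constraint of line~(\ref{bmapcode068}), $\neg\codetext{Formula}\rightarrow \codetext{EV}=\codetext{IV}$, forces $F_g^w=F_g^u$ there and is vacuous otherwise. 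Combining this with the observations of the previous paragraph, the restriction of the abstract solution to the $u$-variables, the $w$-variables, and the action variables (extended by the unique values the concrete flags receive through their defining biconditionals) satisfies every constraint posted by \Figs{Fig:codice_BMVi} and~\ref{Fig:BMVii}; this is the asserted completeness.

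The main obstacle I expect is precisely this last step: reconciling the abstract encoding, which reasons through the explicit intermediate state $v$ and the cluster-level biconditional~(\ref{cieffequatris22}) defining $Stat_f^v$, with the concrete code, which collapses $v$ away and replaces $Stat_f^v$ by the flatter per-law disjunction \codetext{StatFormula}. Care is needed to verify that (i) \codetext{rel\_parsing} and \codetext{exp\_constraint}, invoked at time \codetext{Next}, genuinely produce $G_{f,j}^w$ and $D_{f,j}^w$ rather than versions anchored at the vanished $v$, and (ii) \codetext{cluster\_rules} gathers all of $\mathcal{SL}_{L_f}$ and not merely $\mathcal{SL}_f$, since otherwise intra-cluster mutual-dependence constraints would be lost. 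Once these points are pinned down, the remaining verification is routine and the completeness claim follows.
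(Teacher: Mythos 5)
Your proposal is correct and takes essentially the same route as the paper's own (much terser) proof: a constraint-by-constraint correspondence in which~(\ref{cieffequatris15}) is absorbed by the CLP domain declarations, (\ref{cieffebis9})--(\ref{cieffebis13}) are inherited from Theorem~\ref{concretesoundcompleteNostatic}, and the static-law constraints~(\ref{cieffequatris20})--(\ref{cieffequatris23}) are traced to \codetext{cluster\_rules}, \codetext{static}, and the inertia constraint of line~(\ref{bmapcode068}). The only nit is that \codetext{Formula} being false does not literally \emph{coincide} with $Stat_f^v$ being true; rather it implies that all $Dyn^u_{f,j}$ are false, which by~(\ref{cieffequatris22}) already forces $Stat_f^v$ true and hence $F_f^w=F_f^v=F_f^u$ via~(\ref{cieffequatris23}) and~(\ref{cieffebis13}) --- an implication in exactly the direction your completeness argument needs, so the proof stands.
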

\begin{proof}
The result directly follows from the above argument.
Constraint (\ref{cieffequatris15}) of \Fig{cieffequatris}
is im\-plic\-it\-ly ren\-der\-ed by the do\-main as\-sign\-ment for the CLP variables (let us
remember that the
intermediate state $v$
is not explicit in the concrete implementation).
Constraints~(\ref{cieffebis9})--(\ref{cieffebis13}) are dealt with as
done in Theorem~\ref{concretesoundcompleteNostatic}.
The conditions originating from the static causal laws are dealt with through
the predicates  \codetext{cluster\_rules} and \codetext{static}.
\qed\end{proof}

Let us observe  that there is a second difference between the concrete implementation of
\Figs{Fig:codice_BMVi} and \ref{Fig:BMVii} and the abstract one of \Fig{cieffequatris}:
no requirements for the unsatisfiability of
$\mathit{Form}(\mathcal{D})^{\vec{v},a_i}$ are imposed in correspondence of the
state transition from $v_{i-1}$ to $v_{i}$ (for any $i$).
This allows the generation of state transitions where the target state is potentially not
minimally closed. This means that the concrete implementation
may produce solutions (i.e., plans) that the abstract semantics
would forbid because of the non-minimal effects of (clusters of) static causal laws.
On the other hand, we reflect constraints~(\ref{cieffequatris22})
and~(\ref{cieffequatris23}) as described earlier, through the predicates
\codetext{static} (listed in \Fig{Fig:BMVii}) and
\codetext{cluster\_rules} (whose obvious code is omitted).

The final step in the design of the concrete implementation is the
introduction of  suitable restrictions on the labeling phase
of the CLP solver.
Notice that, if at step $i$ in a trajectory, a consequence of a dynamic law
involves a fluent $f^j$, for $j>i$, then such a constraint has to be evaluated
considering as already assessed all the states $v_h$ preceding~$v_i$.
Hence, the labeling has to proceed ``left-to-right'' w.r.t.\ the CLP variables that
model the states $v_1,\ldots,v_i$. In other words, when searching for a solution,
the variables representing the state $v_h$ have to be labeled before
those representing the state $v_{h+1}$, for each $v_h$ in the trajectory.
The implementation of this labeling strategy is depicted
in \Fig{mylabeling}.
Moreover, observe that we impose further restrictions (through the predicate
\codetext{no\_loop} in lines (\ref{bmapcode147})--(\ref{bmapcode155}))
to avoid loops in plans, i.e., to forbid those trajectories where the
same state appears twice.

\begin{figure}[ht]
\begin{codice}{134}
\label{bmapcode135} lm\_labeling(Actionsocc, States) :- \\
\label{bmapcode136} \tab     lm\_labeling(Actionsocc, States, 1).\\
\label{bmapcode137} lm\_labeling([], \_, \_) :- !.\\
\label{bmapcode138} lm\_labeling([CurrAct|Actions], States, I) :- \\
\label{bmapcode139} \tab    lm\_labeling\_aux(CurrAct),\\
\label{bmapcode140} \tab    no\_loop(States, I),\\
\label{bmapcode141} \tab    I1 is I+1,\\
\label{bmapcode142} \tab    lm\_labeling(Actions, States, I1).\\
\label{bmapcode143} lm\_labeling\_aux([]).\\
\label{bmapcode144} lm\_labeling\_aux([action(\_,A)|R]) :- \\
\label{bmapcode145} \tab    indomain(A),\\
\label{bmapcode146} \tab    lm\_labeling\_aux(R).
\medskip\\
\label{bmapcode147} no\_loop(States, A) :- \\
\label{bmapcode148} \tab    state\_select(A, States, StateA),\\
\label{bmapcode149} \tab    no\_loop(A, States, StateA).\\
\label{bmapcode150} no\_loop(0, \_, \_) :- !.\\
\label{bmapcode151} no\_loop(B, States, StateA) :- \\
\label{bmapcode152} \tab    B1 is B-1,\\
\label{bmapcode153} \tab    state\_select(B1, States, StateB),\\
\label{bmapcode154} \tab    StateA \verb"\==" StateB, \\
\label{bmapcode155} \tab    no\_loop(B1, States, StateA).
\end{codice}
\caption{\label{mylabeling}Implementation of a leftmost labeling strategy.}
\end{figure}

To complete the implementation of \BMV\ we need to take care of
the cost-based constraints, whose behavior relies on
the optimization features offered by SICStus' labeling predicate:
the labeling phase is guided by an objective function to be optimized.

Constraints on costs, as well as absolute temporal constraints,
are handled by asserting suitable CLP constraints on the variables
that model fluent values.
This is realized through the predicates listed in \Fig{codiceGlobalConstraints}.
In particular, \codetext{set\_cost\_constraints} deals with constraints on
actions/plans and states.
For instance, \codetext{set\_statecosts} (line~\ref{statecost})
retrieves all the assertions
of the form \codetext{cost\_constraint(state(I) \,OP\, Num)}
and imposes the corresponding constraints.
A similar predicate \codetext{set\_goal} (not reported in the figure)
accomplishes the
same  for the final state only.
The predicate \codetext{set\_plancost} acts similarly, using the predicate
\codetext{make\_one\_action\_occurrences} (lines~(\ref{bmapcode184})--(\ref{bmapcode185}))
where the cost for each single action is considered.

All the absolute temporal constraints defined in the action description are handled by
the predicate \codetext{set\_time\_constraint} (cf., lines~(\ref{bmapcode186})--(\ref{bmapcode194})).
Also in this case, direct references to CLP variables implement
the references to fluent expressions in any absolute point in time.

As mentioned, all these constraints can be seen as filters used to
validate each trajectory found by the labeling phase.
The planner  described in \Figs{Fig:codice_BMVi}--\ref{mylabeling} is
completed by adding the code in \Fig{codiceGlobalConstraints}.
Completeness of the implementation of the full \BMV\ immediately follows
from the above discussion.

\begin{figure}[ht]
\begin{minipage}[t]{1.0\textwidth}
\begin{codice}{155}
set\_cost\_constraints(States, PlanCost, GOALCOST) :-\\
\tab     set\_goalcost(States, GOALCOST),\\
\tab     set\_plancost(PlanCost),\\
\tab     set\_statecosts(States).
\medskip\\
set\_plancost(PC) :-\\
\tab    findall([OP,Num],(cost\_constraint(C), C\,=..\,[OP,plan,Num]), PlanCosts),\\
\tab    set\_plancost\_aux(PlanCosts,PC).\\
set\_plancost\_aux([],\_).\\
set\_plancost\_aux([[OP,Num]|PlanCosts],PC) :-\\
\tab     add\_constraint(PC,OP,Num),\\
\tab     set\_plancost\_aux(PlanCosts,PC).
\medskip\\
\label{statecost}set\_statecosts(States) :-\\
\tab    findall([I,OP,N],(cost\_constraint(C), C\,=..\,[OP,state(I),N]), Costs),\\
\tab    set\_statecost\_aux(Costs,States).
\medskip\\
set\_statecost\_aux([],\_).\\
set\_statecost\_aux([[I,OP,Num]|StateCosts],States) :-\\
\tab    (state\_cost(FE),!; FE = 1), \\
\tab    rel\_parsing(FE,Val,I,States), \\
\tab    add\_constraint(Val,OP,Num),\\
\tab    set\_statecost\_aux(StateCosts,States).
\medskip\\
\label{bmapcode168} make\_action\_occs(N, ActionsOcc, PlanCost, Na) :- \\
\label{bmapcode169} \tab    setof(A, action(A), La),\\
\label{bmapcode170} \tab    length(La, Na),\\
\label{bmapcode171} \tab    make\_action\_occurrences(N, La, ActionsOcc, PlanCost).\\
\label{bmapcode172} make\_action\_occurrences(1, \_, [], 0).\\
\label{bmapcode173} make\_action\_occurrences(N, List, [Act|ActionsOcc], Cost) :- \\
\label{bmapcode174} \tab    N1 is N-1,\\
\label{bmapcode175} \tab    make\_action\_occurrences(N1, List, ActionsOcc, Cost1),\\
\label{bmapcode176} \tab    make\_one\_action\_occurrences(List, Act, Cost2),\\
\label{bmapcode177} \tab    get\_action\_list(Act, AList),\\
\label{bmapcode178} \tab    fd\_only\_one(AList),\\
\label{bmapcode179} \tab    Cost \verb"#=" Cost1+Cost2.\\
\label{bmapcode180} make\_one\_action\_occurrences([], [], 0).\\
\label{bmapcode181} make\_one\_action\_occurrences([A|Actions], [action(A,OccA)|OccActs], Cost) :- \\
\label{bmapcode182} \tab    make\_one\_action\_occurrences(Actions, OccActs, Cost1),\\
\label{bmapcode183} \tab    fd\_domain\_bool(OccA),\\
\label{bmapcode184} \tab    (action\_cost(A,CA),!; CA = 1), \%\%\%Default action cost = 1\\
\label{bmapcode185} \tab     Cost \verb"#=" OccA*CA+Cost1.
\medskip\\
\label{bmapcode186} set\_time\_constraints(States) :- \\
\label{bmapcode187} \tab    findall([FE1,OP,FE2], (time\_constraint(C),C\,=..\,[OP,FE1,FE2]), TimeCs),\\
\label{bmapcode188} \tab    set\_time\_constraints(TimeCs, States).
\medskip\\
\label{bmapcode189} set\_time\_constraints([], \_).\\
\label{bmapcode190} set\_time\_constraints([[FE1,OP,FE2]|Rest], States) :- \\
\label{bmapcode191} \tab     rel\_parsing(FE1, Val1, \_, States),\\
\label{bmapcode192} \tab     rel\_parsing(FE2, Val2, \_, States),\\
\label{bmapcode193} \tab     add\_constraint(Val1, OP, Val2),\\
\label{bmapcode194} \tab     set\_time\_constraints(Rest, States).
\medskip\\
\label{bmapcode195} add\_constraint(L, OP, R) :- \\
\label{bmapcode196} \tab   exp\_constraint(L, OP, R, 1).
\end{codice}
\end{minipage}
\caption{\label{codiceGlobalConstraints}Handling of global constraints and costs.}
\end{figure}

\section{Experimental Analysis}\label{sec:experimental}

We implemented CLP-based prototypes of $\mathcal{B}$
and \BMV. These have been realized in SICStus Prolog 4, and they have
been developed on an AMD Opteron 2.2GHz Linux machine.
Extensive testing has been performed to validate our CLP-based approach.
Here we concentrate on a few representative examples.
The source code of the implementations and the examples
can be found at \sito.
No particular built-in predicates of SICStus have been used and therefore porting
to other CLP-based Prolog systems is straightforward. A porting to B-Prolog has
been realized and used to participate in the
2009 ASP Competition.\footnote{See the web site \url{http://www.cs.kuleuven.be/~dtai/events/ASP-competition/Teams/Bpsolver-CLPFD.shtml}}

In the rest of this section, we analyze the performance of the
implementation on a diverse set of
benchmarks. For each benchmark, we compare a natural encoding using the
traditional $\mathcal{B}$ language with an encoding using \BMV.

The problems encoded in $\mathcal{B}$ have been solved using both
the CLP(FD) implementation
and implementations obtained by mapping the problem to ASP
and using different ASP solvers (Smodels, Clasp, and Cmodels
with different SAT-solvers).

\bigskip

In order to solve a $\mathcal{B}$-planning problem $\langle \mathcal{D},
\mathcal{O}\rangle$ using an  ASP solver, we have developed a Prolog
translator that takes as input  $\langle \mathcal{D},
\mathcal{O}\rangle$ and the plan length \codetext{n}, and it
generates an ASP program,  whose stable models are in one-to-one correspondence
with the plans of length \codetext{n} for $\langle \mathcal{D},
\mathcal{O}\rangle$.
This encoding follows the general ideas
outlined in  \cite{Lif99}.
In particular, the definitions of \codetext{fluent},
\codetext{action}, and
\codetext{initially} are already in ASP syntax.
The length of the plan \codetext{n} is used to define the predicate
\codetext{time(0..n)}. The ASP-based planner makes use of a
choice rule to ensure that exactly one action is applied
at each time step:\\
\centerline{\codetext{1\{occ(Act,Ti):action(Act)\}1 :- time(Ti), Ti < n}.}

\noindent
The predicate \codetext{hold(Fluent,Time)} defines the truth value of a fluent
\codetext{Fluent} at a given time step (\codetext{Time}).
The truth value of the fluents  at time $0$ are given as facts describing
the initial state; we require the initial state to be complete.
The executability rules, the dynamic causal laws and the
 static causal laws are instantiated for each admissible time step.
Finally, the goal conditions are added to define
the predicate \codetext{goal}; the requirement that the goal has
to be satisfied at the end of the plan is imposed using an
ASP constraint of the form\\
\centerline{\codetext{:- not goal}.}

\bigskip

As far as the CLP-based implementations are concerned, we
use a leftmost variable selection strategy. Moreover, we included
a loop control feature to avoid the repetition of the same state
in a trajectory (cf., the predicate \codetext{no\_loop} in \Fig{mylabeling}).

Tables~\ref{tabellabottiNOLOOP}--\ref{tabellaWGC}, discussed in detail
in the next subsections, illustrate an
excerpt of the experimental results.
In order to simplify the comparison among the solvers,
in each table we introduce  an extra column, denoted by ``Best ASP,''
which indicates the performance of an hypothetical ASP-solver that
always acts as the best between all the  ASP-solvers considered.

The specific meaning of the various columns is as follows:
\begin{itemize}
\item \emph{Instance}: the name of the specific instance of the problem
\item \emph{Length}: the plan length used in searching for a solution
\item \emph{Answer}: indication of whether an answer exists or not for the
        given plan length
\item \emph{lparse}: the time required to ground the ASP encoding of the problem
      (using lparse 1.1.1)
\item \emph{Smodels}: the execution time using the {Smodels} system
      (using Smodels 2.32)
\item \emph{Cmodels}: the execution time using the {Cmodels} system (using Cmodels 3.70
    with different SAT solvers)
\item \emph{Clasp}: the execution time using the {Clasp} system
      (using Clasp 1.0.2)

\item \emph{Best ASP}: a summary of the best execution time across all the different
        ASP solvers
\item \emph{CLP(FD)}: the execution time using the CLP(FD)-based implementation of
        $\cal B$. Execution times have the form $t_1+t_2$, where $t_1$ is the time
        needed for posting constraints and $t_2$ the time for solving the constraints (i.e.,
        finding a plan)

\item \emph{\BMV}: the execution time using the \BMV\ encoding of the problem.
      The first column is related to computations where no constraints for the plan
      cost are imposed.
      Instead, the computations of the second column have a constraint
      that limits the plan cost to the number in parenthesis.
      The format is $t_1+t_2$ as explained in the previous point.
\end{itemize}
In the remaining subsections we briefly describe the benchmarks
tested and the obtained results.
The actual encoding in  $\mathcal{B}$ and \BMV\ have been
placed in the Appendix for the sake of readability.
A summary and a discussion of all the experiments is presented in
\Sect{palloSection}.

\begin{sidewaystable}
\begin{tabular}{|c|c|c|r|r|r|r|r|r|r||r|r|rr|}
Barrels'  & & &
\multicolumn{8}{c|}{$\mathcal{B}$} &
\multicolumn{3}{c|}{\BMV} \\
capacities &
\boxstandup{Length}  &
\boxstandup{Answer}  &
\multicolumn{1}{c|}{\em lparse}   &
\multicolumn{1}{c|}{Smodels}   &
\multicolumn{3}{c|}{Cmodels} &
\multicolumn{1}{c|}{Clasp} &
\multicolumn{1}{c||}{Best}
&  \multicolumn{1}{c|}{CLP(FD)}
&  \multicolumn{1}{c|}{unconstrained}
&  \multicolumn{2}{c|}{constrained plan cost}
\\
& & & & & zchaff & relsat & minisat &  & \multicolumn{1}{c||}{ASP} &
&  \multicolumn{1}{c|}{plan cost}
&  \multicolumn{2}{c|}{(in parentheses)}
\\
\cline{1-14}
8-5-3   &  6 & N &  8.74&  0.10 &    0.34 &    0.63 &    0.30 &     0.27 & 0.10 & 0.14+0.29  &  0.03+0.03 &  (70) & 0.02+0.03
\\
8-5-3   &  7 & Y &  8.92&  0.20 &    1.87 &    2.39 &    0.55 &     0.23 & 0.20 & 0.22+0.28  &  0.03+0.02 &  (70) & 0.02+0.02
\\
8-5-3   &  8 & Y &  8.87&  0.20 &    7.34 &    3.63 &    0.62 &     0.53 & 0.20 & 0.26+1.04  &  0.05+0.07 &  (70) & 0.01+0.06
\\
8-5-3   &  9 & Y &  9.03&  0.17 &   17.60 &    5.02 &    0.60 &     2.34 & 0.17 & 0.24+1.03  &  0.02+0.05 &  (70) & 0.02+0.06
\\
12-7-5  & 10 & N & 34.47&  1.98 &  153.36 &   14.56 &   41.34 &    29.13 & 1.98 & 0.58+4.85  &  0.04+0.13 & (120) & 0.04+0.13
\\
12-7-5  & 11 & Y & 34.54&  2.28 &   98.72 &   15.78 &   11.71 &    52.15 & 2.28 & 0.64+2.61  &  0.02+0.07 & (120) & 0.03+0.07
\\
12-7-5  & 12 & Y & 35.42&  1.60 &  125.84 &   20.45 &   83.06 &    35.81 & 1.60 & 0.73+8.11  &  0.07+0.18 & (120) & 0.05+0.19
\\
12-7-5  & 13 & Y & 35.69&  0.68 &  342.40 &   42.36 &   97.99 &   111.36 & 0.68 & 0.79+6.23  &  0.07+0.14 & (120) & 0.07+0.14
\\
16-9-7  & 14 & N &115.47& 11.15 & 1508.43 &  613.42 &   75.67 &  1838.39 &11.15 & 1.30+27.16 &  0.03+0.31 & (200) & 0.07+0.31
\\
16-9-7  & 15 & Y &114.03& 12.30 &  586.43 &   58.45 &   65.19 &  1133.21 &12.30 & 1.53+13.35 &  0.06+0.13 & (200) & 0.07+0.14
\\
16-9-7  & 16 & Y &115.60&  6.06 &  793.00 &  151.56 &  157.38 &   744.60 & 6.06 & 1.62+37.69 &  0.07+0.37 & (200) & 0.07+0.36
\\
16-9-7  & 17 & Y &114.60&  1.75 & 2963.37 &  128.91 &  145.11 & 14106.98 & 1.75 & 1.67+26.98 &  0.07+0.27 & (200) & 0.07+0.27
\\
20-11-9 & 18 & N &185.38& 43.71 & 2949.10 & 2312.09 &  493.98 & --       &43.71 & 2.76+102.14&  0.09+0.58 & (300) & 0.08+0.57
\\
20-11-9 & 19 & Y &186.76& 40.08 & 3053.53 & 1187.10 & 1152.27 & 11292.40 &40.08 & 2.94+45.43 &  0.09+0.24 & (300) & 0.10+0.24
\\
20-11-9 & 20 & Y &186.31& 21.67 & 1866.28 & 2265.05 & 1378.93 & 12286.98 &21.67 & 3.05+120.90&  0.09+0.68 & (300) & 0.09+0.65
\\
20-11-9 & 21 & Y &189.28&  4.39 & 5482.78 &  586.18 & 1746.81 & --       & 4.39 & 3.17+80.54 &  0.10+0.46 & (300) & 0.10+0.43
\\
\cline{1-14}
\end{tabular}
\caption{\label{tabellabottiNOLOOP}Experimental
results with various instances of the three-barrel problem (timeout 24000sec).}
\end{sidewaystable}

\subsection{Three-barrel Problem}\label{sectionThreeBarrelProblem}
We experimented with different encodings of the three-barrel problem.
Our formulation is as described in Example~\ref{exempiobarrelsB}.
\Fig{Bool_Barrels} and \Sect{MV_Barrels}
show the encoding of the problem (for $N=12$) in
$\mathcal B$ and in \BMV, respectively.
Notice that, in order to represent each multi-valued fluent
$f$ of the \BMV\ formulation,
a number of Boolean fluents have to be introduced
in the $\mathcal{B}$ encoding, one for each admissible
value of~$f$.

Table~\ref{tabellabottiNOLOOP} provides the execution times (in seconds)
for different values of $N$ and different plan lengths.
The results show that  the constraint-based encoding
of $\mathcal B$ outperforms the ASP encodings (if we consider both grounding
and execution). In turn, the \BMV\ encoding outperforms all other encodings.
This can be explained by considering that the CLP encoding of this problem
benefits from numerical fluents (in reduced number, w.r.t.\ the $\mathcal B$
formulation) and from arithmetic constraints (efficiently handled by CLP(FD)).

\subsection{2-Dimensional Protein Folding Problem}
The problem we have encoded is a simplification of the protein structure
folding problem. The input is a chain $\alpha_1 \alpha_2 \cdots \alpha_n$ with $\alpha_i \in
\{0,1\}$,
initially placed in a vertical position, as in \Fig{PFfigure1}-left.
We will refer to each $\alpha_i$ as an \emph{amino acid}.
The permissible actions are the counter-clockwise/clockwise \emph{pivot moves}.
Once one point $i$ of the chain is selected, the points $\alpha_1,\alpha_2,\dots,\alpha_i$
will remain fixed, while the points $\alpha_{i+1},\dots,\alpha_n$ will perform a
rigid counter-clockwise/clockwise rotation.
Each conformation must be a \emph{self-avoiding-walk},
i.e., no two amino acids
are  in the same position. Moreover, the chain cannot be broken---i.e.,
two consecutive amino acids are always at points at distance 1
(i.e., in contact).
The goal is to perform a sequence of pivot moves leading to a
configuration where at least $k$ non-consecutive
amino acids of value 1 are in contact.
\Fig{PFfigure1} shows a possible plan to  reach a configuration
with 4 contacts. Table~\ref{PFfigure2} reports some execution times.
\Sect{PF-encoding} reports the \BMV\ action description
encoding this problem.
Since the goal is based on the notion of cost of a given state, for
which reified constraints are used extensively,
a direct encoding in $\mathcal{B}$
does not seem to be feasible.

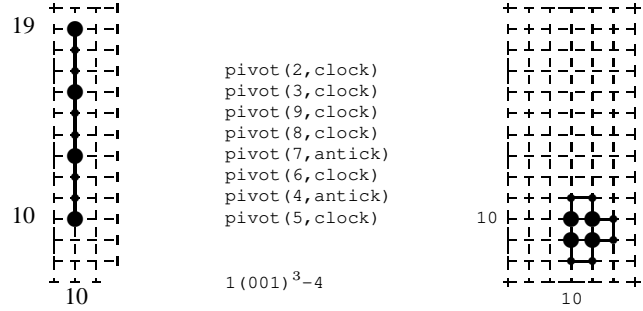
\begin{figure}
\begin{picture}(55,135)(-10,-10)
\setlength{\unitlength}{0.8pt}
\thinlines
\put(10,10){\dashbox{4}(30,0){}}
\put(10,20){\dashbox{4}(30,0){}}
\put(10,30){\dashbox{4}(30,0){}}
\put(10,40){\dashbox{4}(30,0){}}
\put(10,50){\dashbox{4}(30,0){}}
\put(10,60){\dashbox{4}(30,0){}}
\put(10,70){\dashbox{4}(30,0){}}
\put(10,80){\dashbox{4}(30,0){}}
\put(10,90){\dashbox{4}(30,0){}}
\put(10,100){\dashbox{4}(30,0){}}
\put(10,110){\dashbox{4}(30,0){}}
\put(10,120){\dashbox{4}(30,0){}}
\put(10,130){\dashbox{4}(30,0){}}
\put(10,0){\dashbox{4}(0,130){}}
\put(20,0){\dashbox{4}(0,130){}}
\put(30,0){\dashbox{4}(0,130){}}
\put(15,-10){10}
\put( -10,28){10}
\put( -10,118){19}
\thicklines
\put(20,30){\circle*{8}}
\put(20,30){\line(0,1){90}}
\put(20,40){\circle*{4}}
\put(20,50){\circle*{4}}
\put(20,60){\circle*{8}}
\put(20,70){\circle*{4}}
\put(20,80){\circle*{4}}
\put(20,90){\circle*{8}}
\put(20,100){\circle*{4}}
\put(20,110){\circle*{4}}
\put(20,120){\circle*{8}}
\end{picture}
~~~~~~~~~
{\protect\scriptsize\tt
\begin{minipage}[t]{.20\textwidth}
\begin{tabular}[b]{l}
pivot(2,clock) \\
pivot(3,clock) \\
pivot(9,clock) \\
pivot(8,clock) \\
pivot(7,antick) \\
pivot(6,clock) \\
pivot(4,antick) \\
pivot(5,clock)\\
\phantom{a}\\
\phantom{a}\\
\mbox{1(001)$^{3}$-4}\\
\phantom{a}
\end{tabular}
\end{minipage}
~~~
{\begin{picture}(75,135)(-25,-10)
\setlength{\unitlength}{0.8pt}
\thinlines
\put(-10,0){\dashbox{4}(60,0){}}
\put(-10,10){\dashbox{4}(60,0){}}
\put(-10,20){\dashbox{4}(60,0){}}
\put(-10,30){\dashbox{4}(60,0){}}
\put(-10,40){\dashbox{4}(60,0){}}
\put(-10,50){\dashbox{4}(60,0){}}
\put(-10,60){\dashbox{4}(60,0){}}
\put(-10,70){\dashbox{4}(60,0){}}
\put(-10,80){\dashbox{4}(60,0){}}
\put(-10,90){\dashbox{4}(60,0){}}
\put(-10,100){\dashbox{4}(60,0){}}
\put(-10,110){\dashbox{4}(60,0){}}
\put(-10,120){\dashbox{4}(60,0){}}
\put(-10,130){\dashbox{4}(60,0){}}
\put(-10,0){\dashbox{4}(0,130){}}
\put(0,0){\dashbox{4}(0,130){}}
\put(10,0){\dashbox{4}(0,130){}}
\put(20,0){\dashbox{4}(0,130){}}
\put(30,0){\dashbox{4}(0,130){}}
\put(40,0){\dashbox{4}(0,130){}}
\put(50,0){\dashbox{5}(0,130){}}
\put(15,-10){10}
\put( -25,28){10}
\thicklines
\put(20,30){\circle*{8}}
\put(20,30){\line(0,1){10}}
\put(20,40){\circle*{4}}
\put(20,40){\line(1,0){10}}
\put(30,40){\circle*{4}}
\put(30,40){\line(0,-1){10}}
\put(30,30){\circle*{8}}
\put(30,30){\line(1,0){10}}
\put(40,30){\circle*{4}}
\put(40,30){\line(0,-1){10}}
\put(40,20){\circle*{4}}
\put(40,20){\line(-1,0){10}}
\put(30,20){\circle*{8}}
\put(20,10){\line(1,0){10}}
\put(30,10){\circle*{4}}
\put(30,10){\line(0,1){10}}
\put(20,10){\circle*{4}}
\put(20,10){\line(0,1){10}}
\put(20,20){\circle*{8}}
\end{picture}
}
}
\caption{\label{PFfigure1}An instance of the HP-protein folding problem:
initial configuration, a plan, and final configuration with 4
contacts between 1-amino acids.}
\end{figure}

\begin{table}
\begin{tabular}{|c|c|c|r|}
Instance       & Len. & Ans. & $\mathcal{B}_{MV}^{FD}$ \\
\cline{1-4}
1$^{7}$-2      &   3   & Y  & 0.07+0.01
\\
1$^{7}$-2      &   4   & Y  & 0.09+0.01
\\
1$^{13}$-6     &   3   & N  & 0.42+19.91
\\
1$^{13}$-6     &   4   & Y  & 0.57+35.16
\\
1(001)$^{2}$-2 &   3   & N  & 0.06+0.09
\\
1(001)$^{2}$-2 &   4   & Y  & 0.07+0.01
\\
1(001)$^{3}$-4 &   7   & N  & ~0.47+7521.13
\\
1(001)$^{3}$-4 &   8   & Y  & 0.49+50.46
\\
1(001)$^{3}$-4 &   9   & ?  & --
\\
1(001)$^{3}$-4 &  10   & Y  & 0.63+603.37
\\
\cline{1-4}
\end{tabular}
\caption{\label{PFfigure2}The HP-protein folding problem:
some results for different sequences, and plan lengths (timeout 12000sec).}
\end{table}

Let us consider the resolution of the instance depicted in
\Fig{PFfigure1}, i.e., the folding of
the input chain $1001001001$ of $n=10$ amino acids.
Asking for a plan of $8$ (resp.~$10$) moves and for a solution with
cost $\geqslant 4$, our planner finds the 8-moves plan shown
in \Fig{PFfigure1}-center
in 50.46s (a 10-moves plan in found in 603.37s).
By removing the two constraints that keep fixed $\alpha_2$:
\begin{center}
\codetext{
always(x(2) eq 10).}\\
\codetext{
always(y(2) eq 11).}
\end{center}
the solutions are found in 52.72s and 617.68s, respectively.
On the other hand,
by keeping fixed $\alpha_2$ and adding the two  constraints
\begin{center}
\codetext{
holds(x(3) eq 11,1).}\\
\codetext{
holds(y(3) eq 11,1).}
\end{center}
the execution time is reduced to  4.06s and 52.97s.
Adding the additional  constraints
\begin{center}
\codetext{
holds(x(4) eq 11,2).}\\
\codetext{
   holds(y(4) eq 10,2).}
\end{center}
the plans are found in  only 0.37s and 4.62s.
This shows that the use of multi-valued fluents and the ability
to exploit domain-specific knowledge, in the form of symmetry-breaking
constraints,  allows \BMV\ to
effectively converge to a solution.

\subsection{The Community Problem}\label{community}

The \emph{Community} problem is formulated as follows.
There are $M$ individuals,
identified by the numbers $1,2,\ldots,M$.
At each time step, one of them, say $j$,  gives exactly~$j$ dollars to someone else,
provided she/he owns more than $j$ dollars.
Nobody can give away all of her/his money.
The goal consists of reaching a state in which all the
participants have the same amount of money.

Table~\ref{tabellaCommunity} lists some results for four variants of the problem:
the person $i$ initially owns $2*i$ dollars (instances $A_M$),
$i+1$ dollars (instances $B_M$),
$i*i$ dollars (instances $C_M$),
or $i*(1+i)$ dollars (instances $D_M$).

The representations of this problem are reported in \Sects{BCommunity} and~\ref{MVCommunity}.

Notice that the large number of  Boolean fluents that have to be introduced in
the $\mathcal{B}$ description  causes failures due to lack of
memory during the grounding phase
(these instances are marked ``mem'' in Table~\ref{tabellaCommunity}).
For all these experiments,
the bound on memory usage was 4~GB (for the grounder, the ASP-solvers, and
the CLP(FD) engine).
Observe that, in some cases, also the CLP(FD)-based solver
for~$\mathcal{B}$ runs out of memory, while the
failures of the CLP(FD) solver for~\BMV\
have been  caused by expiration of the
time limit.
In summary,  the constraint-based encodings provides better
performance in most of the instances,
especially considering their better scalability w.r.t.\ the size
of the instances.
This originates from the smaller number of  numerical fluents
and from the efficiency of the underlying constraint solver.

\begin{sidewaystable}
\begin{tabular}{|c|c|c|r|r|r|r|r|r|r||r|r|}
&
&
&
\multicolumn{8}{c|}{$\mathcal{B}$} &
\multicolumn{1}{c|}{\BMV} \\
\boxstandup{Instance} &
\boxstandup{Length}  &
\boxstandup{Answer}  &
\multicolumn{1}{c|}{\em lparse}   &
\multicolumn{1}{c|}{Smodels}   &
\multicolumn{3}{c|}{Cmodels} &
\multicolumn{1}{c|}{Clasp} &
\multicolumn{1}{c||}{Best}
&  \multicolumn{1}{c|}{CLP(FD)}
&  \multicolumn{1}{c|}{CLP(FD)}
\\
&
&
&
& & zchaff & relsat & minisat & &
\multicolumn{1}{c||}{ASP}
 & &  \multicolumn{1}{c|}{~}
\\
\cline{1-12}
A$_{\mathit{4}}$ & 5  & N &   34.34 &   11.12  &    1.78 &    11.68 &     0.67 &     0.45 &  0.45 & 0.71+14.14   & 0.01+3.31
\\
A$_{\mathit{4}}$ & 6  & Y &   34.90 &    1.43  &    0.26 &     7.38 &     0.57 &     0.09 &  0.09 & 0.82+0.10    & 0.03+0.00
\\
A$_{\mathit{4}}$ & 7  & Y &   35.44 &   15.72  &    0.39 &    47.74 &     0.80 &     0.10 &  0.10 & 0.94+0.12    & 0.03+0.01
\\
A$_{\mathit{5}}$ & 5  & N &  201.88 &  100.58  &    5.22 &   125.63 &     2.30 &     1.19 &  1.19 & 2.64+157.48  & 0.02+41.15
\\
A$_{\mathit{5}}$ & 6  & Y &  202.64 &   11.43  &    1.85 &   442.22 &     1.63 &     0.28 &  0.28 & 3.17+0.21    & 0.01+0.04
\\
A$_{\mathit{5}}$ & 7  & Y &  202.12 &   34.02  &    2.81 &   114.74 &     2.31 &     0.27 &  0.27 & 3.71+447.87  & 0.04+142.27
\\
B$_{\mathit{5}}$ & 5  & N &   51.87 &   30.04  &    4.24 &    44.49 &     1.49 &     0.69 &  0.69 & 1.03+77.06   & 0.03+23.13
\\
B$_{\mathit{5}}$ & 6  & Y &   52.04 &    2.07  &    1.32 &    37.96 &     0.99 &     0.14 &  0.14 & 1.31+0.11    & 0.04+0.02
\\
B$_{\mathit{5}}$ & 7  & Y &   52.94 &   13.49  &    0.80 &    41.86 &     1.27 &     0.42 &  0.42 & 1.40+0.17    & 0.05+0.04
\\
B$_{\mathit{7}}$ & 5  & N &   mem   &          &         &          &          &          &       & 7.67+3345.56 & 0.05+1421.54
\\
C$_{\mathit{5}}$ & 5  & N &   mem   &          &         &          &          &          &       & 16.98+85.71  & 0.02+49.83
\\
C$_{\mathit{5}}$ & 6  & N &   mem   &          &         &          &          &          &       & 20.44+1926.97& 0.04+888.30
\\
C$_{\mathit{7}}$ & 5  & N &   mem   &          &         &          &          &          &       &   mem        & 0.05+3186.34
\\
D$_{\mathit{4}}$ & 5  & N &  138.91 &    7.08  &    1.28 &    13.48 &     0.76 &     0.43 &  0.43 &  3.70+21.19  & 0.01+6.83
\\
D$_{\mathit{4}}$ & 6  & N &  139.88 &   90.32  &   11.56 &    87.11 &     3.62 &     3.72 &  3.72 &  4.32+0.50   & 0.02+0.74
\\
D$_{\mathit{4}}$ & 7  & N &  139.82 & 1015.44  &  104.36 &   788.94 &    33.70 &    22.86 & 22.86 &  5.17+5.55   & 0.04+7.64
\\
D$_{\mathit{5}}$ & 5  & N &   mem   &          &         &          &          &          &       & 24.64+24.12  & 0.05+93.88
\\
D$_{\mathit{5}}$ & 6  & N &   mem   &          &         &          &          &          &       & 29.60+1490.48& 0.02+1801.78
\\
\cline{1-12}
\end{tabular}
\caption{\label{tabellaCommunity}Experimental results for instances of the
Community problem.
 ``mem'' denotes out-of-memory failures.
Some results are missing for the ASP solvers, for those instances that
are unable to complete grounding.
}
\end{sidewaystable}

\newcounter{lopper}
\definecolor{molto}{rgb}{1,0.50,0.80}
\definecolor{poco}{rgb}{1,0.98,0.98}
\begin{figure}[th]
\noindent{\centering
\begin{tabular}[b]{cc}
\begin{pspicture}(-0.4,-1.4)(6.5,6.5)
\setlength{\unitlength}{85pt}
\psset{PstPicture=false}
\put(1,1){\PstPolygon[unit=0.43,PolyRotation=27.0,PolyName=Qa,PolyNbSides=10,linewidth=0mm,linecolor=white]}
\put(1,1){\PstPolygon[unit=0.43,PolyRotation=9.0,PolyName=Qb,PolyNbSides=10,linewidth=0mm,linecolor=white]}
\put(1,1){\PstPolygon[unit=0.43,PolyRotation=0.0,PolyName=Qc,PolyNbSides=10,linewidth=0mm,linecolor=white]}
\put(1,1){\PstPolygon[unit=0.45,PolyRotation=18.0,PolyName=A,PolyNbSides=10,linewidth=0mm,linecolor=white]}
\put(1,1){\PstPolygon[unit=0.65,PolyRotation=18.0,PolyName=Ain,PolyNbSides=10,linewidth=0mm,linecolor=white]}
\put(1,1){\PstPolygon[unit=0.72,PolyRotation=18.0,PolyName=Bmid,PolyNbSides=10,linewidth=0mm,linecolor=white]}
\put(1,1){\PstPolygon[unit=0.79,PolyRotation=18.0,PolyName=Bin,PolyNbSides=10,linewidth=0mm,linecolor=white]}
\put(1,1){\PstPolygon[unit=1.0,PolyRotation=18.0,PolyName=B,PolyNbSides=20,linewidth=1mm,linecolor=gray]}
\pspolygon[unit=1mm,linecolor=white,linewidth=0mm,fillstyle=gradient,gradbegin=molto,gradend=poco,gradangle=270,gradmidpoint=1.0](A1)(A5)(B9)(B10)(B11)(A6)(A10)(A1)
\put(1,1){\PstPolygon[unit=1.0,PolyRotation=18.0,PolyName=Bextra,PolyNbSides =20,linewidth=1mm,linecolor=gray]}
\setcounter{lopper}{108}
\multido{\i=1+1}{10}{
    \pscustom[fillstyle=solid,fillcolor=red,linewidth=0.1mm,linecolor=black]{\rotate{\thelopper}
        \psellipse[fillstyle=solid,fillcolor=red,linewidth=0.1mm,linecolor=black](Bmid\the\multidocount)(0.06,0.19)}
    \addtocounter{lopper}{36}}
\multido{\i=1+1}{10}{\ncline[linewidth=1mm,linecolor=gray]{A\the\multidocount}{Ain\i}}
\multido{\i=1+2}{10}{\ncline[linewidth=1mm,linecolor=gray]{Bin\the\multidocount}{B\i}}
\setcounter{lopper}{108}
\multido{\i=1+1}{10}{
    \pscustom[fillstyle=solid,fillcolor=black,linewidth=0.1mm,linecolor=black]{
        \pscircle[fillstyle=solid,fillcolor=black](Bmid\the\multidocount){0.01}}
    \addtocounter{lopper}{36}}
\ncline[linewidth=1mm,linecolor=gray]{A1}{Qb1}
\ncline[linewidth=1mm,linecolor=gray]{A10}{Qa10}
\psellipse[fillstyle=solid,fillcolor=red,linewidth=0.1mm,linecolor=black](Qc1)(0.06,0.19)
\pscustom[fillstyle=solid,fillcolor=black,linewidth=0.1mm,linecolor=black]{
    \pscircle[fillstyle=solid,fillcolor=black](Qc1){0.01}}
\pscustom[fillstyle=solid,fillcolor=green,linewidth=0.1mm,linecolor=black]{\rotate{80}
    \psellipse[fillstyle=solid,fillcolor=green,linewidth=0.1mm,linecolor=black](Qc6)(0.06,0.19)}
\pscustom[linewidth=0.1mm,linecolor=black]{\psarc{->}(Qc6){0.09}{-20}{40}}
\pscustom[linewidth=0.1mm,linecolor=black]{\psarc{->}(Qc6){0.09}{160}{220}}
\pscustom[fillstyle=solid,fillcolor=black,linewidth=0.1mm,linecolor=black]{
    \pscircle[fillstyle=solid,fillcolor=black](Qc6){0.01}}
\psline[linewidth=1mm,linecolor=gray]{-cc}(A5)(Qa5)
\ncline[linewidth=1mm,linecolor=gray]{-cc}{A6}{Qb6}
\pspolygon[unit=1mm,linecolor=gray,linewidth=0.5mm,fillstyle=crosshatch*,hatchcolor=gray,fillcolor=lightgray](A1)(A2)(A3)(A4)(A5)(A1)
\pspolygon[unit=1mm,linecolor=gray,linewidth=0.5mm,fillstyle=crosshatch*,hatchcolor=gray,fillcolor=lightgray](A6)(A7)(A8)(A9)(A10)(A6)
\put(1.45,0.55){\mbox{{\footnotesize Room$_{5}$}}}
\put(1.60,0.95){\mbox{{\footnotesize Room$_{6}$}}}
\put(1.45,1.40){\mbox{{\footnotesize Room$_{8}$}}}
\put(0.85,0.95){\mbox{{\footnotesize Room$_{7}$}}}
\put(0.25,0.55){\mbox{{\footnotesize Room$_{2}$}}}
\put(0.10,0.95){\mbox{{\footnotesize Room$_{1}$}}}
\put(0.25,1.40){\mbox{{\footnotesize Room$_{11}$}}}
\put(1.10,1.70){\mbox{{\footnotesize Room$_{9}$}}}
\put(1.10,0.25){\mbox{{\footnotesize Room$_{4}$}}}
\put(0.57,1.70){\mbox{{\footnotesize Room$_{10}$}}}
\put(0.59,0.25){\mbox{{\footnotesize Room$_{3}$}}}
\put(1,1){\PstPolygon[unit=1.15,PolyName=C,PolyNbSides=8,linewidth=1mm,linecolor=gray]}
\pspolygon[unit=1mm,linecolor=gray,linewidth=0mm,fillstyle=crosshatch*,hatchcolor=gray,fillcolor=lightgray](C8)(B20)(B1)(B2)(B3)(B4)(B5)(B6)(B7)(B8)(B9)(B10)(B11)(B12)(B13)(B14)(B15)(B16)(B17)(B18)(B19)(B20)(C8)(C7)(C6)(C5)(C4)(C3)(C2)(C1)(C8)
\end{pspicture}
&
$\begin{array}[b]{|c|c|c|r|}
\boxstandupbis{Instance}  & \boxstandupbis{Length} & \boxstandupbis{Answer} & \BMV \\
\cline{1-4}
A_{\mathit{1}} &  6 & N &         0.07+13.48
\\
A_{\mathit{1}} &  7 & Y &         0.10+5.35
\\
B_{\mathit{1}} & 10 & N &         0.17+3846.20
\\
B_{\mathit{1}} & 11 & Y &         0.14+1802.76
\\
B_{\mathit{1}} & 12 & Y &         0.15+933.35
\\
B_{\mathit{1}} & 13 & Y &         0.16+302.34
\\
B_{\mathit{1}} & 14 & Y &         0.14+4.60
\\
B_{\mathit{2}} & 10 & N &         0.13+11134.82
\\
B_{\mathit{2}} & 11 & Y &         0.15+4191.20
\\
B_{\mathit{2}} & 12 & Y &         0.16+2156.52
\\
B_{\mathit{2}} & 13 & Y &         0.18+710.53
\\
B_{\mathit{2}} & 14 & Y &         0.17+6.36
\\
B_{\mathit{3}} & 10 & N &         0.12+18763.27
\\
B_{\mathit{3}} & 11 & Y &         0.16+6124.91
\\
B_{\mathit{3}} & 12 & Y &         0.15+3148.43
\\
B_{\mathit{3}} & 13 & Y &         0.20+1145.04
\\
B_{\mathit{3}} & 14 & Y &         0.18+9.97
\\
B_{\mathit{4}} & 10 & N &         0.11+17109.05
\\
B_{\mathit{4}} & 11 & Y &         0.15+6173.49
\\
B_{\mathit{4}} & 12 & Y &         0.14+3180.53
\\
B_{\mathit{4}} & 13 & Y &         0.16+1159.27
\\
B_{\mathit{4}} & 14 & Y &         0.18+10.34
\\
\cline{1-4}
\end{array}$
\end{tabular}
}
\caption{\label{GasDiffusionExp}On the left: a simple schema of the 11 rooms for the Gas-diffusion problem.
The locked gates are in red color. The gas (in pink)
is flowing through the open gate (in green), from Room$_{7}$ to Room$_{1}$.
On the right: some results for different instances (i.e., different goals and initial allocations
of amounts of gas---see \Sect{derattization}).}
\end{figure}

\subsection{The Gas-diffusion Problem}\label{derattization}

The Gas-diffusion problem can be formulated as follows.
A building contains a number of rooms. Each room is connected to
(some) other rooms via gates. Initially, all gates are closed and some of the
rooms contain a quantity of gas---while the other rooms are  empty.
Each gate can be opened or closed---\codetext{open(x,y)} and \codetext{close(x,y)}
are the only possible actions, provided that there is
a gate between room~\codetext{x} and room~\codetext{y}.
When a gate between two rooms is open,
the gas contained in these rooms flows through the gate.
The gas diffusion continues until the pressure reaches an equilibrium.
The only condition to be always satisfied is that a gate in a room can be
opened only if all the other gates are closed.
The goal is to move a desired quantity of gas to one specified room.

We experimented with instances of the problem where the
building has a specific topology:
there are eleven rooms, all having the same physical volume.
Each room is connected to the other rooms via gates as depicted
in \Fig{GasDiffusionExp}.
Since all rooms have the same volume, when equilibrium is reached
between two rooms sharing an open gate, they will both
contain the same amount of gas.

A \BMV\ specification of this planning problem is given in
\Sect{GasDiffusion}.
We experimented with different instances of the Gas-diffusion problem
obtained by considering different goal states and by requiring that
some of the rooms have to be kept empty.
Moreover, we seek  plans of different length.
\Fig{GasDiffusionExp} (on the right) summarizes the  results obtained. In particular,
all instances share the same initial state: rooms 10 and 3 contain
128 moles of gas. All the other rooms are empty.
Moreover,
\begin{itemize}
\item
in the instance A$_1$ the goal state is: room 1 contains at least 32 moles of gas;
\item
in all the instances B$_i$ the goal is:
room 1 contains at least 50 moles of gas.
The B$_i$ instances differ in the constraints imposed on the desired plan:
\begin{itemize}
\item
in the instance B$_1$,
rooms 7, 9, and 4 must remain  empty.
This condition can be imposed by including in the action description the constraints
\begin{center}
\codetext{always(contains(7)~eq~0).}
\\\codetext{always(contains(9)~eq~0).}
\\\codetext{always(contains(4)~eq~0).}
\end{center}
\item
in the instance B$_2$,
rooms 7, 8, and 5 must be kept empty.
\item
in the instance B$_3$,
only room 6 must be kept empty.
\item
in the instance B$_4$, no constraint is imposed.
\end{itemize}
\end{itemize}
Observe that it is quite natural to design a \BMV\ encoding of this problem,
by exploiting the multi-valued fluents.
On the other hand, adopting the naive approach used for the three-barrel problem
would force the introduction of (at least) 128 distinct boolean
fluents for each multi-valued fluent.
Such a large number of boolean fluents generates
a large state space, making the task of any
solver for $\mathcal{B}$ considerably harder.

\subsection{Other Puzzles}\label{other}
We report results from two other planning problems.
The first---3x3-puzzle---is  an encoding of the 8-tile puzzle problem, where the goal
is to find a sequence of moves to
re-order the 8 tiles, starting from a random initial position. The performance
results for this puzzle are reported in Table~\ref{tabellaPuzzle}.
The second problem is the well-known \emph{Wolf-goat-cabbage} problem. The performance results
are reported in Table~\ref{tabellaWGC}.

Notice that these planning problems are predominantly Boolean.
The constraint-based encodings perform well in solving the instances
of the Wolf-goat-cabbage problem.
In contrast, for the 8-tile puzzle problem, the use
of numerical fluents allows us to achieve a compact
encoding, but it does not necessarily lead to a better
performance w.r.t. ASP.

\begin{sidewaystable}
\begin{tabular}{|c|c|c|r|r|r|r|r|r|r||r|r|}
& & & \multicolumn{8}{c|}{$\mathcal{B}$} &
\multicolumn{1}{c|}{\BMV} \\
\boxstandup{Instance} &
\boxstandup{Length}  &
\boxstandup{Answer}  &
\multicolumn{1}{c|}{\em lparse}   &
\multicolumn{1}{c|}{Smodels}   &
\multicolumn{3}{c|}{Cmodels} &
\multicolumn{1}{c|}{Clasp} &
\multicolumn{1}{c||}{Best}
&  \multicolumn{1}{c|}{CLP(FD)}
&  \multicolumn{1}{c|}{CLP(FD)}
\\
& &        & & & zchaff & relsat & minisat & &
\multicolumn{1}{c||}{ASP}
 & &  \multicolumn{1}{c|}{~}
\\
\cline{1-12}
I$_1$  &  9 & N &   41.49 &     0.94 &   2.06  &     3.36 &     1.54 &     0.52 &  0.52 & 0.64+4.42   & 0.25+2.64
\\
I$_1$  & 10 & Y &   41.80 &     2.02 &   2.52  &     7.36 &     2.06 &     0.70 &  0.70 & 0.73+5.43   & 0.29+3.64
\\
I$_2$  & 14 & N &   42.68 &    27.10 &  34.46  &    90.07 &     7.15 &     7.42 &  7.15 & 1.03+57.54  & 0.40+38.67
\\
I$_2$  & 15 & Y &   43.14 &    50.73 &  49.50  &   131.38 &     8.90 &     1.98 &  1.98 & 1.06+7.08   & 0.43+4.60
\\
I$_3$  & 19 & N &   43.76 &   739.39 &1255.46  &   911.82 &    91.75 &   268.69 & 91.75 & 1.39+967.26 & 0.54+673.66
\\
I$_3$  & 20 & Y &   44.52 &   368.28 &1090.66  &  1445.78 &    58.89 &   268.59 & 58.89 & 1.46+597.92 & 0.52+435.96
\\
I$_4$  & 24 & N &   51.59 & 10247.47 &   --    &  5613.98 & 7862.10  &  4185.42 &4185.42&1.70+13887.17& 0.71+10109.58
\\
I$_4$  & 25 & Y &   55.54 &  1430.43 & 954.68  &  1023.22 &  437.11  &   875.16 &437.11 & 1.84+79.20  & 0.73+57.00
\\
I$_5$  & 24 & N &   49.64 &  6936.39 &   --    &  6041.87 & 1239.72  &  4901.13 &1239.72&1.69+11092.48& 0.73+9155.79
\\
I$_5$  & 25 & N &   51.07 & 14079.78 &3747.96  &  8583.44 &11745.93  &  8557.94 &3747.96&1.84+18301.15& 0.73+14195.54
\\
\cline{1-12}
\end{tabular}
\caption{\label{tabellaPuzzle}Experimental results for instances of the 8-tile puzzle problem (timeout 36000 sec).}
\end{sidewaystable}

\begin{table}
\begin{center}
\begin{tabular}{|c|c|r|r|r|r|r|r|r||r|r|}
& & \multicolumn{8}{c|}{$\mathcal{B}$} &
\multicolumn{1}{c|}{\BMV} \\
\boxstandup{Length}  &
\boxstandup{Answer}  &
\multicolumn{1}{c|}{\footnotesize\em\!\!lparse}   &
\multicolumn{1}{c|}{\footnotesize\!\!\!\!Smodels}   &
\multicolumn{3}{c|}{\footnotesize Cmodels} &
\multicolumn{1}{c|}{\footnotesize Clasp} &
\multicolumn{1}{c||}{\footnotesize Best}
&  \multicolumn{1}{c|}{\footnotesize\!CLP(FD)}
&  \multicolumn{1}{c|}{\footnotesize\!CLP(FD)}
\\
&        & & &{\scriptsize\!\!\!zchaff}&{\scriptsize\!\!\!relsat}&{\scriptsize\!\!\!minisat} & &
\multicolumn{1}{c||}{\footnotesize ASP}
 & &  \multicolumn{1}{c|}{~}
\\
\cline{1-11}
21 & N &    0.10 &     0.19 &   1.38  &     1.89 &     0.67 &  0.19 & 0.19 & 0.10+0.20  &  0.09+0.15
\\
22 & N &    0.10 &     0.25 &   1.46  &     3.32 &     0.77 &  0.56 & 0.25 & 0.09+0.21  &  0.11+0.17
\\
23 & Y &    0.10 &     0.26 &   2.30  &     4.34 &     0.58 &  0.13 & 0.13 & 0.12+0.17  &  0.07+0.15
\\
24 & N &    0.11 &     0.43 &   3.10  &     4.75 &     0.67 &  1.09 & 0.43 & 0.07+0.32  &  0.06+0.25
\\
25 & Y &    0.12 &     0.27 &   1.15  &     4.92 &     0.74 &  0.42 & 0.27 & 0.12+0.06  &  0.08+0.08
\\
26 & N &    0.12 &     0.68 &   7.23  &    11.52 &     1.18 &  0.69 & 0.68 & 0.10+0.49  &  0.10+0.40
\\
27 & Y &    0.13 &     0.43 &   1.93  &     6.68 &     0.93 &  0.84 & 0.43 & 0.10+0.03  &  0.06+0.03
\\
28 & N &    0.14 &     1.24 &   9.44  &    18.72 &     1.59 &  2.15 & 1.24 & 0.10+0.80  &  0.08+0.69
\\
29 & Y &    0.14 &     0.41 &   1.75  &    15.55 &     1.10 &  0.60 & 0.41 & 0.11+0.01  &  0.07+0.03
\\
30 & N &    0.15 &     2.97 &  16.17  &    43.53 &     2.31 &  1.78 & 1.78 & 0.11+1.08  &  0.08+1.05
\\
31 & Y &    0.15 &     0.49 &   8.40  &     7.10 &     0.89 &  4.60 & 0.49 & 0.12+0.01  &  0.11+0.04
\\
32 & N &    0.16 &     2.78 &  23.76  &    38.58 &     2.20 &  5.37 & 2.20 & 0.13+1.35  &  0.09+1.32
\\
33 & Y &    0.16 &     1.06 &  31.92  &    26.67 &     1.23 &  0.57 & 0.57 & 0.10+0.07  &  0.14+0.06
\\
34 & N &    0.17 &     3.61 &  38.62  &    51.22 &     3.11 &  5.86 & 3.11 & 0.13+1.75  &  0.10+1.60
\\
35 & Y &    0.18 &     1.39 &  31.10  &    30.25 &     3.20 &  4.21 & 1.39 & 0.15+0.54  &  0.08+0.32
\\
36 & N &    0.18 &     4.55 &  43.97  &    57.21 &     4.24 & 12.68 & 4.24 & 0.13+1.87  &  0.11+1.79
\\
\cline{1-11}
\end{tabular}
\end{center}
\caption{\label{tabellaWGC}Experimental results for instances of the Wolf-goat-cabbage problem.}
\end{table}

\subsection{A Summary of the Experiments}\label{palloSection}

Table~\ref{pallogramma} pictorially summarizes some of the results relating the performance of
the different approaches. For each problem instance, we compare
the execution times obtained by the best ASP-solver and the CLP(FD) solvers for $\mathcal{B}$
and \BMV\ action description languages.
We considered only those instances for which at least one of the
solvers gave an answer.
A score of $1$ ($0$, $-1$) is assigned to
the fastest (second fastest, slowest) solver.
The scores of all instances of a problem have been summed together,
and this provides the radius of the circles in the figure. Instances have been
separated between \emph{``Yes''} instances (they admit a solution) and
\emph{``No''}
instances (they have no solutions).

\newgray{g00}{0}
\begin{table}[ht]
\begin{center}
\framebox{
\begin{pspicture}(-0.1,-3.2)(11.6,5.1)
\psset{runit=0.1pt}
\psset{xunit=0.5pt}
\psset{yunit=0.5pt}
\rput[bl]{0}(-8,85){\psframebox*{{Three-barrel}}}
\rput[bl]{0}(-8,10){\psframebox*{{Community}}}
\rput[bl]{0}(-8,-60){\psframebox*{{Goat\&C.}}}
\rput[bl]{0}(-8,-130){\psframebox*{{Tile-puzzle}}}
\rput[bl]{0}(135,150){\psframebox*{\standup{{\sc $\mathcal{B}$-Best-ASP}}}}
\rput[bl]{0}(192,150){\psframebox*{\standup{{\sc $\mathcal{B}$-CLP(FD)}}}}
\rput[bl]{0}(246,150){\psframebox*{\standup{{\sc \BMV-CLP(FD)}}}}
\rput[bl]{0}(319,150){\psframebox*{\standup{{\sc $\mathcal{B}$-Best-ASP}}}}
\rput[bl]{0}(375,150){\psframebox*{\standup{{\sc $\mathcal{B}$-CLP(FD)}}}}
\rput[bl]{0}(431,150){\psframebox*{\standup{{\sc \BMV-CLP(FD)}}}}
\rput[bl]{0}(502,150){\psframebox*{\standup{{\sc $\mathcal{B}$-Best-ASP}}}}
\rput[bl]{0}(558,150){\psframebox*{\standup{{\sc $\mathcal{B}$-CLP(FD)}}}}
\rput[bl]{0}(612,150){\psframebox*{\standup{{\sc \BMV-CLP(FD)}}}}
\qline(114,-189)(114,297)
\rput[bl]{0}(140,-175){\psframebox*{{``No'' instances}}}
\qline(295,-189)(295,297)
\rput[bl]{0}(320,-175){\psframebox*{{``Yes'' instances}}}
\qline(482,-189)(482,297)
\rput[bl]{0}(520,-175){\psframebox*{{All instances}}}
\qline(-16,137)(667,137)
 \pscircle[fillstyle=solid,fillcolor=g00](149,94){0}
 \pscircle[fillstyle=solid,fillcolor=g00](205,94){50}
 \pscircle[fillstyle=solid,fillcolor=g00](260,94){100}
 \pscircle[fillstyle=solid,fillcolor=g00](149,24){4.2}
 \pscircle[fillstyle=solid,fillcolor=g00](205,24){45.8}
 \pscircle[fillstyle=solid,fillcolor=g00](260,24){95.8}
 \pscircle[fillstyle=solid,fillcolor=g00](149,-50){5.6}
 \pscircle[fillstyle=solid,fillcolor=g00](205,-50){44.4}
 \pscircle[fillstyle=solid,fillcolor=g00](260,-50){100}
 \pscircle[fillstyle=solid,fillcolor=g00](149,-116){75}
 \pscircle[fillstyle=solid,fillcolor=g00](205,-116){8.3}
 \pscircle[fillstyle=solid,fillcolor=g00](260,-116){66.7}
 \pscircle[fillstyle=solid,fillcolor=g00](331,94){0}
 \pscircle[fillstyle=solid,fillcolor=g00](388,94){50}
 \pscircle[fillstyle=solid,fillcolor=g00](448,94){100}
 \pscircle[fillstyle=solid,fillcolor=g00](331,24){8.3}
 \pscircle[fillstyle=solid,fillcolor=g00](388,24){41.7}
 \pscircle[fillstyle=solid,fillcolor=g00](448,24){100}
 \pscircle[fillstyle=solid,fillcolor=g00](331,-50){7.1}
 \pscircle[fillstyle=solid,fillcolor=g00](388,-50){57.1}
 \pscircle[fillstyle=solid,fillcolor=g00](448,-50){85.7}
 \pscircle[fillstyle=solid,fillcolor=g00](331,-116){25}
 \pscircle[fillstyle=solid,fillcolor=g00](388,-116){37.5}
 \pscircle[fillstyle=solid,fillcolor=g00](448,-116){87.5}
 \pscircle[fillstyle=solid,fillcolor=g00](515,94){0}
 \pscircle[fillstyle=solid,fillcolor=g00](570,94){50}
 \pscircle[fillstyle=solid,fillcolor=g00](627,94){100}
 \pscircle[fillstyle=solid,fillcolor=g00](515,24){5.6}
 \pscircle[fillstyle=solid,fillcolor=g00](570,24){44.4}
 \pscircle[fillstyle=solid,fillcolor=g00](627,24){97.2}
 \pscircle[fillstyle=solid,fillcolor=g00](515,-50){6.3}
 \pscircle[fillstyle=solid,fillcolor=g00](570,-50){50}
 \pscircle[fillstyle=solid,fillcolor=g00](627,-50){93.8}
 \pscircle[fillstyle=solid,fillcolor=g00](515,-116){55}
 \pscircle[fillstyle=solid,fillcolor=g00](570,-116){20}
 \pscircle[fillstyle=solid,fillcolor=g00](627,-116){75}
\end{pspicture}}
\end{center}
\caption{\label{pallogramma}Relative performance of
the solvers for each set of instances (the radii of the circles are proportional
to the performance of the specific solver).}
\end{table}

The success of the constraint-based approach is evident.
However, it is interesting to observe that the planning problems that do not make
significant use of non-boolean fluents tend to perform better in the ASP-based
implementations---possibly due to the greater efficiency of ASP solvers in propagating
boolean knowledge during search for a solution.
Conversely, when numerical quantities are relevant in modeling a planning problem,
the use of multi-valued fluents and constraints not only reduces the modeling effort,
yielding more concise formalizations, but also requires a smaller number
of fluents (compared with the analogous Boolean encoding).
This, combined with the use of constraints, often translates into a smaller state space
to be explored in finding a solution. These seem to be  the main reasons for the
better behavior provided by the \BMV\ approach.

The distinction between \emph{``Yes''} and \emph{``No''}
instances is also very relevant. The CLP-based solvers tend to perform better
on the \emph{``Yes''} instances, especially for large instances.
It is interesting to observe
that a similar behavior has been observed  in recent
studies comparing performance of ASP and CLP solutions to combinatorial
problems~\cite{DFPiclp05,NECTAR07,JETAI08}.

\section{Related work}\label{sec:related}

The literature on planning and planning domain
description languages is extensive, and it would be
impossible to summarize it all in this context.
We focus our discussion and comparison to the papers that present languages
and techniques similar to ours.

\medskip

The language investigated in this work is a  variant
of the language $\mathcal{B}$ originally
introduced in~\cite{GL98}, as presented
in~\cite[Sect.~2]{SON01}.
Apart from minor syntactical differences, any action description $\mathcal{D}$
from the language of~\cite{SON01} can be embedded in our~$\mathcal{B}$.
The semantics for $\mathcal{B}$ presented here reproduces
the one of~\cite{GL98}.

\medskip

The language $\cal ADC$ has been introduced in \cite{BaralS02} to
model planning problems in presence of actions with duration and
delayed effects.
The language relies on multi-valued fluents, akin to those used in our
language. $\cal ADC$ actions have two types of effects:
\begin{enumerate}
\item Direct modification of fluent values, described by
    dynamic causal laws of the forms
    \begin{eqnarray}
        a  \textbf{ causes }  f = g(f,f_1,\dots,f_n,t)   \textbf{ from $t_1$ to $t_2$} \label{eq1}\\
        a  \textbf{ contributes }  g(f,f_1,\dots,f_n,t)   \textbf{ to $f$ from $t_1$ to $t_2$} \label{eq2}
    \end{eqnarray}
      The first axiom describes the value of the fluent $f$
        as a function, that modifies its value over the period of time from $t_1$ to $t_2$---these
            represent time units relative to the current point in time.
      The second axiom is similar, except that it denotes the quantity that should be added to the
            value of $f$ over the period of time.
    These axioms are important when describing actions whose effect has a known duration over time
        (i.e., the interval of length $t_2-t_1$).

\item Indirect modifications through the initiation and termination of \emph{processes}, that can
        modify fluents until explicitly stopped; the axioms involved are axioms for the
        creation and termination of processes:
        \begin{eqnarray}
            a_1  \textbf{ initiates }  p  \textbf{ from } t_1  \label{eq3}\\
            a_2  \textbf{ terminates }  p  \textbf{ at }  t_2 \label{eq4}
            \end{eqnarray}
        and axioms that describe how processes modify fluents
        \begin{eqnarray}
            p  \textbf{ is\_associated\_with }  f=g(f,f_1,\dots,f_n,t) \label{eq5}\\
            p  \textbf{ is\_associated\_with }  f \leftarrow g(f,f_1,\dots,f_n,t) \label{eq6}
        \end{eqnarray}
        The first axiom describes how the value of
            the fluent $f$ will change as a function of time once a process is started;
        the second axiom determines how the value of $f$ changes while the process $p$ is active.
\end{enumerate}
$\cal ADC$ has some similarities to \BMV; they both allow multi-valued fluents
and some forms of temporal references. \BMV\ has the flexibility of allowing
non-Markovian behavior and it allows references to values of fluents at different time
points, features that are missing in $\cal ADC$. On the other hand, $\cal ADC$ allows the representation
of  continuous time and the ability to describe continuous changes to the value of fluents.

Several features of $\cal ADC$ can be reasonably simulated in \BMV; we will focus on
the axioms of type (\ref{eq3})--(\ref{eq6}), since these subsume the capabilities of
axioms (\ref{eq1}) and (\ref{eq2}):
\begin{itemize}
\item we can represent each process $p$ using  a corresponding fluent;
\item the axioms (\ref{eq3}) and (\ref{eq4}) can be simulated by
\[\begin{array}{lcr}
   \codetext{causes}(a_1,p^{t_1-1}=1,\codetext{true}) & \hspace{.5cm} &
        \codetext{causes}(a_2, p^{t_2-1}=0,\codetext{true})
    \end{array}\]
\item the axiom (\ref{eq5}) can be simulated by introducing the static causal law
    \[ \codetext{caused}(p>0, f=g(f^{-1},f_1^{-1}, \dots, f_n^{-1},p^{-1}) \wedge p = p^{-1}+1)\]
\end{itemize}
Note that, due to the inability of \BMV\ to handle continuous time,
we are considering only discrete time measures.

\medskip

The language ${\cal C}^+$ proposed in \cite{Giun} also has some similarities
to the language \BMV.  ${\cal C}^+$ does not offer
capabilities for non-Markovian and temporal references, but supports multi-valued
fluents. The syntax of ${\cal C}^+$ builds on a language of fluent constants (each
with an associated domain) and action names (viewed as Boolean variables):
\begin{itemize}
\item Static causal laws
\[ \textbf{caused } F \textbf{ if } G \]
where $F$ and $G$ are fluent formulae (i.e., propositional combinations of
atoms of the form $f=v$ for $f$ fluent and $v \in \dom(f)$). The language
introduces syntactic restrictions that are effectively equivalent to preventing
cyclic dependencies among fluents. Static causal laws describe dependencies between
fluents within a state of the world.

\item Fluent dynamic laws
\[ \textbf{caused } F \textbf{ if } G \textbf{ after } H \]
where $F$ and $G$ are fluent formulae and $H$ is a formula that may also contain
action variables. The semantics of dynamic laws can be summarized as follows: if
$H$ holds in a state, then the implication $G \rightarrow F$ should hold in the
successive state.

\item Actions that can be freely generated are declared to be exogenous
\[ \textbf{exogenous } a\]

\item Fluents can be declared to be inertial (i.e., they satisfy the frame axiom)
\[ \textbf{inertial } f \]
\end{itemize}
The relationships between the two languages can be summarized as follows:
\begin{itemize}
\item ${\cal C}^+$ is restricted to non-cyclic dependencies among fluents, while
    \BMV\ lifts this restriction.
\item ${\cal C}^+$ is capable of identifying fluents as inertial or non-inertial,
    while \BMV\ focuses only on inertial fluents (though it is relatively
        simple to introduce an additional type of constraint to create non-inertial fluents).
\item ${\cal C}^+$ can describe domains where concurrent actions are
    allowed---by allowing occurrences of different action variables in the $H$
        component of the fluent dynamic laws; although \BMV\ does not
        currently supports this feature, a similar extension has been investigated in
        a recent paper~\cite{DFPlpnmr09}.
\end{itemize}
Subsets of \BMV\ and ${\cal C}^+$ can be shown to have the same expressive
power; in particular, let us consider the subset of ${\cal C}^+$ that contains
only domains that meet the following requirements:
\begin{itemize}
\item there are no concurrent actions---i.e., each $H$ contains exactly one occurrence
of an action variable; thus
\[ \textbf{caused } F \textbf{ if } G \textbf{ after } a \wedge H \]
where $H$ is a fluent formula;
\item for each action $a$, there is a declaration
$$\textbf{exogenous } a.$$
\end{itemize}
Under these restrictions, it is possible to map a ${\cal C}^+$ domain $D$
to an equivalent domain in \BMV. In particular:
\begin{itemize}
\item for each non-inertial fluent $f$, with default value $v$,
    we  introduce   the static law
    \[ \codetext{caused}(f^{-1}\neq v, f^0 = v)\]
\item for each static causal law {\bf caused} $F$ {\bf if} $G$ we introduce
    a causal law $\codetext{caused}(G,F)$
\item for each fluent dynamic law $r$ of the form
$\textbf{caused } F \textbf{ if } G \textbf{ after } a \wedge H$,
we introduce the following axioms (where $exec\_r$ is a fresh fluent):
\[\begin{array}{l}
    \codetext{causes}(a, exec\_r=1, H)\\
    \codetext{causes}(a, exec\_r^{1}=0, H)\\
    \codetext{caused}(exec\_r=1 \wedge G, F)
\end{array}\]
\end{itemize}

\medskip

Logic programming, and more specifically Prolog, has been also used to
implement the first prototype of GOLOG (as discussed in \cite{gogol}). GOLOG
is a programming language for describing agents and their capabilities
of changing the state of the world. The language builds on the
foundations of situation calculus. It provides high level
constructs for the definition of complex actions and for the
introduction of control knowledge in the agent specification. Prolog
is employed to create an interpreter, which enables, for example, to
answer projection queries (i.e., determine the properties that hold in
a situation after the execution of a sequence of actions). The goals of GOLOG
and the use of logic programming in that work are radically different from
the focus of our work.

\medskip

The work by~\cite{thielscher} takes a different perspective in using constraint programming
to handle problems in reasoning about actions and change. Thielscher's work builds on the
use of Fluent Calculus~\cite{fluentcalc} for the representation of actions and their effects.
Fluent calculus views states as sets of fluents, constructed using an operator $\circ$, and with
the ability to encode partially specified sets (e.g., $f_1 \circ f_2 \circ Z$ where $Z$ represents
the ``rest'' of the state).
In~\cite{thielscher}, an encoding of the fluent calculus axioms using Constraint Handling Rules (CHRs)
is presented; the encoding uses \emph{lists} to represent states, and it employs CHRs to explicitly
implement the operations on lists required to operate on states---e.g., truth or falsity of a fluent,
validation of disjunctions of fluents. The ability to code open lists enables reasoning with
incomplete knowledge. Experimental results (reported in~\cite{experfc}) denote a
good performance with respect to GOLOG. The framework is very suitable for dealing with incomplete
knowledge and sensing actions. Differently from our framework, it does not support non-Markovian
reasoning, multi-valued reasoning, and it does not bring the expressiveness of constraint programming
to the level of the action specification language. The use of constraints in the two approaches is
radically different---Thielscher's work develops new constraint solvers to implement reasoning about
states, while we use existing solvers as black boxes.

\medskip

A strong piece of work regarding the use of constraint programming in planning
is~\cite{vidalgeffner}. The authors use constraint programming, based on the
CLAIRE language~\cite{claire}, to encode temporal planning problems and
to search for minimal plans.
They also use a series of interesting heuristics for solving that problem.
This line of research is more accurate than ours from the implementation point
of view---although their heuristic strategies can be implemented in our system
and it would be interesting to exploit them during the labeling phase.
On the other hand, the proposal by Vidal and Geffner only deals with Boolean fluents and without
explicitly defined static causal laws.

Similar considerations can be done with respect to
the cited proposal by Lopez and Bacchus~\cite{lopezbacchus}.
The authors start from Graphplan and exploit constraints to encode \mbox{$k$-plan} problems.
Fluents are in this case only Boolean (not multi-valued) and the process is deterministic
once an action is chosen (instead, we  deal also with non-determinism, e.g.,
when we have consequences such as $f>5$). The proposal of Lopez and Bacchus does not
address the encoding of static causal laws.

\section{Conclusions and Future Work}\label{sec:endofit}

In this paper, we investigated the application
of constraint logic programming technology to the problem of
reasoning about actions and change and planning.
In particular,
we presented a modeling of the action language $\mathcal{B}$ using
constraints, developed an  implementation using
CLP(FD), and reported on its performance. We also presented the
action language \BMV, which allows the use of multi-valued fluents
and the use of constraints as conditions and consequences of
actions. Once again, the use of constraints is instrumental in
making these extensions possible.
We illustrated the application of both $\mathcal{B}$ and  \BMV\ to
several planning problems.
Both languages have been implemented using SICStus Prolog.

We consider the research and the results discussed in this
paper as a preliminary step in a very promising direction.
The experimental results, as well as the elegance of the encodings
of complex problems, shows the promise of constraint-based technology
to address the needs of complex planning domains. A number of
research directions are currently being pursued:
\begin{itemize}
\item we have introduced the use of global constraints to encode
        different forms of preferences (e.g., action costs)
        and control knowledge. Global constraints have been widely
        used in constraint programming to enhance efficiency, by
        providing more effective constraint propagations between
        sets of variables; we believe a similar use of global constraints
        can be introduced in the context of planning---e.g., the
        use of techniques used to efficiently handle the
        \codetext{alldifferent} global constraint to enforce non-repetition
        of states in a trajectory.

\item We also believe that significant improvements in efficiency
can be achieved by delegating parts of the constraint
solving process to an efficient dedicated solver (e.g., encoded using
a constraint platform such as GECODE, possibly enhanced with local search
moves).

\item
The encoding in CLP(FD) allow us to think of extensions
in several directions, such as the encoding of qualitative and
quantitative preferences (a preliminary study has been
presented in~\cite{TuSP07}),
and the use of
constraints to represent incomplete states---e.g., to determine
most general conditions for the existence of a plan and to
support conformant planning~\cite{sonconf}.

\item
An interesting line of research is represented by the application of the
approach discussed here to multi-agent systems. In that case, besides admitting
the execution of more that one action in each state transition
(cf., Remark~\ref{sequentiality}), other important issues have to be
addressed, since different agents may compete or collaborate
in order to reach the desired results. For instance, concurrency of actions
may be subject to constraints to model incompatibilities or interdependencies
among the occurrences/effects of different actions executed by different agents
(even in different points in time).
Hence, the action description language, as well as its CLP encoding,
has to be suitably enriched in order to deal with these aspects.
A first step in this direction has been presented in~\cite{DFPlpnmr09}.
\end{itemize}

\subsection*{Acknowledgments}
The authors would like to thank the following
researchers for their help, comments, and suggestions:
Son Cao Tran, Michael Gelfond, and the anonymous reviewers of ICLP 2007
and TPLP.

The research has been partially supported by
NSF Grants IIS-0812267,
HRD-0420407, and CNS-0220590,
by the FIRB grant RBNE03B8KK, and by
GNCS---\emph{Gruppo Nazionale per il Calcolo Scientifico}
(project \emph{Tecniche innovative per la programmazione con vincoli in applicazioni strategiche}).

\bibliographystyle{acmtrans}

\newpage

\appendix

\section{Some of the codes of the Experimental Section}

\subsection{The Three-Barrel Problem: \BMV\ description of the 12-7-5 barrels problem}
\label{MV_Barrels}

The \BMV\ encoding of the three barrels planning problem for $N=12$.
(\Fig{Bool_Barrels} presents an encoding using the language~$\mathcal{B}$.)

\begin{center}
\begin{minipage}[t]{1.0\textwidth}
\begin{codicenonum}{0}
barrel(5).\\
barrel(7).\\
barrel(12).\\
\\
fluent(cont(B),0,B) :- barrel(B).\\
\\
action(fill(X,Y)) :- barrel(X), barrel(Y), neq(X,Y).\\
\\
causes(fill(X,Y), cont(X) eq 0, [Y-cont(Y) geq cont(X)]) :-\\
\tab        action(fill(X,Y)).\\
causes(fill(X,Y), cont(Y) eq cont(Y)\^{}(-1)+cont(X)\^{}(-1),
\\\tab\tab
 [Y-cont(Y) geq cont(X)]) :-\\
\tab action(fill(X,Y)).\\
causes(fill(X,Y), cont(Y) eq Y, [Y-cont(Y) lt cont(X)]) :-\\
\tab        action(fill(X,Y)).\\
causes(fill(X,Y), cont(X) eq cont(X)\^{}(-1)-Y+cont(Y)\^{}(-1),
\\\tab\tab
 [Y-cont(Y) lt cont(X)]) :-\\
\tab        action(fill(X,Y)).\\
\\
executable(fill(X,Y), [cont(X) gt 0, cont(Y) lt Y]) :-\\
\tab action(fill(X,Y)).\\
\\
caused([], cont(12) eq 12-cont(5)-cont(7)).\\
\\
initially(cont(12) eq 12).\\
\\
goal(cont(12) eq  cont(7)).
\end{codicenonum}
\end{minipage}
\end{center}
\vfill

\newpage

\subsection{The HP Protein Folding Problem}
\label{PF-encoding}
\BMV\ encoding of the HP-protein folding problem with pivot moves
on input of the form 1001001001\dots\ starting from a vertical straight line.

\begin{center}
\begin{minipage}[t]{1.0\textwidth}
\begin{codicenonum}{0}
length(10). \\
amino(A) :-
   length(N), interval(A,1,N).\\
direction(clock). \\
direction(antick).\\
\\
fluent(x(A),1,M) :-\\
\tab   length(N), M is 2*N, amino(A).\\
fluent(y(A),1,M) :-\\
\tab   length(N), M is 2*N, amino(A).\\
fluent(type(A),0,1) :- \\
\tab amino(A).\\
fluent(saw,0,1).\\
\\
action(pivot(A,D)) :- \\
\tab length(N), amino(A),\\
\tab  1<A, A<N, direction(D).\\
\\
executable(pivot(A,D), []) :-
    action(pivot(A,D)).\\
\\
causes(pivot(A,clock), x(B) eq x(A)\^{}(-1)+y(B)\^{}(-1)-y(A)\^{}(-1), []) :-\\
\tab        action(pivot(A,clock)), amino(B), B > A.\\
causes(pivot(A,clock), y(B) eq y(A)\^{}(-1)+x(A)\^{}(-1)-x(B)\^{}(-1), []) :-\\
\tab        action(pivot(A,clock)), amino(B), B > A.\\
causes(pivot(A,antick), x(B) eq x(A)\^{}(-1)-y(B)\^{}(-1)+y(A)\^{}(-1), []) :-\\
\tab        action(pivot(A,antick)), amino(B), B > A.\\
causes(pivot(A,antick), y(B) eq y(A)\^{}(-1)-x(A)\^{}(-1)+x(B)\^{}(-1), []) :-\\
\tab        action(pivot(A,antick)), amino(B), B > A.\\
\\
caused([x(A) eq x(B), y(A) eq y(B)], saw eq 0) :-\\
\tab         amino(A), amino(B), A < B.\\
\\
initially(saw eq 1). \\
initially(x(A) eq  N) :-
     length(N), amino(A).\\
initially(y(A) eq  Y) :-
    length(N), amino(A), Y is N+A-1.\\
initially(type(X) eq 1) :-
    amino(X), X mod 3 =:= 1.\\
initially(type(X) eq 0) :-
    amino(X), X mod 3 ={\tt$\backslash$}= 1.\\
\\
goal(saw gt 0).\\
\\
state\_cost(FE) :-
    length(N),
    auxc(1,4,N,FE).\\
auxc(I,J,N,0) :-
    I > N-3,!.\\
auxc(I,J,N,FE) :- J > N, !, I1 is I+1,\\
\tab     J1 is I1+3, auxc(I1,J1,N,FE).\\
auxc(I,J,N,FE1+type(I)*type(J)*rei(abs(x(I)-x(J))+abs(y(I)-y(J)) eq 1)) :-\\
\tab    J1 is J+2, auxc(I,J1,N,FE1).\\
\\
always(x(1) eq 10).    always(y(1) eq 10).\\
always(x(2) eq 10).    always(y(2) eq 11).\\
\\
cost\_constraint(goal geq 4).
\end{codicenonum}
\end{minipage}
\end{center}

\vfill
\newpage

\subsection{The Community Problem}

\subsubsection{$\mathcal{B}$ description of the instance A$_4$}
\label{BCommunity}

\begin{center}
\begin{minipage}[t]{.90\textwidth}
\begin{codicenonum}{0}
max\_people(4).\\
person(X) :- max\_people(N), interval(X,1,N).\\
money(X) :- max\_people(N), M is N*(N+1), interval(X,1,M).\\
\\
fluent(owns(B,M)) :- person(B), money(M).\\
\\
action(gives(X,Y)) :-  \\
\tab    person(X), person(Y), neq(X,Y). \\
\\
executable(gives(X,Y), [owns(X,Mx)]) :-  \\
\tab    action(gives(X,Y)),\\
\tab    fluent(owns(X,Mx)), Mx > X.\\
\\
causes(gives(X,Y), owns(X,NewMx), [owns(X,Mx)]) :-\\
\tab    action(gives(X,Y)), money(Mx),\\
\tab    fluent(owns(X,NewMx)), fluent(owns(X,Mx)),\\
\tab    NewMx is Mx-X.\\
causes(gives(X,Y), owns(Y,NewMy), [owns(Y,My)]) :-\\
\tab    action(gives(X,Y)), money(My),\\
\tab    fluent(owns(Y,NewMy)), fluent(owns(Y,My)), \\
\tab    NewMy is My+X.\\
\\
caused([owns(X,Mx)], neg(owns(X,Other))) :-\\
\tab    fluent(owns(X,Mx)),
        fluent(owns(X,Other)),\\
\tab    person(X), money(Mx), money(Other), neq(Mx,Other).\\
\\
initially(owns(X,M)) :-\\
\tab    person(X), M is 2*X.\\
\\
goal(owns(X,Mid)) :-\\
\tab    person(X), max\_people(N), Mid is (N*(N+1))//N.
\end{codicenonum}
\end{minipage}
\end{center}
\vfill

\subsubsection{\BMV\ description of the instance A$_4$}
\label{MVCommunity}

\begin{center}
\begin{minipage}[t]{1.0\textwidth}
\begin{codicenonum}{0}
max\_people(4).  \\
person(X) :- max\_people(N), interval(X,1,N).\\
\\
fluent(owns(B),1,M) :-\\
\tab    person(B), max\_people(N), M is N*(N+1).\\
\\
action(gives(X,Y)) :-  \\
\tab    person(X), person(Y), neq(X,Y). \\
\\
executable(gives(X,Y), [owns(X) gt X]) :-  \\
\tab    action(gives(X,Y)).\\
\\
causes(gives(X,Y), owns(X) eq owns(X)\^{}(-1)-X, []) :-\\
\tab    action(gives(X,Y)).\\
causes(gives(X,Y), owns(Y) eq owns(Y)\^{}(-1)+X, []) :-\\
\tab    action(gives(X,Y)).\\
\\
initially(owns(X) eq M) :-\\
\tab    person(X), M is 2*X.\\
\\
goal(owns(X) eq Mid) :-\\
\tab    person(X), max\_people(N), Mid is (N*(N+1))//N.
\end{codicenonum}
\end{minipage}
\end{center}

\vfill

\newpage
\subsection{The 8-Tile Puzzle Problem}

\subsubsection{$\mathcal{B}$ description of the instance I$_1$}
\label{Btilepuzzle}

\begin{center}
\begin{minipage}[t]{1.0\textwidth}
\begin{codicenonum}{0}
cell(X) :- interval(X,1,9).\\
val(X) :- interval(X,1,9), neq(X,3).\\
near(1,2). ~near(1,4).\\
near(2,1). ~near(2,3). ~near(2,5).\\
near(3,2). ~near(3,6).\\
near(4,1). ~near(4,5). ~near(4,7).\\
near(5,2). ~near(5,4). ~near(5,6). ~near(5,8).\\
near(6,3). ~near(6,5). ~near(6,9).\\
near(7,4). ~near(7,8).\\
near(8,5). ~near(8,7). ~near(8,9).\\
near(9,6). ~near(9,8).
\\
\\
fluent(at(X,Y)) :- val(X), cell(Y).\\
fluent(free(Y)) :- cell(Y).\\
\\
action(move(X,Y)) :- val(X), cell(Y).\\
\\
executable(move(X,Y), [at(X,Z), free(Y)]) :- \\
\tab  val(X), cell(Y), cell(Z), near(Z,Y).\\
\\
causes(move(X,Y), at(X,Y), []) :- \\
\tab  val(X), cell(Y).\\
causes(move(X,Y), free(Z), [at(X,Z)]) :- \\
\tab  val(X), cell(Y), cell(Z).\\
caused([at(X,Y)], neg(free(Y))) :-\\
\tab   val(X), cell(Y).\\
\\
caused([at(X,Y)], neg(at(X,Z))) :-\\
\tab   val(X), cell(Y), cell(Z), neq(Y,Z).\\
caused([at(X,Y)], neg(at(W,Y))) :-\\
\tab   val(X), val(W), cell(Y), neq(X,W).
\\
\\
initially(at(1,1)). initially(at(2,3)). initially(at(4,8)).\\
initially(at(5,2)). initially(at(6,9)). initially(at(7,4)).\\
initially(at(8,6)). initially(at(9,7)). initially(free(5)).\\
initially(neg(at(1,X))) :- cell(X), neq(X,1).\\
initially(neg(at(2,X))) :- cell(X), neq(X,3).\\
initially(neg(at(4,X))) :- cell(X), neq(X,8).\\
initially(neg(at(5,X))) :- cell(X), neq(X,2).\\
initially(neg(at(6,X))) :- cell(X), neq(X,9).\\
initially(neg(at(7,X))) :- cell(X), neq(X,4).\\
initially(neg(at(8,X))) :- cell(X), neq(X,6).\\
initially(neg(at(9,X))) :- cell(X), neq(X,7).\\
initially(neg(free(X))) :- cell(X), neq(X,5).\\
\\
goal(at(X,X)) :- val(X).\\
goal(free(3)).\\
\end{codicenonum}
\end{minipage}
\end{center}
\vfill

\subsubsection{\BMV\ description of the instance I$_1$}
\label{MVtilepuzzle}

\begin{center}
\begin{minipage}[t]{1.0\textwidth}
\begin{codicenonum}{0}
cell(X) :- interval(X,1,9).\\
tile(X) :- interval(X,1,9), neq(X,3).\\
near(1,2). ~near(1,4).\\
\tab ...\%as for $\mathcal{B}$...\\
near(9,6). ~near(9,8).
\\
fluent(at(X),1,9) :- tile(X).\\
fluent(free,1,9).\\
\\
action(move(X,Y)) :- cell(Y), tile(X).\\
\\
executable(move(X,Y), [at(X) eq Z, free eq Y]) :-\\
\tab  tile(X), cell(Y), near(Z,Y).\\
causes(move(X,Y), at(X) eq Y, []) :- \\
\tab  tile(X), cell(Y).\\
causes(move(X,Y), free eq at(X)\^{}(-1), []) :-\\
\tab  tile(X), cell(Y).
\\
\\
initially(at(1) eq 1). initially(at(2) eq 3).\\
initially(at(4) eq 8). initially(at(5) eq 2).\\
initially(at(6) eq 9). initially(at(7) eq 4).\\
initially(at(8) eq 6). initially(at(9) eq 7).\\
initially(free eq 5).
\\
goal(at(X) eq X) :- tile(X).\\
goal(free eq 3).\\
\end{codicenonum}
\end{minipage}
\end{center}

\vfill

\newpage

\subsection{The Wolf-Goat-Cabbage Problem}

\subsubsection{$\mathcal{B}$ description of the  Wolf-goat-cabbage problem}
\label{Bgoat}

\begin{center}
\begin{minipage}[t]{1.0\textwidth}
\begin{codicenonum}{0}
obj(goat). \\
obj(cabbage). \\
obj(wolf). \\
obj(man).\\
side(left). side(right).\\
pos(X) :- side(X).  \\
pos(boat).\\
\\
fluent(is\_in(X,Y)) :- obj(X), pos(Y).\\
fluent(boat\_at(Y)) :- side(Y).\\
fluent(alive).\\
\\
action(sail(A,B)) :-  side(A), side(B), neq(A,B).\\
action(go\_aboard(A)) :- obj(A).\\
action(get\_off(A)) :- obj(A).\\
\\
executable(sail(A,B), [boat\_at(A), is\_in(man,boat)]) :- \\
\tab   side(A), side(B), neq(A,B).\\
executable(go\_aboard(A), [boat\_at(L), is\_in(A,L)]) :- \\
\tab   obj(A), side(L).\\
executable(get\_off(A), [is\_in(A,boat)]) :- \\
\tab   obj(A).\\
\\
causes(sail(A,B), boat\_at(B), []) :- \\
\tab   side(A), side(B), neq(A,B).\\
causes(go\_aboard(A), is\_in(A,boat), []) :-\\
\tab   obj(A).\\
causes(get\_off(A), is\_in(A,L), [boat\_at(L)]) :-\\
\tab   obj(A), side(L).\\
\\
caused([is\_in(Ogg,L1)], neg(is\_in(Ogg,L2))) :-\\
\tab      obj(Ogg), pos(L1), pos(L2), neq(L1,L2).  \\
caused([boat\_at(L1)], neg(boat\_at(L2))) :-\\
\tab      side(L1), side(L2), neq(L1,L2). \\
caused([is\_in(A,boat), is\_in(B,boat)], neg(alive)) :-\\
\tab      obj(A), obj(B), diff(A,B,man).\\
caused([is\_in(wolf,L), is\_in(goat,L), neg(is\_in(man,L))], neg(alive)) :-\\
\tab pos(L).\\
caused([is\_in(cabbage,L), is\_in(goat,L), neg(is\_in(man,L))], neg(alive)) :-\\
\tab pos(L).
\\
\\
initially(is\_in(A,left)) :- obj(A).\\
initially(alive).\\
initially(boat\_at(left)).\\
\\
goal(is\_in(A,right)) :- obj(A).\\
goal(alive).
\end{codicenonum}
\end{minipage}
\end{center}

\vfill

\subsubsection{\BMV\ description of the  Wolf-goat-cabbage problem}
\label{MVgoat}

\begin{center}
\begin{minipage}[t]{1.0\textwidth}
\begin{codicenonum}{0}
obj(goat). \\
obj(cabbage). \\
obj(wolf). \\
obj(man).\\
\% 0=boat,  1=on-the-left,  2=on-the-right:\\
\\
fluent(is\_in(X),0,2) :- obj(X).\\
fluent(boat\_at,1,2).\\
fluent(alive,0,1).\\
\\
action(sail).\\
action(go\_aboard(A)) :- obj(A).\\
action(get\_off(A)) :- obj(A).\\
\\
executable(sail, [is\_in(man) eq 0]).\\
executable(go\_aboard(A), [boat\_at eq is\_in(A)]) :- \\
\tab       obj(A).\\
executable(get\_off(A), [is\_in(A) eq 0]) :- \\
\tab       obj(A).\\
\\
causes(sail, boat\_at eq 1, [boat\_at eq 2]).\\
causes(sail, boat\_at eq 2, [boat\_at eq 1]).\\
causes(go\_aboard(A), is\_in(A) eq 0, []) :- \\
\tab obj(A).\\
causes(get\_off(A), is\_in(A) eq boat\_at\^{}(-1), []) :-\\
\tab obj(A).\\
\\
caused([is\_in(A) eq 0, is\_in(B) eq 0], alive eq 0) :-\\
\tab  obj(A), obj(B), diff(A,B,man).\\
caused([is\_in(wolf) eq is\_in(goat),
\\\tab\tab
is\_in(man) neq is\_in(wolf)], alive eq 0).\\
caused([is\_in(cabbage) eq is\_in(goat),
\\\tab\tab
 is\_in(man) neq is\_in(cabbage)], alive eq 0).
\\
\\
initially(is\_in(A) eq 1) :- obj(A).\\
initially(boat\_at eq 1).\\
initially(alive eq 1).\\
\\
goal(is\_in(A) eq 2) :- obj(A).\\
goal(alive eq 1).
\end{codicenonum}
\end{minipage}
\end{center}

\newpage

\subsection{The Gas-diffusion Problem: \BMV\ description of the instance A$_4$}
\label{GasDiffusion}

\begin{center}
\begin{minipage}[t]{1.0\textwidth}
\begin{codicenonum}{0}
room(N) :- interval(N,1,11).\\
gate(1,2).\\
gate(1,7).\\
gate(1,11).\\
gate(2,3).\\
gate(3,4).\\
gate(4,5).\\
gate(5,6).\\
gate(6,7).\\
gate(6,8).\\
gate(8,9).\\
gate(9,10).\\
gate(10,11).\\
\\
fluent(contains(N),0,255) :- room(N).\\
fluent(is\_open(X,Y),0,1) :- gate(X,Y).\\
\\
action(open(X,Y)) :- gate(X,Y).\\
action(close(X,Y)) :- gate(X,Y).\\
\\
executable(open(X,Y),L) :-\\
\tab    action(open(X,Y)),\\
\tab    findall((is\_open(X,Z) eq 0), gate(X,Z),L1),\\
\tab    findall((is\_open(Z,X) eq 0), gate(Z,X),L2),\\
\tab    findall((is\_open(Y,Z) eq 0), (gate(Y,Z),neq(Z,X)),L3),\\
\tab    findall((is\_open(Z,Y) eq 0), (gate(Z,Y),neq(Z,X)),L4),\\
\tab    append(L1,L2,La),append(L3,L4,Lb),append(La,Lb,L).\\
executable(close(X,Y), [is\_open(X,Y) eq 1]) :-\\
\tab     action(close(X,Y)).\\
\\
causes(open(X,Y),
\\ \tab\tab
contains(Y) eq (contains(X)\^{}(-1)+contains(Y)\^{}(-1))/2,
\\ \tab\tab
[]) :-\\
\tab action(open(X,Y)).\\
causes(open(X,Y),
\\ \tab\tab
contains(X) eq (contains(X)\^{}(-1)+contains(Y)\^{}(-1))/2,
\\ \tab\tab
[]) :-\\
\tab action(open(X,Y)).\\
causes(open(X,Y), is\_open(X,Y) eq 1, []) :-\\
\tab     action(open(X,Y)).\\
causes(close(X,Y), is\_open(X,Y) eq 0, []) :-\\
\tab     action(close(X,Y)).\\
\\
initially(is\_open(X,Y) eq 0) :- gate(X,Y).\\
initially(contains(10) eq 128). \\
initially(contains(3) eq 128).  \\
initially(contains(A) eq 0) :- room(A), diff(A,3,10).\\
\\
goal(contains(1) gt 50).
\end{codicenonum}
\end{minipage}
\end{center}

\end{document}